\newcommand{\maximize}{\operatorname*{maximize}}
\newcommand{\reals}{\mathbb R}
\newcommand{\argmax}{\mathop{\rm argmax}}
\newcommand{\eg}{{\it e.g.}}
\newcommand{\ie}{{\it i.e.}}
\theoremstyle{plain}
\newtheorem{theorem}{Theorem}[section]
\newtheorem{proposition}[theorem]{Proposition}
\newtheorem{lemma}[theorem]{Lemma}
\newtheorem{corollary}[theorem]{Corollary}
\theoremstyle{definition}
\newtheorem{definition}[theorem]{Definition}
\theoremstyle{remark}
\newcommand{\xvec}{\mathbf{x}}
\newcommand{\fvec}{\mathbf{f}}
\icmltitlerunning{Training Greedy Policy for Proposal Batch Selection in Expensive MOCO}
\begin{document}

\twocolumn[
\icmltitle{Training Greedy Policy for Proposal Batch Selection in \\Expensive Multi-Objective Combinatorial Optimization}



\icmlsetsymbol{equal}{*}

\begin{icmlauthorlist}
\icmlauthor{Deokjae Lee}{snu,nprc}
\icmlauthor{Hyun Oh Song}{snu,nprc}
\icmlauthor{Kyunghyun Cho}{nyu,gen}
\end{icmlauthorlist}

\icmlaffiliation{snu}{Seoul National University}
\icmlaffiliation{nprc}{Neural Processing Research Center}
\icmlaffiliation{nyu}{New York University}
\icmlaffiliation{gen}{Genentech}

\icmlcorrespondingauthor{Kyunghyun Cho}{kc119@nyu.edu}

\icmlkeywords{Machine Learning, ICML}

\vskip 0.3in
]



\printAffiliationsAndNotice{}  

\begin{abstract}
Active learning is increasingly adopted for expensive multi-objective combinatorial optimization problems, but it involves a challenging subset selection problem, 
optimizing the batch acquisition score that quantifies the goodness of a batch for evaluation.
Due to the excessively large search space of the subset selection problem,
prior methods optimize the batch acquisition on the latent space, which has discrepancies with the actual space, or optimize individual acquisition scores without considering the dependencies among candidates in a batch instead of directly optimizing the batch acquisition.
To manage the vast search space, a simple and effective approach is the greedy method, which decomposes the problem into smaller subproblems, yet it has difficulty in parallelization since each subproblem depends on the outcome from the previous ones.
To this end, we introduce a novel greedy-style subset selection algorithm that optimizes batch acquisition directly on the combinatorial space by sequential greedy sampling from the \emph{greedy policy}, specifically trained to address all greedy subproblems concurrently.
Notably, our experiments on the red fluorescent proteins design task show that our proposed method achieves the baseline performance in 1.69$\times$ fewer queries, demonstrating its efficiency.
\end{abstract}

\section{Introduction}
\label{introduction}
In various practical design fields, including biological sequence design, molecular graph optimization, and chip design, challenges are typically posed as expensive multi-objective combinatorial optimization (MOCO) problems. These problems focus on identifying designs, represented as discrete objects like strings or graphs, that optimize multiple attributes, often requiring substantial resources for accurate assessment \citep{multicriteria,lsbo,lambo,expensivemolecule,rlchip}.
Active learning frameworks, which iteratively propose a batch of candidates and learn from the attributes evaluated on those candidates, are increasingly employed in these fields due to their query efficiency, which is a critical component to handling expensive evaluation costs \citep{activelearning,bioseqgfn,lambo2,zhu2023sampleefficient,autodmp}.
In active learning, each round entails an internal problem of selecting a proposal batch of candidates for querying, formulated by cardinality-constrained subset selection problem.
This aims to identify the optimal batch $B\subset \mathcal{X}$ of size $n$ that maximizes the batch acquisition function $a:2^\mathcal{X}\to\reals$, which quantifies the goodness of a batch  considering interdependencies among candidates \citep{BatchBO,qei,qehvi}.
Unfortunately, the subset selection problem is challenging due to its prohibitively large search space of size $\mathcal{O}(|\mathcal{X}|^n)$, which increases exponentially as the batch size $n$ increases, while the combinatorial space $\mathcal{X}$ itself often has large size in practical scenarios \citep{moleculelarge}.

A natural approach to efficiently solve a subset selection problem is a greedy algorithm that sequentially constructs a subset by adding the optimal candidate that maximizes marginal gain in the objective set function, breaking down the problem into a sequence of substantially smaller, manageable subproblems of size $\mathcal{O}(|\mathcal{X}|)$ \citep{exactgreedybound}. 
Notably, the presence of monotone submodularity in prevalent batch acquisition functions such as JES, SM, EHVI, and NEHVI provides a theoretical performance guarantee for the greedy algorithm \citep{approxgreedy,sm}.
In this regard, active learning methods son continuous spaces already adopt greedy algorithms by solving each subproblem with first-order methods  \citep{jes,qehvi, qnehvi}.
However, for MOCO problems, applying greedy algorithms is even more challenging due to their discrete nature, which prohibits the use of first-order solvers.
Instead, previous works utilize latent space optimization (LSO) algorithms, which alternatively optimize the batch acquisition in the continuous latent space, which has discrepancies with the batch acquisition in the actual space, and obtain the batch by decoding the optimized latent 
 \citep{lsbo, lambo},
or construct a batch by sampling candidates of high individual acquisition scores $a(\{\mathbf{x}\})$s without considering the interdependencies among candidates explicitly \citep{jain2023multi}.

In this work, we propose a greedy-style subset selection algorithm for expensive MOCO problems.
One direct approach is sequentially applying any combinatorial optimization algorithm $n$ times to solve $n$ subproblems.
However, this sequential construction strategy has a drawback because each subproblem requires the results from preceding subproblems, hindering the parallelization of the overall process. 
To address this, we propose a novel subset selection approach based on reinforcement learning (RL) that trains only a single \emph{greedy policy}, a set-conditioned policy capable of addressing all subproblems concurrently, instead of sequentially training $n$ distinct policies for $n$ subproblems, thereby amortizing the burden of solving $n$ subproblems.
Our contributions can be summarized as follows:
\begin{itemize}\itemsep=2pt
    \item We propose a novel greedy-style subset selection algorithm that requires training of only a single greedy policy. Also, we suggest a novel training algorithm for obtaining the greedy policy, along with a justification for this approach.
    \item We extend the theoretical bounds of the approximated greedy algorithm to include both near-submodular functions and diversity functions, broadening its applicability. 
    \item Our method consistently outperforms baseline methods, constructing the batch with a higher batch acquisition in various benchmarks for active learning inner loops.  
    Significantly, our method attains the same Hypervolume indicator value as baseline methods but with 1.69$\times$ fewer queries in the multi-round active learning benchmark on red fluorescent proteins (RFP).
\end{itemize}

\begin{figure*}[ht]
    \centering
    \includegraphics[width=1.03\textwidth]{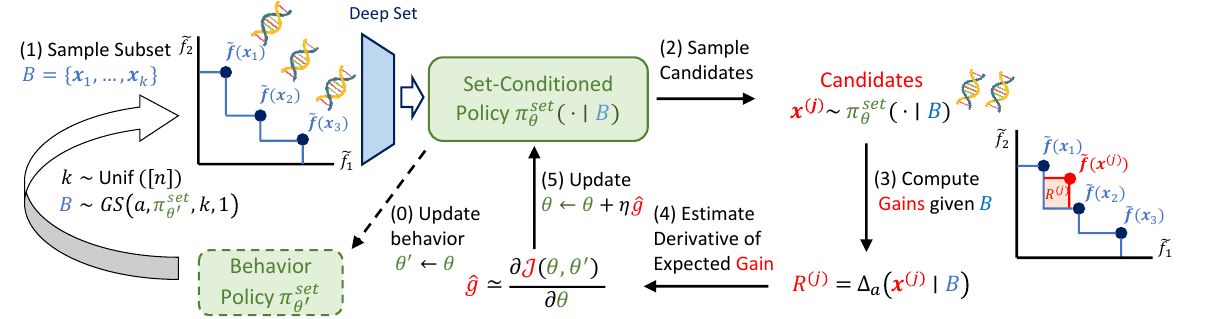}
    \caption{
    The visualization of our learning method (\Cref{sec:learning}). 
    At a high level, a set-conditioned policy $\pi_\theta^\text{set}$ is trained to generate candidates that maximize marginal gain $\Delta_a(\cdot\mid B)$ when conditioned by $B$, where $B$ is sampled by $\pi_\theta^\text{set}$ itself. 
    }
    \label{fig:first}
\end{figure*}

\section{Preliminaries}
\subsection{Expensive MOCO}
In this work, we consider an expensive MOCO problem, which aims to maximize an $m$-dimensional expensive, black-box oracle function $\mathbf{f}:\mathcal{X}\to\reals^m$ on the combinatorial space $\mathcal{X}$, \eg, the space of amino-acid sequences \citep{lsbo2}. 
This can be formulated as
\begin{equation}
    \maximize_{\xvec\in\mathcal{X}} ~\mathbf{f}(\xvec) \coloneqq (f_0(\xvec), \ldots, f_{m-1}(\xvec)),
    \label{eq:main}
\end{equation}
where we define the partial order between two vectors $\mathbf{f}(\xvec), \mathbf{f}(\xvec')\in\reals^m$ by the pointwise order, \ie, $\mathbf{f}(\xvec) \succeq \mathbf{f}(\xvec')$ if and only if $f_i(\xvec) \ge f_i(\xvec')$ for all $i\in[m]\coloneqq\{0,\ldots,m-1\}$ \citep{lattice}.
We say that a candidate $\xvec$ \emph{dominates} another candidate $\xvec'$, written by $\xvec \succ \xvec'$, if $\fvec(\xvec) \succeq \fvec(\xvec')$ and $\fvec(\xvec) \neq \fvec(\xvec')$. 
Using the notion of dominance, we introduce the set of the optimal solutions of \Cref{eq:main}, the \emph{Pareto set}, and its images, the \emph{Pareto frontier} \citep{moobasic}.
\begin{definition}(Pareto set and Pareto frontier)
\label{def:pareto}
    The Pareto set $\mathcal{P}^*\subset \mathcal{X}$ is the set of the optimal solutions that are not dominated by any other candidates in $\mathcal{X}$. Concretely,
    $\mathcal{P}^*:=\{\xvec\in\mathcal{X}\mid \nexists\xvec'\in\mathcal{X} \text{~s.t.~} \xvec'\succ \xvec\}$. Furthermore, the Pareto frontier is the images of the Pareto set under the oracle function $\mathbf{f}$, denoted as $\fvec(\mathcal{P}^*)\subset \reals^m$. 
\end{definition}
For the non-trivial scenarios, the Pareto set $\mathcal{P}^*$ has more than one solution due to trade-offs among oracle components $f_0, \ldots, f_{m-1}$ \citep{multicriteria}. 
Since we consider the scenario that a given black-box oracle function is expensive, the goal is to find a good approximated Pareto set $\tilde{\mathcal{P}}\subseteq\mathcal{X}$ in a limited query budget. 
To assess the quality of an approximation set $\tilde{\mathcal{P}}$, a commonly used metric is the \emph{Hypervolume indicator}, which measures the volume bounded by the reference point $\mathbf{r}_\text{ref}\in\reals^m$ and the images of the approximation set $\mathbf{f}(\tilde{\mathcal{P}})\subset \reals^m$, written by
$\mathbf{HV}(\mathbf{f}(\tilde{\mathcal{P}});\mathbf{r}_\text{ref}) \coloneqq \mathrm{Vol}(\bigcup_{\mathbf{y}\in \mathbf{f}(\tilde{\mathcal{P}})}\{\mathbf{v}\in\reals^m\mid \mathbf{r}_\text{ref}\preceq \mathbf{v} \preceq \mathbf{y}\})$ \citep{hvi}. 
In this work, we consider an approximation set with a higher Hypervolume indicator value as a better approximation of the Pareto set.
\subsection{Multi-Round Active Learning}
\begin{algorithm}[tb]
   \caption{Multi-Round Active Learning}
   \label{alg:al}
\begin{algorithmic}
   \STATE {\bfseries Input:} an oracle $\mathbf{f}$, a surrogate model $\tilde{\mathbf{f}}$, a batch size $n$, the number of rounds $N_r$, and the initial dataset $\mathcal{D}_0$.
   \FOR{$i=0$ {\bfseries to} $N_r-1$}
   \STATE Train a surrogate $\tilde{\mathbf{f}}$ using $\mathcal{D}_{i}$.
   \STATE Solve \Cref{eq:subsetselect} to select the batch $B$ (inner loop).
   \STATE Evaluate the batch $B$ with the oracle $\mathbf{f}$. 
   \STATE Update the dataset $\mathcal{D}_{i+1} \leftarrow  \mathcal{D}_{i} \cup \{(\xvec, \mathbf{f}(\xvec))\}_{\xvec\in B}$.
   \ENDFOR
\STATE \textbf{Return} $\mathrm{NonDominatedSort}(\mathcal{D}_{N_r})$
\end{algorithmic}
\end{algorithm}

\emph{Multi-round active learning} is a framework widely adopted for optimizing an expensive oracle function in a query-efficient manner \citep{activelearning}. 
This framework involves repeatedly suggesting candidates and learning from the oracle's feedback on those candidates \citep{bioseqgfn}. 
Due to the significant time costs of oracle queries, a common practice is to select a \emph{batch} of candidates for each round, enabling parallel evaluation by the oracle and thus improving overall efficiency \citep{qnehvi}. 
The \emph{batch Bayesian optimization} (BO) framework is a representative active learning approach equipped with a statistical surrogate model that estimates the oracle using the posterior distribution given previous oracle evaluations \citep{BatchBO}.
\Cref{alg:al} summarizes the overall active learning process. For each round, a cheaper surrogate model $\tilde{\mathbf{f}}(\cdot;\theta)$ is trained using data from previous steps to estimate the expensive oracle function $\mathbf{f}$.
Subsequently, the \emph{inner loop} chooses the proposal batch of $n$ candidates for querying by solving the following cardinality-constrained subset selection problem:
\begin{align}
\label{eq:subsetselect}
    &\maximize_{B\subset \mathcal{X}} ~a(B)\\
    &~\mathrm{subject~to} ~|B|\le n,\nonumber
\end{align}
where $a(\cdot;\tilde{f}):2^\mathcal{X}\to\reals$ is a \emph{batch acquisition function}, introduced in the subsequent section.
After finishing $N_r$ rounds, active learning returns non-dominated solutions among the evaluated dataset $\mathcal{D}_{N_r}$ as the approximation set to the Pareto set.

\subsection{Batch acquisition functions for MOCO}
Acquisition functions are designed to quantify the value of evaluating candidates throughout the active learning process, balancing the trade-offs between exploitation and exploration \citep{uncertainbellman}. 
Specifically, batch acquisition functions assess the value of evaluating a batch $B$, further considering the interdependencies within the batch \citep{qei,jes,qpoi,sm,botied}. 
In this work, we focus on batch acquisition functions based on \emph{Hypervolume improvement} (HVI), widely used in recent studies on expensive multi-objective optimization \citep{hvi,dgemo,lambo,jain2023multi,expensivepsl}.
Given the evaluated solution set $\tilde{\mathcal{P}}$ and the reference point $\mathbf{r}_\text{ref}$, HVI when evaluating a batch $B$ is defined as 
\begin{align*}
&\mathbf{HVI}(B;\tilde{\mathbf{f}},\tilde{\mathcal{P}}, \mathbf{r}_\text{ref})\\
&~~~~~~~\coloneqq 
\mathbf{HV}(\tilde{\mathbf{f}}(B)\cup\tilde{\mathbf{f}}(\tilde{\mathcal{P}});\mathbf{r}_\text{ref}) -
\mathbf{HV}(\tilde{\mathbf{f}}(\tilde{\mathcal{P}});\mathbf{r}_\text{ref}).
\end{align*}
HVI can be directly utilized as an acquisition function for a deterministic surrogate model. For a statistical surrogate model, variations of HVI such as EHVI, NEHVI, and UCB-HVI exist \citep{qehvi, qnehvi,ucbhvi}. 
EHVI and NEHVI compute the expected values of HVI under the assumptions of noiseless and noisy observations, respectively. 
UCB-HVI utilizes upper confidence bound (UCB) defined as $\tilde{\fvec}_\text{UCB}(\xvec;\beta)\coloneqq \mathrm{mean}(\tilde{\fvec}(\xvec)) + \beta ~ \mathrm{std}(\tilde{\fvec}(\xvec))\in\reals^m$ as a proxy vector of the oracle and computes $\mathbf{HVI}(B;\tilde{\fvec}_\text{UCB}, \tilde{\mathcal{P}}, \mathbf{r}_\text{ref})$.

\subsection{Reinforcement Learning for Single-Objective Combinatorial Optimization}
\label{main:rlintro}
Combinatorial objects can often be constructed by sequential actions. 
Deep RL-based methods for single-objective combinatorial optimization, aimed at solving $\max_{\xvec\in\mathcal{X}} R(\xvec)$, train a parameterized stochastic policy $\pi_\theta$. This policy, which determines actions at each state of object construction, seeks to maximize the objective function $R$ \citep{rlnas,rlchip}.
In this context, the resulting state of a trajectory $\tau$, sampled from a policy $\pi_\theta$, corresponds to a combinatorial object $\xvec\in\mathcal{X}$, and the return of the trajectory is given by $R(\xvec)$. For simplicity, we use the same notation for the trajectory and the resulting combinatorial object; therefore, for $\xvec\sim \pi_\theta$, $R(\xvec)$ represents the objective value of the corresponding combinatorial object $\xvec$, and $\pi_\theta(\xvec)$ represents the probability of the trajectory $\xvec$. Then, the given problem $\max_{\xvec\in\mathcal{X}} R(\xvec)$ can be translated to the following optimization problem:
$$\maximize_{\theta\in\Theta} ~\mathbb{E}_{\xvec\sim \pi_\theta}[R(\xvec)].$$
In this work, we consider the most basic algorithm with the REINFORCE update rule,
$\theta \leftarrow \theta + \eta R(\mathbf{x})\nabla_\theta\log\pi_\theta(\xvec)$,  as the optimization method \citep{reinforce}.
After training the policy, we can obtain the solution by sampling from the trained policy.

\section{Methods}
\begin{algorithm}[tb]
   \caption{Exact Greedy \citep{exactgreedybound}}
   \label{alg:exactgreedy}
\begin{algorithmic}
   \STATE {\bfseries Input:} a monotone set function $a:2^\mathcal{X}\to\reals$, and a cardinality constraint $n$.
   \STATE \textbf{Initialize} $B_0=\emptyset$.
   \FOR{$i=0$ {\bfseries to} $n-1$}
   \STATE $\xvec_i^* \leftarrow \argmax_{\xvec \in \mathcal{X}\setminus B_i } \Delta_a(\xvec\mid B_i)$.
   \STATE $B_{i+1} \leftarrow B_i \cup \{\xvec_i^*\}$.
   \ENDFOR
   \STATE \textbf{Return} $B_{n}$
\end{algorithmic}
\end{algorithm}

\label{sec:main}
The subset selection problem (\Cref{eq:subsetselect}) for each inner loop is
challenging due to its large search space of the size $\mathcal{O}(|\mathcal{X}|^{n})$, which increases exponentially as the cardinality constraint $n$ increases. 
To tackle these challenges, a simple yet effective approach is the greedy algorithm (\Cref{alg:exactgreedy}), which decomposes the subset selection problem into a series of smaller, more manageable subproblems, each of size $\mathcal{O}(|\mathcal{X}|)$ \citep{exactgreedybound}. 
Specifically, each greedy subproblem maximizes the \emph{marginal gain} $\Delta_a(\cdot\mid B):\mathcal{X}\to\reals$ of a set function $a:2^\mathcal{X}\to\mathbb{R}$, defined as $\Delta_a(\xvec\mid B) \coloneqq a(B\cup\{\xvec\}) - a(B)$  \citep{krause}.
However, the combinatorial space $\mathcal{X}$ often has a large size owing to its high-dimensional characteristics in practical applications, such as in biological sequence design \citep{lambo}.
Hence, finding the exact solution for each subproblem remains a formidable challenge. 
Addressing this, we focus on the approximated greedy algorithm (\Cref{alg:approxgreedy}), which utilizes a scalable maximization algorithm $\mathcal{A}$ such as sampling-based heuristics, genetic algorithms, or MDP-based methods like RL to approximate solutions within the large space of $\mathcal{X}$ \citep{approxgreedy,lazier,submodulardiverse,reinforce}. 

On the other hand, the sequential nature of the greedy-style algorithms potentially hinders efficiency, as each subproblem depends on the solutions of preceding steps, complicating the parallelization of the overall process.
To this end, we introduce a novel greedy-style subset selection method utilizing the greedy policy, a set-conditioned policy trained to handle all subproblems, with its novel training algorithm, thereby amortizing the burden of solving $n$ subproblems.
Furthermore, we elucidate the theoretical bound of the approximated greedy algorithm under various conditions of the set function, as encountered in realistic scenarios.

\begin{algorithm}[tb]
   \caption{Approx. Greedy \citep{approxgreedy}}
   \label{alg:approxgreedy}
\begin{algorithmic}
   \STATE {\bfseries Input:} a monotone set function $a:2^\mathcal{X}\to\reals$, a maximization algorithm $\mathcal{A}$, and a cardinality constraint $n$.
   \STATE \textbf{Initialize} $B_0=\emptyset$.
   \FOR{$i=0$ {\bfseries to} $n-1$}
   \STATE $\xvec_i \leftarrow$ solution by $\mathcal{A}$ when maximizing $\Delta_a(\cdot\mid B_i))$.
   \STATE $B_{i+1} \leftarrow B_i \cup \{ \xvec_i \}$.
   \ENDFOR
   \STATE \textbf{Return} $B_n$
\end{algorithmic}
\end{algorithm}
\begin{algorithm}[tb]
   \caption{Greedy Sample $\mathrm{GS}(a,\pi_\theta^\text{set},k,l)$}
   \label{alg:samplesetpolicy}
\begin{algorithmic}
   \STATE {\bfseries Input:} a monotone set function $a$, a set-conditioned policy $\pi_\theta^\text{set}$, a set size $k$, the number of samples $l$.
   \STATE \textbf{Initialize} $B_0=\emptyset$.
   \FOR{$i=0$ {\bfseries to} $k-1$}
   \STATE Sample $l$ candidates $\xvec_{i,0},\ldots,\xvec_{i,l-1}\sim \pi_\theta^\text{set}(\cdot\mid B_i)$. 
   \STATE $\text{idx} \leftarrow \argmax_{j\in[l]} \Delta_a(\xvec_{i,j}\mid B_i)$. 
   \STATE $B_{i+1} \leftarrow B_i \cup \{ \xvec_{\text{i,idx}} \}$.
   \ENDFOR
   \STATE \textbf{Return} $B_k$
\end{algorithmic}
\end{algorithm}
\subsection{Learning Greedy Policy}

\label{sec:learning}
To begin, we first introduce the concepts of a set-conditioned policy and a greedy sampling distribution. 
A \emph{set-conditioned policy} $\pi^\text{set}_\theta$ is a policy that samples a candidate $\xvec\sim\pi_\theta^{\text{set}}(\cdot\mid B)$ conditioned on any subset $B$. 
For any subproblem with a subset $B$, which maximizes $\Delta_a(\cdot\mid B)$, we may utilize $\pi^\text{set}_\theta(\cdot\mid B)$ as a proposal distribution to sample the solution.  
In this context, we define a \emph{greedy sampling} distribution $\mathrm{GS}(a,\pi_\theta^\text{set},k,l)$ on $k$-subsets of $\mathcal{X}$ as in \Cref{alg:samplesetpolicy}.

Instead of regarding greedy subproblems as individual problems to solve, we amortize all subproblems into a single problem of training a set-conditioned policy.
Specifically, the goal is to train a set-conditioned policy to be the \emph{greedy policy} $\pi_{\theta^*}^\text{set}$ that can exactly solve any subproblems encountered during the greedy sampling of $\pi_{\theta^*}^\text{set}$ itself.
To formally define the greedy policy, we first define the expected gain.
\begin{definition} (Expected Gain)
\label{def:expgain}
We define $\mathcal{J}(\theta,\theta')$ as the expected gain by $\pi^\text{set}_\theta$ given behavior policy $\pi^\text{set}_{\theta'}$ as:

\vspace{-1em}
{\small
    \begin{equation*} 
        \mathcal{J}(\theta,\theta')\coloneqq\frac{1}{n}\sum_{k=0}^{n-1}\mathbb{E}_{B\sim \mathrm{GS}(a,\pi_{\theta'}^\text{set},k,1)}[\mathbb{E}_{\xvec\sim\pi_\theta^\text{set}(\cdot\mid B)}[\Delta_a(\xvec\mid B)]].
    \end{equation*}}
\end{definition}
In short, the expected gain is the expected value of the marginal gain $\Delta_a(\xvec\mid B)$ where $\xvec \sim \pi_\theta^\text{set}(\cdot\mid B)$ and $B$ is any subset encountered during the greedy sampling of $\pi_{\theta'}^\text{set}$.
Using \Cref{def:expgain}, we formally define the greedy policy.
\begin{definition} (Greedy Policy)
\label{def:greedypolicy}
    A set conditioned policy $\pi_{\theta^*}^\text{set}$ is the greedy policy if $\pi_{\theta^*}^\text{set}$ is a maximizer of the expected gain given itself, \ie,
$\theta^* = \argmax_{\theta\in\Theta}\mathcal{J}(\theta,\theta^*).$
\end{definition}
To explain that the greedy policy defined in \Cref{def:greedypolicy} satisfies the desired property, we introduce \Cref{lem:greedypolicy}.
\begin{lemma}
\label{lem:greedypolicy}
$\mathrm{GS}(a,\pi_{\theta^*}^\text{set},n,l)$ samples exact greedy solutions almost surely if $\pi_{\theta^*}^\text{set}$ is the greedy policy.
\end{lemma}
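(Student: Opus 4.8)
The plan is to exploit the fact that in the expected gain $\mathcal{J}(\theta,\theta^*)$ of \Cref{def:expgain}, the \emph{behavior} policy $\pi_{\theta^*}^\text{set}$ generating the conditioning set $B$ is held fixed, so that only the inner expectation depends on the argument $\theta$. Writing $p_k(B)\coloneqq\Prob[\mathrm{GS}(a,\pi_{\theta^*}^\text{set},k,1)=B]$, we have
\[
\mathcal{J}(\theta,\theta^*)=\frac1n\sum_{k=0}^{n-1}\sum_B p_k(B)\,\mathbb{E}_{\xvec\sim\pi_\theta^\text{set}(\cdot\mid B)}[\Delta_a(\xvec\mid B)],
\]
so the optimization over $\theta$ decouples across conditioning sets $B$. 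The first step is therefore the pointwise bound $\mathbb{E}_{\xvec\sim\pi_\theta^\text{set}(\cdot\mid B)}[\Delta_a(\xvec\mid B)]\le\Delta_a^*(B)$, where $\Delta_a^*(B)\coloneqq\max_{\xvec\in\mathcal{X}}\Delta_a(\xvec\mid B)$, with equality exactly when $\pi_\theta^\text{set}(\cdot\mid B)$ is supported on the maximizer set $M(B)\coloneqq\argmax_{\xvec\in\mathcal{X}}\Delta_a(\xvec\mid B)$.

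Next I would convert the aggregate optimality of the greedy policy into pointwise optimality. Summing the bound yields $\mathcal{J}(\theta,\theta^*)\le\frac1n\sum_k\sum_B p_k(B)\,\Delta_a^*(B)$, a quantity independent of $\theta$. Assuming the policy class is expressive enough to realize, for each reachable $B$, a conditional distribution supported on $M(B)$ (so that the bound is attained), the defining property $\theta^*\in\argmax_\theta\mathcal{J}(\theta,\theta^*)$ from \Cref{def:greedypolicy} forces $\mathcal{J}(\theta^*,\theta^*)$ to equal this upper bound. Since each summand is bounded above by its $\Delta_a^*$-term, equality of the totals forces equality in every summand with $p_k(B)>0$; and because $\Delta_a(\xvec\mid B)\le\Delta_a^*(B)$ almost surely, $\mathbb{E}_{\xvec\sim\pi_{\theta^*}^\text{set}(\cdot\mid B)}[\Delta_a(\xvec\mid B)]=\Delta_a^*(B)$ can hold only if $\pi_{\theta^*}^\text{set}(\cdot\mid B)$ is supported on $M(B)$. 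Thus for every $B$ reachable by the $l=1$ greedy sampling of $\pi_{\theta^*}^\text{set}$, the policy samples only exact maximizers of the marginal gain.

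Finally I would propagate this support property through a run of $\mathrm{GS}(a,\pi_{\theta^*}^\text{set},n,l)$ by induction on the step $i$, with the invariant that the intermediate set $B_i$ satisfies $p_i(B_i)>0$. The base case $B_0=\emptyset$ is immediate. For the inductive step, reachability of $B_i$ activates the support property, so each sample $\xvec_{i,0},\dots,\xvec_{i,l-1}$ in \Cref{alg:samplesetpolicy} lies in $M(B_i)$; hence their marginal-gain argmax $\xvec_{i,\mathrm{idx}}$ also lies in $M(B_i)$ and, having been sampled, has positive conditional probability, which keeps $B_{i+1}$ reachable. Monotonicity of $a$ closes the remaining gap between the argmax over $\mathcal{X}$ used here and the argmax over $\mathcal{X}\setminus B_i$ in \Cref{alg:exactgreedy}: any $\xvec\in B_i$ has $\Delta_a(\xvec\mid B_i)=0$, the minimum possible gain, so whenever some candidate has strictly positive gain the selected maximizer lies in $\mathcal{X}\setminus B_i$ and coincides with a legal exact-greedy choice. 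Matching this at every step shows the sampled batch is an exact greedy solution almost surely.

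I expect the main obstacle to be the second step: rigorously upgrading the single aggregate equality $\mathcal{J}(\theta^*,\theta^*)=\max_\theta\mathcal{J}(\theta,\theta^*)$ into the per-set support conclusion. This needs (i) the realizability assumption that $\Theta$ can express an $M(B)$-supported policy so the upper bound is genuinely attained, and (ii) a careful argument that equality of expectations under an almost-sure pointwise upper bound forces concentration on $M(B)$ for every positive-probability $B$. The degenerate case in which all marginal gains vanish (so $M(B)$ meets $B_i$) must be handled separately, which is precisely where monotonicity of $a$ is essential.
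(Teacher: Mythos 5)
Your proposal is correct, but it proves the lemma by a direct argument where the paper argues by contradiction via a local perturbation. The paper's proof assumes that, with positive probability, $\mathrm{GS}(a,\pi_{\theta^*}^\text{set},n,l)$ produces a trajectory $\xvec_0,\ldots,\xvec_{n-1}$ that is not an exact greedy solution, observes that the same trajectory then has positive probability $c_1$ under the $l=1$ sampler, picks a step $t$ at which $\xvec_t\notin\argmax_{\xvec'}\Delta_a(\xvec'\mid\{\xvec_0,\ldots,\xvec_{t-1}\})$, and constructs a perturbed policy $\pi_\phi^\text{set}$ that agrees with $\pi_{\theta^*}^\text{set}$ on every conditioning set except $\{\xvec_0,\ldots,\xvec_{t-1}\}$, where it puts all mass on a maximizer; a direct computation gives $\mathcal{J}(\phi,\theta^*)-\mathcal{J}(\theta^*,\theta^*)\ge c_1c_2/n>0$, contradicting \Cref{def:greedypolicy}. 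You instead bound $\mathcal{J}(\cdot,\theta^*)$ above by the decoupled sum $\frac{1}{n}\sum_k\sum_B p_k(B)\,\Delta_a^*(B)$, use expressiveness of the policy class to show the bound is attained, deduce from per-summand equality that $\pi_{\theta^*}^\text{set}(\cdot\mid B)$ is supported on the argmax set for every reachable $B$, and then push that support property through the $l$-sample run by induction. Both routes rest on exactly the same two pillars (the decomposition of $\mathcal{J}$ over conditioning sets and the representability assumption, which the paper states explicitly at the start of its appendix proof section), so they are equivalent in strength; your version isolates a cleaner intermediate statement (pointwise optimality of $\pi_{\theta^*}^\text{set}$ at every reachable conditioning set), while the paper's perturbation is more economical, needing expressiveness only for a single one-set modification and yielding an explicit quantitative gap. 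Two of the obstacles you flag dissolve on inspection: equality of an expectation with an almost-sure pointwise upper bound forces concentration on the argmax set trivially here because $\mathcal{X}$ is finite, and the $\mathcal{X}$-versus-$\mathcal{X}\setminus B_i$ discrepancy never arises against the paper's own formalization, since \Cref{def:exactsol} defines exact greedy solutions with the argmax taken over all of $\mathcal{X}$ (indeed the paper's proof of this lemma never invokes monotonicity); your use of monotonicity there is harmless but unnecessary.
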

Hence, the greedy policy is able to address all greedy subproblems and replicate the exact greedy algorithm.

From now on, we introduce the training method to achieve the greedy policy. To start, we define the partial derivative step, a fundamental component of our update rules.
\begin{definition}
\label{def:partial}
    Let $F:\mathcal{U}\times\mathcal{U}\to\reals$ be any continuously differentiable function. We define the \emph{partial derivative step} on $u\in\mathcal{U}$ given any behavior  $u'\in\mathcal{U}$ as $\mathrm{PD}_F(u;u',\eta) \coloneqq u + \eta \frac{\partial}{\partial u} F(u,u')$.
\end{definition}
Using the partial derivative step defined in \Cref{def:partial}, we introduce two update rules along with their validity. 
Similar to the assumptions such as strong convexity or smoothness used to ensure the convergence of the gradient descent algorithm \citep{bubeck}, we assume several conditions (see \Cref{sec:convergeproof}) and demonstrate the convergence of the update rules to the greedy policy under these assumptions.
\begin{theorem}
\label{thm:upd}
    Let $F:\mathcal{U}\times \mathcal{U}\to\reals$ be a function with some nice conditions. Also, assume that there exists $u^*\in\mathcal{U}$ such that $u^*=\argmax_{u\in\mathcal{U}}F(u,u^*)$. 
    Then, by iterating the update rule $u\leftarrow \mathrm{PD}_F(u;u'=u,\eta)$, $u$ converges to $u^*$ for a small $\eta>0$. Furthermore, for any  $N_t>0$, by iterating the update rule with $N_t$ partial derivative steps with a fixed behavior, \ie, $u\leftarrow \left(\mathrm{PD}_F(\cdot ; u'=u,\eta)\right)^{N_t}\!(u)$, $u$ converges to $u^*$ for a small $\eta>0$.
\end{theorem}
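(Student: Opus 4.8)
The plan is to treat both update rules as fixed-point iterations $u \leftarrow \Psi_\eta(u)$ sharing the common fixed point $u^*$, and to invoke Ostrowski's local convergence theorem: if $\Psi_\eta$ is $C^1$ near $u^*$ and the spectral radius $\rho(D\Psi_\eta(u^*)) < 1$, then $u^*$ is a point of attraction. The ``nice conditions'' I will read off are those making $F(\cdot,u')$ uniformly $\mu$-strongly concave in its first slot with a bounded cross-derivative, i.e. $A := \nabla_{11}F(u^*,u^*) \preceq -\mu I$ and $\norm{B} \le L$ with $B := \nabla_{12}F(u^*,u^*)$ and $\mu > L$. The first observation is that the optimality condition $u^* = \argmax_u F(u,u^*)$ forces $\nabla_1 F(u^*,u^*) = 0$, so a single partial-derivative step fixes $u^*$; hence $u^*$ is a fixed point of both iterations and everything reduces to bounding the Jacobian there.

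For the first rule, $T_\eta(u) = u + \eta\,\nabla_1 F(u,u)$, so differentiating through both slots by the chain rule gives $DT_\eta(u^*) = I + \eta(A + B)$. For the second rule I would linearize the inner loop: writing $v_0 = u$, $v_{j+1} = v_j + \eta\,\nabla_1 F(v_j, u)$, and setting $\delta_j = v_j - u^*$, $\delta = u - u^*$, the linearization around $u^*$ reads $\delta_{j+1} = (I + \eta A)\delta_j + \eta B\delta$ with $\delta_0 = \delta$. Solving this recursion with $M := I + \eta A$ and telescoping $\eta\sum_{i=0}^{N_t-1}M^i = (M^{N_t}-I)A^{-1}$ yields the closed form
\[
D\Psi_\eta(u^*) = M^{N_t} + (M^{N_t}-I)A^{-1}B .
\]
Expanding $M^{N_t} = I + N_t\eta A + O(\eta^2)$ then gives $D\Psi_\eta(u^*) = I + N_t\eta(A+B) + O(\eta^2)$, the same structure as the first rule up to the scalar $N_t$. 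This geometric-series bookkeeping is the main technical step.

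It then remains to show $A+B$ is Hurwitz. For any unit eigenvector $w$ of $A+B$ with eigenvalue $\lambda$, I would use $\mathrm{Re}(\lambda) = \mathrm{Re}(w^*(A+B)w) = w^*Aw + \mathrm{Re}(w^*Bw)$; since $A$ is real symmetric, $w^*Aw$ is real and at most $-\mu$, while $|w^*Bw| \le \norm{B} \le L$, so $\mathrm{Re}(\lambda) \le -\mu + L < 0$. Consequently each eigenvalue of the Jacobian $I + c\eta(A+B) + O(\eta^2)$ (with $c \in \{1,N_t\}$) obeys $|1 + c\eta\lambda + O(\eta^2)|^2 = 1 + 2c\eta\,\mathrm{Re}(\lambda) + O(\eta^2)$, which drops below $1$ once $\eta$ is small. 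Thus $\rho(D\Psi_\eta(u^*)) < 1$ and Ostrowski's theorem delivers local convergence to $u^*$ for both rules.

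The main obstacle I anticipate is twofold. First, the exact Jacobian of the $N_t$-fold inner loop must be computed honestly: a naive best-response/triangle-inequality bound is too lossy and fails to give a contraction at small $\eta$, so the cancellation captured by the telescoped geometric series is what makes the leading term collapse to $N_t\eta(A+B)$. Second, Ostrowski's theorem yields only local convergence, so a genuinely global statement requires the strong-concavity and cross-Lipschitz bounds to hold uniformly rather than just at $u^*$; in that uniform regime one can alternatively run a Lyapunov argument with $\Phi(u) = \tfrac12\norm{u-u^*}^2$, using $\langle \nabla_1 F(u,u),\, u-u^*\rangle \le -(\mu-L)\norm{u-u^*}^2$ to get global linear convergence for the first rule, and I would then check whether the same uniform bounds propagate through the inner loop to globalize the second rule as well.
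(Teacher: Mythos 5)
Your proposal takes a genuinely different route from the paper. The paper (conditions ($*$) in \Cref{sec:convergeproof}) proves a uniform, global estimate: the single partial-derivative step satisfies $\|\mathrm{PD}_F(u_2;u_2',\eta)-\mathrm{PD}_F(u_1;u_1',\eta)\|_2\le\beta\|u_2'-u_1'\|_2$ whenever $\|u_2-u_1\|_2\le\|u_2'-u_1'\|_2$ (\Cref{prop:contract}, with $\beta=1+\eta(M+M')<1$ in the paper's notation), and then shows by a short induction (\Cref{cor:contract1,cor:contract2}) that both update maps are \emph{global} $\beta$-contractions fixing $u^*$, so Banach's fixed-point theorem gives geometric convergence from any initialization. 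You instead linearize at $u^*$ and invoke Ostrowski's theorem. Your Jacobian algebra is correct: the recursion $J_{j+1}=MJ_j+\eta B$ with $M=I+\eta A$ does telescope to $J_{N_t}=M^{N_t}+(M^{N_t}-I)A^{-1}B=I+N_t\eta(A+B)+O(\eta^2)$, and the Hurwitz argument for $A+B$ under $\mu>L$ (your notation) is sound. What this buys is a sharper local picture: both rules share the limiting dynamics governed by $A+B$, with the $N_t$-step rule simply moving $N_t$ times faster per outer iteration.

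Two things, however, keep this from being a complete proof of what the paper establishes. First, Ostrowski yields only \emph{local} convergence, whereas the paper's conditions hold uniformly on $\mathcal{U}$ and its conclusion is global; you recognize this, and your Lyapunov sketch does globalize the first rule, but for the $N_t$-step rule you explicitly leave the globalization ``to check.'' That is precisely the nontrivial step, and it is exactly what the paper's induction closes: the inner iterates with frozen behavior satisfy $\|u_2^{(i)}-u_1^{(i)}\|_2\le\beta\|u_2-u_1\|_2\le\|u_2-u_1\|_2$, so the hypothesis of \Cref{prop:contract} keeps applying at every inner step and the composed map is again a $\beta$-contraction. Second, your stated motivation for the telescoping --- that a naive triangle-inequality bound is ``too lossy and fails to give a contraction at small $\eta$'' --- is incorrect. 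Under operator-norm hypotheses ($A\preceq-\mu I$ uniformly, $\|B\|\le L<\mu$), the crude bound $\|J_{N_t}\|\le\|M\|^{N_t}+\eta\sum_{i=0}^{N_t-1}\|M\|^{i}\|B\|\le(1-L/\mu)(1-\eta\mu)^{N_t}+L/\mu<1$ already gives a uniform contraction for every sufficiently small $\eta$; the paper's proof is exactly such a triangle-inequality argument, and it succeeds globally. So your telescoped Jacobian is an elegant refinement of the local analysis rather than a necessity, and to prove the theorem in full strength you must either push the Lyapunov argument through the inner loop or fall back on the paper's uniform-contraction induction.
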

\begin{proof}
    Please refer to \Cref{sec:convergeproof} for the detailed statements and proofs.
\end{proof}
For clarity, the update rule with $N_t$ partial derivative steps in \Cref{thm:upd} on $u$ returns $u_n$ where
$$u_{i+1} = \mathrm{PD}_F(u_i ; u'=u,\eta)$$
for $i=0, \ldots, n-1$, and $u_0=u$.
Note that in practical scenarios, $\mathcal{J}$ may not fulfill these assumptions. However, our method still demonstrates empirical effectiveness in the practical scenarios, as shown in  \Cref{sec:results}. 

Due to the infeasible expectation in $\mathcal{J}$, we apply the update rule with an MC estimator $\hat{g}$ of $\frac{\partial}{\partial \theta} \mathcal{J}(\theta,\theta')$ which can be computed as in \Cref{prop:setgrad}.
\begin{proposition}
\label{prop:setgrad} (Policy gradient for $\mathcal{J}$)
For any baseline set function $b:2^\mathcal{X}\to\reals$ and the number of episodes $N_e$,

\vspace{-1.5em}
{\small
$$\hat{g}=\frac{1}{N_e}\sum_{j=0}^{N_e-1}(\Delta_a(\xvec^{(j)}\mid B) - b(B)) \nabla_\theta \log\pi^\text{set}_\theta(\xvec^{(j)}\mid B), $$}

\vspace{-1em}
is an unbiased MC estimator of  the partial derivative $\frac{\partial}{\partial \theta} \mathcal{J}(\theta,\theta')$ where $B\sim \frac{1}{n}\sum_{k=0}^{n-1}\mathrm{GS}(a,\pi_{\theta'}^\text{set},k,1)$ and $\xvec^{(j)}\sim \pi_\theta^\text{set}(\cdot\mid B)$ for all $j\in [N_e]$.
\end{proposition}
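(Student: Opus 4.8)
The plan is to recognize this as a conditional REINFORCE (score-function) identity and verify unbiasedness by peeling the estimator apart in three stages: first isolate the dependence on $\theta$, then apply the log-derivative trick to the inner expectation, and finally handle the baseline and the Monte Carlo averaging. The crucial structural observation, which I would state at the very start, is that in $\mathcal{J}(\theta,\theta')$ the law of the conditioning set $B$ is governed entirely by the behavior policy $\pi_{\theta'}^\text{set}$ and not by $\theta$. Indeed $B\sim\mathrm{GS}(a,\pi_{\theta'}^\text{set},k,1)$ uses $l=1$, so the internal $\argmax$ in \Cref{alg:samplesetpolicy} is vacuous and $B$ is obtained by sequentially sampling from $\pi_{\theta'}^\text{set}$ alone. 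Consequently, $\frac{\partial}{\partial\theta}$ commutes with the finite sum $\frac1n\sum_{k=0}^{n-1}$ and with the outer expectation over $B$, reducing the problem to differentiating the inner quantity $\mathbb{E}_{\xvec\sim\pi_\theta^\text{set}(\cdot\mid B)}[\Delta_a(\xvec\mid B)]$ for each fixed $B$.

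For the inner expectation I would invoke the standard score-function identity: since $\Delta_a(\xvec\mid B)$ does not depend on $\theta$, writing the expectation as a sum over the (finite) set of trajectories and applying $\nabla_\theta \pi_\theta^\text{set}(\xvec\mid B)=\pi_\theta^\text{set}(\xvec\mid B)\,\nabla_\theta\log\pi_\theta^\text{set}(\xvec\mid B)$ gives
$$\frac{\partial}{\partial\theta}\mathbb{E}_{\xvec\sim\pi_\theta^\text{set}(\cdot\mid B)}[\Delta_a(\xvec\mid B)]=\mathbb{E}_{\xvec\sim\pi_\theta^\text{set}(\cdot\mid B)}\!\left[\Delta_a(\xvec\mid B)\,\nabla_\theta\log\pi_\theta^\text{set}(\xvec\mid B)\right].$$
Next I would show the baseline $b(B)$ is admissible: because $b(B)$ is constant in $\xvec$ given $B$, the term it contributes has expectation $b(B)\,\mathbb{E}_{\xvec\sim\pi_\theta^\text{set}(\cdot\mid B)}[\nabla_\theta\log\pi_\theta^\text{set}(\xvec\mid B)]$, and the factored expectation vanishes since $\mathbb{E}_{\xvec}[\nabla_\theta\log\pi_\theta^\text{set}(\xvec\mid B)]=\nabla_\theta\sum_{\xvec}\pi_\theta^\text{set}(\xvec\mid B)=\nabla_\theta 1=0$. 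Hence subtracting $b(B)$ does not change the expectation.

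Finally I would assemble the pieces. Taking the expectation of $\hat g$ over the iid draws $\xvec^{(0)},\ldots,\xvec^{(N_e-1)}\sim\pi_\theta^\text{set}(\cdot\mid B)$ collapses the $\frac{1}{N_e}\sum_j$ average to the common per-sample expectation, and the outer draw $B\sim\frac1n\sum_{k=0}^{n-1}\mathrm{GS}(a,\pi_{\theta'}^\text{set},k,1)$ reproduces exactly the mixture and the factor $\frac1n\sum_k$ appearing in \Cref{def:expgain}. Combining this with the two previous paragraphs yields $\mathbb{E}[\hat g]=\frac{\partial}{\partial\theta}\mathcal{J}(\theta,\theta')$, which is the claim. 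The only point requiring genuine care is the interchange of $\nabla_\theta$ with the summations/expectations (both in commuting the derivative past the $B$-expectation and in the log-derivative step); I expect this to be the main technical hurdle, but it is benign here because $\mathcal{X}$ and the induced trajectory space are finite, so all sums are finite and dominated-convergence/differentiation-under-the-sum is immediate. I would flag the finiteness (or, more generally, a suitable integrability and smoothness of $\pi_\theta^\text{set}$ in $\theta$) explicitly as the assumption under which the interchange is licensed.
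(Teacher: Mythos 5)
Your proof is correct and follows essentially the same route as the paper's: both exploit that the law of $B$ under the mixture $\frac{1}{n}\sum_{k=0}^{n-1}\mathrm{GS}(a,\pi_{\theta'}^\text{set},k,1)$ depends only on $\theta'$ and not $\theta$, apply the log-derivative trick to the inner expectation over $\xvec\sim\pi_\theta^\text{set}(\cdot\mid B)$, and eliminate the baseline term via $\sum_{\xvec}\nabla_\theta\pi_\theta^\text{set}(\xvec\mid B)=\nabla_\theta 1=0$. The only cosmetic difference is that you explicitly flag the derivative--expectation interchange (benign here by finiteness of $\mathcal{X}$), which the paper carries out silently by writing the expectations as finite sums.
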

\Cref{alg:trainsetpolicy} outlines the overall process of our training algorithm.
Since every $N_t$ step shares the behavior policy, we can further speed-up the subset sampling by concurrently sampling $N_t$ subsets parallelly.
For the stable training, we use baseline techniques to normalize the returns. 
Finally, \Cref{fig:first} summarizes our proposed learning method.

\begin{algorithm}[tb]
   \caption{Training Set-Conditioned Policy}
   \label{alg:trainsetpolicy}
\begin{algorithmic}
   \STATE {\bfseries Input:} a monotone set function $a:2^\mathcal{X}\to \reals$, a cardinality constraint $n$, the number of updates $N_u$, the number of episodes per update $N_e$, a learning rate $\eta$, and a period of the behavior policy update $N_t$.
   \STATE \textbf{Initialize} $\theta$ randomly.
   \FOR{$i=0$ {\bfseries to} $N_u-1$}
   \IF{$i~\%~N_t = 0$}
    \STATE Update the behavior policy $\pi_{\theta'}^\text{set}\leftarrow \pi_\theta^\text{set}$.
   \ENDIF
   \STATE Sample a set size $k \sim \mathrm{Unif}([n])$.
   \STATE Sample a subset $B \sim \mathrm{GS}(a, \pi_{\theta'}^\text{set}, k, 1)$.
   \STATE Sample $N_e$ candidates $\xvec^{(1)}, \ldots, \xvec^{(N_e)}\sim \pi_\theta^\text{set}(\cdot\mid B)$.
   \FOR{$j=0$ {\bfseries to} $N_e-1$}
       \STATE Compute returns $r_j\leftarrow \Delta_a(\xvec^{(j)}\mid B)$.
       \STATE Normalize $\hat{r}_j \leftarrow (r_j - \mathrm{mean}(r))/(\mathrm{std}(r)+\epsilon)$.
   \ENDFOR
   \STATE Update $\theta \leftarrow \theta + \eta \sum_{j=0}^{N_e-1} \nabla_\theta [\hat{r}_j \log \pi_\theta^\text{set}(\xvec^{(j)}\mid B)]$.
   \ENDFOR
\end{algorithmic}
\end{algorithm}

\subsection{Architecture for set-conditioned policy} 
In this section, we explain the architecture of a set-conditioned policy.
Our architecture is inspired by preference-conditioned policies \citep{lin2022pareto,jain2023multi,zhu2023sampleefficient}.
The architecture of a preference-conditioned policy $\pi^\text{pc}_\theta$ can be summarized as follows:
$$\pi_\theta^\text{pc}(\mathbf{s} \mid \mathbf{v}) = \mathrm{Dec}(\mathrm{Enc}_\text{pref}(\mathbf{v})\oplus \mathrm{Enc}_\text{state}(\mathbf{s})),$$
where $\mathbf{v}$ is a preference vector, $\mathbf{s}$ is a given state, and the decoder $\mathrm{Dec}$ outputs a probability vector on the action space. 
Instead of the preference encoder $\mathrm{Enc}_\text{pref}$, we propose an architecture using a set encoder $\mathrm{Enc}_\text{set}$, \ie, 
\begin{equation}
\label{eq:setarch}
    \pi_\theta^\text{set}(\mathbf{s} \mid B) = \mathrm{Dec}(\mathrm{Enc}_\text{set}(B)\oplus \mathrm{Enc}_\text{state}(\mathbf{s})).\nonumber
\end{equation}
For the set encoder, we utilize the \emph{deep set} architecture, which is designed to encode a set of continuous vectors into a single embedding vector \citep{deepset}.
We extract $\mathrm{feat}(\xvec)$, the lower-dimensional continuous features to guide the policy, for each object $\xvec\in B$, and utilize the set of extracted features as input to the deep set encoder, \ie,
$\mathrm{Enc}_\text{set}(B)\coloneqq \mathrm{DeepSet}(\{\mathrm{feat}(\xvec)\mid \xvec\in B\}).$
It appears necessary to train an auxiliary feature extractor for the combinatorial space $\mathcal{X}$, but we already have a straightforward candidate for $\mathrm{feat}$, especially when using the deterministic surrogate model $\tilde{\fvec}$ with HVI, thanks to \Cref{lem:easy}.
\begin{lemma}
\label{lem:easy}
    For any $B,B'\subset\mathcal{X}$ satisfying $\tilde{f}(B)=\tilde{f}(B')$, $\Delta_a(\xvec\mid B) = \Delta_a(\xvec\mid B')$ for all $\xvec\in\mathcal{X}$.
\end{lemma}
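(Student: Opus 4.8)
The plan is to unfold the marginal gain $\Delta_a(\xvec\mid B)$ into Hypervolume terms and observe that every surviving term references $B$ only through its image $\tilde{\fvec}(B)$ under the surrogate. Since the lemma is stated for the deterministic-surrogate setting, I take the acquisition to be $a(B) = \mathbf{HVI}(B;\tilde{\fvec},\tilde{\mathcal{P}},\mathbf{r}_\text{ref})$ and work directly from its definition.

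First I would substitute the definition of HVI into $\Delta_a(\xvec\mid B) = a(B\cup\{\xvec\}) - a(B)$. Writing both acquisition values out, the baseline term $\mathbf{HV}(\tilde{\fvec}(\tilde{\mathcal{P}});\mathbf{r}_\text{ref})$ appears with opposite signs in the two HVI expansions and cancels, leaving
$$\Delta_a(\xvec\mid B) = \mathbf{HV}(\tilde{\fvec}(B\cup\{\xvec\})\cup\tilde{\fvec}(\tilde{\mathcal{P}});\mathbf{r}_\text{ref}) - \mathbf{HV}(\tilde{\fvec}(B)\cup\tilde{\fvec}(\tilde{\mathcal{P}});\mathbf{r}_\text{ref}).$$
The key step is the elementary identity that the image of a union is the union of images, so $\tilde{\fvec}(B\cup\{\xvec\}) = \tilde{\fvec}(B)\cup\{\tilde{\fvec}(\xvec)\}$. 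After this substitution the entire right-hand side is a function of the finite point set $\tilde{\fvec}(B)$ together with the fixed quantities $\tilde{\fvec}(\xvec)$, $\tilde{\fvec}(\tilde{\mathcal{P}})$, and $\mathbf{r}_\text{ref}$; it never references $B$ except through $\tilde{\fvec}(B)$. Hence if $\tilde{\fvec}(B) = \tilde{\fvec}(B')$, replacing $B$ by $B'$ leaves each of the two Hypervolume terms unchanged, which gives $\Delta_a(\xvec\mid B) = \Delta_a(\xvec\mid B')$ for every $\xvec\in\mathcal{X}$, as claimed.

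There is no genuine obstacle beyond care with notation: the result follows purely from cancellation of the baseline Hypervolume and the image-of-union identity. The only point worth flagging is scope. The argument relies on $a$ being built solely from the point set $\tilde{\fvec}(B)$, which holds for the deterministic-surrogate HVI but not automatically for the expectation-based variants, where the joint predictive distribution over the batch — rather than merely $\tilde{\fvec}(B)$ — enters. I would therefore state the lemma, as the paper does, for the deterministic-surrogate HVI, where it justifies taking $\mathrm{feat}(\xvec)=\tilde{\fvec}(\xvec)$ as the feature map: sets with equal images are indistinguishable to $\Delta_a$, so the set-conditioned policy need only condition on $\{\tilde{\fvec}(\xvec)\mid \xvec\in B\}$.
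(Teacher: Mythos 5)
Your proof is correct and follows essentially the same route as the paper's: unfold the definition of HVI, cancel the baseline term $\mathbf{HV}(\tilde{\fvec}(\tilde{\mathcal{P}});\mathbf{r}_\text{ref})$, and observe that the remaining Hypervolume expressions depend on $B$ only through the point set $\tilde{\fvec}(B)$. If anything, your write-up is slightly more careful than the paper's own, which states $\Delta_a(\xvec\mid B) = \mathbf{HVI}(B;\tilde{\fvec},\tilde{\mathcal{P}},\mathbf{r}_\text{ref})$ with $\xvec$ absent from the right-hand side (a notational slip), whereas your explicit use of $\tilde{\fvec}(B\cup\{\xvec\}) = \tilde{\fvec}(B)\cup\{\tilde{\fvec}(\xvec)\}$ handles the marginal gain exactly.
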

\Cref{lem:easy} indicates that $\pi_\theta^\text{set}(\cdot\mid B)$ and $\pi_\theta^\text{set}(\cdot\mid B')$ solve identical problems if $\tilde{\fvec}(B)=\tilde{\fvec}(B')$. 
Hence, we simply set $\mathrm{feat}(\xvec)=\tilde{f}(\xvec)\in\reals^m$.
For cases using HVI-based batch acquisition functions with a statistical surrogate model $\tilde{f}$, we set $\mathrm{feat}(\xvec)=\tilde{f}_\text{UCB}(\xvec)\in\reals^m$. 

For the biological sequence design problems, we further utilize the MLM model, trained on previously evaluated data, into the decoder, inspired by the architecture proposed in LaMBO \citep{lambo}.
Please refer to \Cref{sec:arch} for more details on architectures.

Utilizing a learned set-conditioned policy $\pi_{\theta^*}^\text{set}$, we construct a proposal batch of $n$ candidates for querying by sampling from $\mathrm{GS}(a,\pi_{\theta^*}^\text{set},n,l)$. 
Please refer to \Cref{sec:lemsproof} for the proofs of \Cref{lem:greedypolicy}, \Cref{prop:setgrad}, and \Cref{lem:easy}

\subsection{Bounds for Approximated Greedy Algorithm}
\label{sec:bounds}
In this section, we introduce bounds for the approximated greedy algorithm under various conditions of the set function, as encountered in active learning scenarios.
First, we introduce the concept of an $\alpha$-approximation algorithm which indicates the amount of approximation as in \Cref{def:alphaapp}.
\begin{definition} ($\alpha$-Approximation Algorithm)
\label{def:alphaapp}
    An algorithm $\mathcal{A}$ is $\alpha$-\emph{approximation algorithm} if $\xvec_i$ found by $\mathcal{A}$ is an $\alpha$-approximation to the exact solution $\xvec_i^*$ for each step in \Cref{alg:approxgreedy}, \ie, $\Delta_a(\xvec_i\mid B_i)\ge \alpha \Delta_a(\xvec_i^*\mid B_i)=\alpha \max_{\xvec\in\mathcal{X}\setminus B_i}\Delta(\xvec\mid B_i)$.
\end{definition}
Prior research established bounds for approximated greedy algorithms in the term of $\alpha$ for submodular batch acquisition functions \citep{approxgreedy}. 
Common batch acquisition functions, such as EHVI and PES, are submodular, but their exact values are infeasible to compute due to the expectation involved \citep{qehvi,pes}. 
Instead, these values are estimated using methods like MC sampling or expectation propagation. 
Hence, the realized values of these acquisition functions can be near-submodular rather than strictly submodular.

Inspired by \citet{submodindexbound}, we propose a bound for the approximated greedy algorithm when the batch acquisition function $a$ is near-submodular, using the submodularity ratio $\gamma$ which quantifies the degree of submodularity in $a$.
\begin{theorem}
\label{thm:modbound}
    Let $a:2^\mathcal{X}\to\reals$ be a non-negative monotone set function. If $\mathcal{A}$ is an $\alpha$-approximation algorithm, the resulting solution $B_n$ of \Cref{alg:approxgreedy} is an $(1-1/e^{\alpha\gamma_{B_n,n}(a)})$-approximation to the optimal $n$-subset $B_n^*$, \ie, $a(B_n)\ge (1-1/e^{\alpha\gamma_{B_n,n}(a)}) a(B_n^*)$.
\end{theorem}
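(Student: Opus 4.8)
The plan is to run the classical greedy-with-submodularity-ratio telescoping argument, carrying the approximation factor $\alpha$ through each step. Write $B_n^*$ for the optimal $n$-subset and $B_0\subseteq B_1\subseteq\cdots\subseteq B_n$ for the chain of greedy sets produced by \Cref{alg:approxgreedy}, and track the residual $\delta_i \coloneqq a(B_n^*) - a(B_i)$. My goal is to establish a one-step contraction $\delta_{i+1}\le (1-\alpha\gamma/n)\,\delta_i$ with $\gamma=\gamma_{B_n,n}(a)$, and then unroll it.

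For the contraction I would first invoke the definition of the submodularity ratio with the base set $L=B_i$ and the query set $S=B_n^*\setminus B_i$. Since $B_i\subseteq B_n$ and $|S|\le n$, this pair is admissible in the minimization defining $\gamma_{B_n,n}(a)$, so $\gamma\,[a(B_i\cup B_n^*)-a(B_i)]\le \sum_{\xvec\in B_n^*\setminus B_i}\Delta_a(\xvec\mid B_i)$. Monotonicity gives $a(B_i\cup B_n^*)\ge a(B_n^*)$, hence $\gamma\,\delta_i\le \sum_{\xvec\in B_n^*\setminus B_i}\Delta_a(\xvec\mid B_i)$. Each summand is at most $\max_{\xvec'\in\mathcal{X}\setminus B_i}\Delta_a(\xvec'\mid B_i)$, which by the $\alpha$-approximation property (\Cref{def:alphaapp}) is at most $\tfrac1\alpha\Delta_a(\xvec_i\mid B_i)=\tfrac1\alpha(a(B_{i+1})-a(B_i))=\tfrac1\alpha(\delta_i-\delta_{i+1})$. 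Since $|B_n^*\setminus B_i|\le n$, summing the at-most-$n$ terms yields $\gamma\,\delta_i\le \tfrac{n}{\alpha}(\delta_i-\delta_{i+1})$, which rearranges to the claimed contraction.

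Finally I would unroll: $\delta_n\le (1-\alpha\gamma/n)^n\,\delta_0\le e^{-\alpha\gamma}\,\delta_0$, using $1-x\le e^{-x}$. Because $a$ is non-negative, $\delta_0=a(B_n^*)-a(\emptyset)\le a(B_n^*)$, so $a(B_n^*)-a(B_n)=\delta_n\le e^{-\alpha\gamma}a(B_n^*)$, giving $a(B_n)\ge (1-1/e^{\alpha\gamma})\,a(B_n^*)$ as stated.

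The step needing the most care — and the reason the theorem carries the specific subscript $\gamma_{B_n,n}(a)$ rather than a global ratio — is verifying that one ratio value is simultaneously valid at every iteration. Since $\gamma_{B_n,n}(a)$ is the minimum over all $L\subseteq B_n$ and all $S$ with $|S|\le n$, and every greedy prefix $B_i$ lies in $B_n$ while every $B_n^*\setminus B_i$ has cardinality at most $n$, the same $\gamma$ lower-bounds the ratio uniformly across $i$, which is precisely what the telescoping requires. I would also double-check the two places where monotonicity and the approximation factor interact: monotonicity is used to pass from $a(B_i\cup B_n^*)$ down to $a(B_n^*)$, and the $\alpha$-factor must be applied to the per-element maximum rather than to the sum, so that each of the up-to-$n$ terms of $B_n^*\setminus B_i$ is individually controlled by the greedy increment $a(B_{i+1})-a(B_i)$.
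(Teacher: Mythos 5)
Your proof is correct and is essentially the paper's own argument: your one-step contraction on the residual $\delta_i$ is exactly \Cref{lem:mod2}, which the paper derives via \Cref{lem:mod1} using the same choice $S_i = B_n^*\setminus B_i$, the same invocation of the submodularity ratio on the pair $(B_i, S_i)$, the same max-versus-sum step combined with the $\alpha$-approximation property, and the same use of monotonicity to pass from $a(B_i\cup B_n^*)$ to $a(B_n^*)$. The final unrolling with $\left(1-\alpha\gamma/n\right)^n\le e^{-\alpha\gamma}$ and the non-negativity of $a$ likewise matches the paper's concluding step.
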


Furthermore, there are works adopting heuristics to enhance the input diversity among evaluated candidates, aiming to identify diverse modes in the search space \citep{dgemo,bioseqgfn,jain2023multi,zhu2023sampleefficient}.
The batch acquisition function for this \emph{diversified subset selection problem} can be expressed as $a(B)\coloneqq s(B) + \lambda \mathrm{div}(B_\text{prev}\cup B)$, where $\lambda$ is a coefficient controlling the tradeoff, $s$ is a generic batch acquisition function, $\mathrm{div}$ quantifies the input diversity, and $B_\text{prev}\subset \mathcal{X}$ is the set of previously evaluated candidates.
We mainly consider the diversity in the form of the \emph{sum-dispersion}, defined as $\mathrm{div}(U)\coloneqq 1/2~d(U,U) = 1/2~\sum_{\xvec \in U} \sum_{\xvec'\in U} d(\xvec,\xvec')$ for any metric $d$, due to its flexibility \citep{dispersion,borodin}. In this case, we can simplify the batch acquisition function $a$ as follows:

\vspace{-0.8em}
{\small
\begin{align*}
    a(B) = s(B)+ \underbrace{\underbrace{\lambda \mathrm{div}(B_\text{prev})}_{\text{constant on $B$}}+ \lambda \sum_{\xvec\in B} \underbrace{d(\xvec, B_\text{prev})}_{\text{unary on $\xvec$}} }_{=:\mathrm{aux}(B),~ \text{non-negative monotone modular}} + \lambda \mathrm{div}(B).
\end{align*}}

\vspace{-0.8em}
\noindent For simplicity, we assume that the batch acquisition function is given by $a(B)=s(B)+\lambda \mathrm{div}(B)$,  neglecting the non-negative monotone modular term $\mathrm{aux}(B)$, since $(s+\mathrm{aux})$ is non-negative monotone  for any non-negative monotone $s$ \citep{submodularbook}.
Now, we propose a bound for the approximated greedy algorithm for the diversified subset selection problem, extending the bound for submodular cases proved in \citet{borodin}.
\begin{theorem}
\label{thm:divmodbound}
    Let $s$ be a non-negative monotone set function and $\mathrm{div}$ be a sum-dispersion. If $\mathcal{A}$ is an $\alpha$-approximation algorithm, 
    \Cref{alg:approxgreedy} with the set function $(s/2+ \lambda \mathrm{div})$ returns $B_n$, an $(\alpha\hat{\gamma}/2)$-approximation to the optimal $n$-subset $B_n^*$ of $(s+\lambda \mathrm{div})$ where $\hat{\gamma}\coloneqq \gamma_{B_n\cup B_n^*,n}(s)$.
\end{theorem}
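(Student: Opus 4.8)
The plan is to reduce the whole statement to a single chain of inequalities that compares the greedy value of $g\coloneqq s/2+\lambda\,\mathrm{div}$ against the optimum of $h\coloneqq s+\lambda\,\mathrm{div}$, following the skeleton of the submodular argument of \citet{borodin} but inserting the submodularity ratio $\hat\gamma$ wherever submodularity of $s$ was used and a factor $\alpha$ wherever exact greedy maximization was used. Write $B_n=\{\mathbf{x}_0,\dots,\mathbf{x}_{n-1}\}$ for the output of \Cref{alg:approxgreedy} run on $g$, with the chain $B_0\subset\cdots\subset B_n$, and let $O\coloneqq B_n^*$ be the optimal $n$-subset of $h$; let $\Delta_s,\Delta_{\mathrm{div}},\Delta_g$ denote the marginal gains of $s,\mathrm{div},g$. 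Telescoping gives $g(B_n)=\sum_{i=0}^{n-1}\Delta_g(\mathbf{x}_i\mid B_i)$, and at each step the $\alpha$-approximation property together with averaging optimality over the $n$ elements of $O$ (exactly as in the proof of \Cref{thm:modbound}) yields $\Delta_g(\mathbf{x}_i\mid B_i)\ge \tfrac{\alpha}{n}\sum_{\mathbf{y}\in O}\Delta_g(\mathbf{y}\mid B_i)$. Since $g$ is additive, this reduces the task to lower-bounding the $s$-contribution $\sum_{\mathbf{y}\in O}\Delta_s(\mathbf{y}\mid B_i)$ and the dispersion-contribution $\sum_{\mathbf{y}\in O}\Delta_{\mathrm{div}}(\mathbf{y}\mid B_i)$ separately.

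For the $s$-part I would invoke the submodularity ratio directly: by the definition of $\hat\gamma=\gamma_{B_n\cup B_n^*,n}(s)$ and monotonicity of $s$, $\sum_{\mathbf{y}\in O}\Delta_s(\mathbf{y}\mid B_i)\ge\hat\gamma\,(s(B_i\cup O)-s(B_i))\ge\hat\gamma\,(s(O)-s(B_i))$, which is legitimate because every conditioning set $B_i$ lies in $B_n\cup B_n^*$ and $|O|\le n$. For the dispersion part the key observation is that the marginal of a sum-dispersion is simply $\Delta_{\mathrm{div}}(\mathbf{y}\mid B_i)=d(\mathbf{y},B_i)\coloneqq\sum_{\mathbf{z}\in B_i}d(\mathbf{y},\mathbf{z})$. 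I would then establish the metric lemma that for each $i\ge 1$, $\sum_{\mathbf{y}\in O}d(\mathbf{y},B_i)\ge\tfrac{i}{n}\,\mathrm{div}(O)$: apply the triangle inequality $d(\mathbf{y},\mathbf{y}')\le d(\mathbf{y},\mathbf{z})+d(\mathbf{z},\mathbf{y}')$ for each $\mathbf{z}\in B_i$, average over the $i$ points of $B_i$, and sum over all ordered pairs $\mathbf{y},\mathbf{y}'\in O$; the left side reconstructs $2\,\mathrm{div}(O)$ while the right collapses to $\tfrac{2n}{i}\sum_{\mathbf{y}\in O}d(\mathbf{y},B_i)$.

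Summing the per-step bounds over $i=0,\dots,n-1$ then yields $g(B_n)\ge\tfrac{\alpha\hat\gamma}{2}\,(s(O)-\overline s)+\tfrac{\alpha\lambda(n-1)}{2n}\,\mathrm{div}(O)$, where $\overline s=\tfrac1n\sum_i s(B_i)\le s(B_n)$ collects the $-s(B_i)$ terms. The decisive bookkeeping step uses $\alpha\hat\gamma\le 1$, so $\tfrac{\alpha\hat\gamma}{2}\overline s\le\tfrac12 s(B_n)$; moving this across and recalling $g(B_n)=\tfrac12 s(B_n)+\lambda\,\mathrm{div}(B_n)$ converts the left-hand side precisely into $h(B_n)=s(B_n)+\lambda\,\mathrm{div}(B_n)$, so that $h(B_n)\ge\tfrac{\alpha\hat\gamma}{2}\,s(O)+\tfrac{\alpha(n-1)}{2n}\lambda\,\mathrm{div}(O)$. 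This is exactly the mechanism by which the deliberate halving of $s$ inside the greedy objective pays for the cross-terms. Because $\hat\gamma\le 1$, the two coefficients can be folded into the common factor $\tfrac{\alpha\hat\gamma}{2}$ whenever $\hat\gamma\le 1-\tfrac1n$; to obtain $h(B_n)\ge\tfrac{\alpha\hat\gamma}{2}\,h(O)$ uniformly in $\hat\gamma$ I would sharpen the averaged triangle-inequality lemma to the matching-based metric lemma of \citet{borodin}, which removes the $\tfrac1n$ slack and gives dispersion coefficient $\tfrac{\alpha}{2}\ge\tfrac{\alpha\hat\gamma}{2}$ directly.

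I expect the metric lemma to be the main obstacle, since it is the one place where the special structure of the sum-dispersion and the triangle inequality are essential and where the factor $1/2$ is born; the analogous step in \citet{borodin} is precisely where submodularity is unavailable and the estimate must be made purely metrically, and obtaining the sharp constant (rather than the lossy $\tfrac{n-1}{n}$ from naive averaging) requires a careful matching between $O$ and $B_n$. A secondary delicate point is the final bookkeeping: one must verify that the $-s(B_i)$ cross-terms are genuinely absorbed by the $\tfrac12 s(B_n)$ slack created by running greedy on $s/2$ rather than $s$. Throughout I would assume $s(\emptyset)=\mathrm{div}(\emptyset)=0$ so the telescoping is exact, with non-negativity of $s$ and $\mathrm{div}$ ensuring every discarded term carries the correct sign.
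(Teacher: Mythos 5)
Your overall skeleton matches the paper's proof (greedy on $s/2+\lambda\,\mathrm{div}$, per-step comparison against the optimum, submodularity ratio for the $s$-part, a triangle-inequality lemma for the dispersion part, then absorption of cross-terms using the halving of $s$), but there is a genuine gap in the dispersion estimate, and it is precisely the step you flagged as the "main obstacle." Your key identity $\Delta_{\mathrm{div}}(\mathbf{y}\mid B_i)=d(\mathbf{y},B_i)$ holds only for $\mathbf{y}\notin B_i$; for $\mathbf{y}\in U\coloneqq B_n^*\cap B_i$ the marginal gain is $0$, while $d(\mathbf{y},B_i)$ can be large. So the quantity your chain actually produces is $\sum_{\mathbf{y}\in B_n^*}\Delta_{\mathrm{div}}(\mathbf{y}\mid B_i)=d(W,B_i)$ with $W\coloneqq B_n^*\setminus B_i$, whereas your averaged-triangle-inequality lemma bounds the strictly larger quantity $d(B_n^*,B_i)=d(U,B_i)+d(W,B_i)$. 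The inequality you actually need, $d(W,B_i)\ge\tfrac{i}{n}\,\mathrm{div}(B_n^*)$, is false in general: take $B_n^*$ to be $n$ points at mutual distance $2R$ and suppose greedy has so far picked $i=n-2$ of them, so $B_i\subset B_n^*$; then $d(W,B_i)=4(n-2)R$ while $\tfrac{i}{n}\,\mathrm{div}(B_n^*)=(n-2)(n-1)R$, which is larger once $n\ge 6$. This configuration (greedy prefix inside the optimal set) is exactly why the paper's argument splits into cases: its metric lemma (\Cref{lem:borodin}, due to Borodin et al.) gives $d(B_i,W)\ge\tfrac{i|W|}{n(n-1)}\mathrm{div}(B_n^*)$ only for $|W|>1$, and the cases $|W|=1$ and $n=1$ get separate, structurally different arguments. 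Your plan has no such case analysis, and it breaks in exactly the case it omits.

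A second, related structural problem: you average the $\alpha$-approximation bound over all $n$ elements of $B_n^*$ with weight $1/n$, but the paper averages only over $W$ with weight $1/|W|$. This is not cosmetic. Borodin's lemma produces a bound proportional to $|W|$, and the proof needs that $|W|$ to cancel against the $1/|W|$ from the averaging, yielding the per-step dispersion term $\tfrac{i\alpha\lambda}{n(n-1)}\mathrm{div}(B_n^*)$ which sums to exactly $\tfrac{\alpha\lambda}{2}\mathrm{div}(B_n^*)$. With your $1/n$ averaging, substituting Borodin's lemma gives $\tfrac{i\alpha\lambda|W|}{n^2(n-1)}\mathrm{div}(B_n^*)$, which loses a factor of up to $n$ when $|W|$ is small; so your proposed patch ("swap in the matching-based lemma of Borodin") does not repair the argument without also restructuring the averaging, i.e., without essentially rewriting the step the way the paper does. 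Finally, a minor fixable point: applying the submodularity ratio with $S=B_n^*$ and $B'=B_i$ violates the disjointness requirement $B'\subset B\setminus S$ in \Cref{def:submodind} whenever $B_i\cap B_n^*\neq\emptyset$; you must take $S=W$ (the conclusion you want then still follows, since the terms on $U$ vanish and $s(B_i\cup W)\ge s(B_n^*)$ by monotonicity). The $s$-part bookkeeping with $\overline{s}\le s(B_n)$ and the absorption via $\alpha\hat{\gamma}\le 1$ is sound and parallels the paper.
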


\Cref{thm:modbound} and \Cref{thm:divmodbound} demonstrate that even if our learning algorithm does not perfectly learn the greedy policy, the performance bound can still be guaranteed in terms of the level of approximation $\alpha$ in subproblems, even under these realistic conditions beyond submodularity.
Please refer to \Cref{app:boundsdetail} for the detailed statements and proofs of the theorems, as well as their connections to prior works.


\section{Experiments}
\label{sec:results}
\begin{table*}[hbt]
    \centering
    \caption{Subset selection results on three bigrams tasks with various cardinality constraint $n$. Each value indicates the Hypervolume indicator of discovered subset. The mean and standard deviation values are calculated for 10 trials.}
\resizebox{\textwidth}{!}{
\begin{tabular}{lccccccccccccccccccc}
\toprule
& \multicolumn{9}{c}{Hypervolume Indicator ($\uparrow$)}\\
\cmidrule(lr){2-10}
         & \multicolumn{2}{c}{2 Bigrams}& \multicolumn{3}{c}{3 Bigrams}& \multicolumn{4}{c}{4 Bigrams}\\
    \cmidrule(lr){2-3}
    \cmidrule(lr){4-6}
    \cmidrule(lr){7-10}
Method     &  $n=4$ & $n=16$ &  $n=4$ & $n=16$ & $n=64$& $n=4$ & $n=16$ & $n=64$ & $n=256$\\
    \cmidrule(lr){1-3}
    \cmidrule(lr){4-6}
    \cmidrule(lr){7-10}
Optimum & \multicolumn{2}{c}{0.630} & \multicolumn{3}{c}{0.409}  & \multicolumn{4}{c}{0.106} \\
    \cmidrule(lr){1-3}
    \cmidrule(lr){4-6}
    \cmidrule(lr){7-10}
Exact Greedy& 0.568 & 0.630 & 0.350 & 0.408 & 0.409 & 0.055 & 0.078 & 0.097 & 0.106   \\
    \cmidrule(lr){1-3}
    \cmidrule(lr){4-6}
    \cmidrule(lr){7-10}
Ours        & \textbf{0.568} (0.000)& \textbf{0.630} (0.000) & \textbf{0.329} (0.005) & \textbf{0.349} (0.007) & \textbf{0.359} (0.003)&\textbf{0.055} (0.000) &\textbf{0.077} (0.000)& \textbf{0.091} (0.002)& \textbf{0.094} (0.003) \\
PC-RL (TS)      &0.558 (0.007)&0.620 (0.003)&0.318 (0.012)&0.347 (0.003)&\textbf{0.359} (0.004)&0.040 (0.005) & 0.054 (0.003) & 0.071 (0.002) & 0.082 (0.007)\\
PC-RL (WS)      &0.522 (0.030)&0.583 (0.018)&0.310 (0.007)&0.337 (0.008)&0.347 (0.007)&  0.009 (0.009)& 0.016 (0.005)& 0.028 (0.005) & 0.032 (0.006)\\
Greedy + RL  & 0.518 (0.002) & 0.518 (0.040) & 0.320 (0.000)& 0.320 (0.008) & 0.322 (0.001) & 0.047 (0.004)&0.062 (0.005)&0.065 (0.008)&0.070 (0.004)\\ 
Greedy + HC & 0.063 (0.028) & 0.063 (0.028) & 0.182 (0.021) & 0.182 (0.021) & 0.171 (0.038) &0.014 (0.008) &  0.014 (0.008) & 0.013 (0.008) &0.001 (0.001)\\
Greedy + RS &  0.025 (0.002) & 0.027 (0.002) & 0.002 (0.001) &  0.003 (0.000) &  0.003 (0.000)  &  0.000 (0.000) &  0.000 (0.000)&  0.000 (0.000) & 0.000 (0.000)\\
\bottomrule
\end{tabular}}
\label{tab:main}
\end{table*}
We validate the performance of our proposed method on various benchmarks on sequence design problems based on the tasks suggested by \citet{lambo} and \citet{jain2023multi}.
First, we outline the benchmarks and the baseline methods.
Next, we compare the subset selection performance of our method with the baseline methods on single-round synthetic tasks, which correspond to the deterministic surrogate model scenario. 
Finally, we present the results on batch BO scenarios, which utilize stochastic surrogate models.
Our implementation is available at \href{https://github.com/snu-mllab/GreedyPolicyForMOCO}{https://github.com/snu-mllab/GreedyPolicyForMOCO}.

\subsection{Settings}
\textbf{Single-round experiments on synthetic tasks.}
In this setting, we assume that a deterministic surrogate model is given as a synthetic function.
We mainly consider the subset selection problem that maximizes the Hypervolume indicator value of given deterministic synthetic functions on bigram matching tasks with various numbers of objectives and a DNA aptamer design task with three objectives computed by NUPACK library \citep{jain2023multi, nupack}. 
We compare our method with preference-conditioned RL methods with Chebyshev scalarization, denoted PC-RL (TS), and weight scalarization, denoted PC-RL (WS) \citep{lin2022pareto}. 
We also compare our method with the approximated greedy algorithm augmented with combinatorial algorithms: RL (Greedy + RL), hill climbing (Greedy + HC), and random sampling (Greedy + RS) \citep{reinforce, selman2006hill, lazier}. 
For a fair comparison, we fix the number of queries to the deterministic surrogate model across all methods.

\begin{table}[t]
    \centering
    \caption{Subset selection results on the DNA aptamer design task with various cardinality constraint $n$. The mean and standard deviation values are calculated for 10 trials.}
    \resizebox{\columnwidth}{!}{
    \begin{tabular}{lccccc}
    \toprule
    &\multicolumn{4}{c}{Hypervolume Indicator ($\uparrow$)}\\
    \midrule
         Method& $n=4$& $n=16$ & $n=64$ & $n=256$ \\
         \midrule
         Ours& \textbf{0.662} (0.019)& \textbf{0.717} (0.025) & \textbf{0.764} (0.045) & \textbf{0.778} (0.026)\\
         PC-RL (TS)& 0.515 (0.041)& 0.658 (0.060) &0.712 (0.052) & 0.731 (0.042) \\ 
         PC-RL (WS)& 0.479 (0.035)& 0.530 (0.009) &0.587 (0.027) & 0.611 (0.017)\\
         Greedy + RL & 0.551 (0.029) & 0.705 (0.027) & 0.749 (0.034) & 0.739 (0.023)\\
         Greedy + HC & 0.367 (0.039) &  0.388 (0.047) & 0.377 (0.050)& 0.372 (0.042)\\
         Greedy + RS & 0.199 (0.012) &  0.226 (0.012) &0.231 (0.015) &  0.232 (0.015) \\ 
         \bottomrule
    \end{tabular}}
    \label{tab:nupack}
\end{table}

\textbf{Batch BO experiments.}
In this scenario, we adopt the benchmark tasks proposed by \citet{lambo} to evaluate the  performance of our method with statistical surrogate models. 
We compare our method with several active learning methods: 
 LaMBO, an LSO method; MBGA, a model-based genetic algorithm; and AL-MOGFN, a GFlowNet-based active learning method that maximizes individual acquisition values while enhancing the diversity \citep{lambo,jain2023multi}.
For these methods, we use the same architecture and training algorithm for the statistical surrogate model \citep{mtgp}. 
As in \citet{lambo}, we utilize NEHVI as the batch acquisition  for all methods.
We also consider NSGA-II as a baseline method to show the performance of a model-free method that is not an active learning approach \citep{nsga2}.

Though active learning frameworks assume that the surrogate model is much cheaper than the oracle, we set  the number of surrogate model queries of our method to be comparable to the baseline methods for a fair comparison.
Please refer to \Cref{sec:settings} for more details on the experimental settings containing tasks and baselines.

\subsection{Results on Synthetic Tasks}
\Cref{tab:main} summarizes the single-round subset selection results on bigrams tasks. 
The results show that our method consistently outperforms the baseline methods, searching batches with higher Hypervolume indicator values compared to the baseline methods for all bigram tasks with various cardinality constraint values we consider.
In these tasks, we can obtain the optimum and Hypervolume indicator values of exact greedy solutions by rule-based backtracking search.
Notably, in the 2 bigrams task, our proposed algorithm finds subsets with Hypervolume indicator values that are the same as the exact greedy solutions. 
\Cref{tab:nupack} summarizes the subset selection results on the DNA aptamer design task. 
The results state that our method also outperforms the baseline methods in the DNA aptamer task, demonstrating the wide applicability of our method.
Please refer to \Cref{sec:addexp1} for the additional results on single-round synthetic tasks.

\subsection{Results on Batch BO Scenarios}
First, we note that the prior implementations of LaMBO and MBGA had several issues, such as incorrect computation of  NEHVI. 
We fix these issues and reproduce the results of baseline methods with more update steps for a fair comparison with our method. 
For detailed information, please see \Cref{sec:issues}. 
\Cref{fig:RFP} presents the experimental results on the RFP task. 
As shown in \Cref{fig:multiround}, both variations of our methods outperform the baseline methods in the RFP task.
Remarkably, our method with MLM achieves 25\% higher relative Hypervolume indicator value compared to the best baseline results from MBGA.  
In a different view, our method with MLM demonstrates comparable performance while requiring 1.69 times fewer queries.
\Cref{fig:frontier} illustrates the frontiers discovered by our method and AL-MOGFN. 
The frontiers obtained through our method completely dominate those previously discovered by AL-MOGFN, demonstrating the effectiveness of our proposed active learning method.
Additionally, we provide visualizations of the non-dominated offsprings for each color, highlighting that our method successfully discovers improved offsprings for all ancestor proteins.

To directly validate the subset selection performance in batch BO scenarios, we evaluate the resulting batch acquisition value of each method in the first round with the same surrogate model and the initial data. 
\Cref{tab:statnehvi} demonstrates that our method achieves higher batch acquisition values compared to baseline active learning methods, indicating the superiority of our method in inner loop optimization with a stochastic surrogate model. 
For additional results on batch BO experiments, please refer to \Cref{sec:addexp2}.

\begin{table}[t]
    \centering
    \caption{Subset selection results in the first round of the batch BO benchmarks (RFP, 3 Bigrams, small molecules) with NEHVI. The mean and standard deviation values are calculated for 10 trials.}
    \resizebox{\columnwidth}{!}{
    \begin{tabular}{ccccc}
    \toprule
    &\multicolumn{3}{c}{ NEHVI Value ($\uparrow$)}\\
    \cmidrule(lr){2-4}
Method & RFP & 3 Bigrams & Molecules\\
\midrule
Ours&  \textbf{0.779} (0.045) & \textbf{16.444} (1.529) & \textbf{0.698} (0.104)\\ 
LaMBO& 0.591 (0.033) & 9.922 (0.688) &	0.545 (0.064)\\
MBGA&0.654 (0.052) & 12.376 (1.576) & 	0.483 (0.124)\\ 
\bottomrule
    \end{tabular}}
    \label{tab:statnehvi}
\end{table}
\begin{figure}[t]
\centering
\begin{subfigure}{0.45\textwidth}
	\centering
\resizebox{\textwidth}{!}{
		\begin{tikzpicture}
		\begin{axis}[
		width=8.5cm,
		height=8.0cm,
		every axis plot/.append style={thick},
		grid=major,
		scaled ticks = false,
		ylabel near ticks,
		tick pos=left,
		tick label style={},
		xtick={0, 250, 500, 750, 1000},
		xticklabels={0, 250, 500, 750, 1000},
		ytick={0, 0.5, 1.0, 1.5, 2.0, 2.5, 3.0},
		yticklabels={0, 0.5, 1.0, 1.5, 2.0, 2.5, 3.0},
		label style={},
		xlabel={Number of queries},
		xlabel style={at={(0.5,0.0)}, font=\large},
		ylabel={Relative Hypervolume ($\uparrow$)},
		ylabel style={align=center, at={(-0.1,0.5)}, font=\large},
		xmin=0,
		xmax=1024,
		ymin=0.9,
		ymax=3.2,
            legend cell align={left},
		legend style={legend columns=1, at={(1.6, 0.77)}, font=\small},
        ]
        \begin{pgfonlayer}{main}
            \addplot[cyan, mark size=1.5pt, no markers] table [x=x_vals, y=y_med, col sep=comma]{csvs/main_proxy_rfp/Ours_0.0001_use_mlm.csv};
		\addlegendentry{Ours w/ MLM}
            \addplot[blue, mark size=1.5pt, no markers] table [x=x_vals, y=y_med, col sep=comma]{csvs/main_proxy_rfp/Ours_0.0001.csv};
		\addlegendentry{Ours w/o MLM}
            \addplot[black, dotted] coordinates {(0,1.96) (1024,1.96)};
		\addlegendentry{MOGFN (Original)}
            \addplot[green!80!black!100, mark size=1.5pt, no markers] table [x=x_vals, y=y_med, col sep=comma]{csvs/main_proxy_rfp/LaMBO_Long.csv};
		\addlegendentry{LaMBO (Reprod.)}
  		\addplot[green!80!black!60, mark size=1pt, dashed, no markers] table [x=x_vals, y=y_med, col sep=comma]{csvs/github_proxy_rfp/LaMBO_Github.csv};
		\addlegendentry{LaMBO (Original)}
    \addplot[red, mark size=1.5pt, no markers] table [x=x_vals, y=y_med, col sep=comma]{csvs/main_proxy_rfp/MBGA_Long.csv};
		\addlegendentry{MBGA (Reprod.)}
  \addplot[red!50, mark size=1pt, dashed, no markers] table [x=x_vals, y=y_med, col sep=comma]{csvs/github_proxy_rfp/MBGA_Github.csv};
		\addlegendentry{MBGA (Original)}
  		\addplot[magenta!70, mark size=1.5pt, no markers] table [x=x_vals, y=y_med, col sep=comma]{csvs/main_proxy_rfp/NSGA.csv};
		\addlegendentry{NSGA-II}
        \end{pgfonlayer}
            \addplot[name path=n1, black, mark size=1.5pt, no markers, line width=0pt,opacity=0] table [x=x_vals, y=y_ub, col sep=comma]{csvs/main_proxy_rfp/Ours_0.0001_use_mlm.csv};
            \addplot[name path=n2, black, mark size=1.5pt, no markers, line width=0pt,opacity=0] table [x=x_vals, y=y_lb, col sep=comma]{csvs/main_proxy_rfp/Ours_0.0001_use_mlm.csv};
            \addplot[name path=r1,red!20, mark size=1.5pt, no markers, line width=0pt,opacity=0] table [x=x_vals, y=y_ub, col sep=comma]{csvs/main_proxy_rfp/Ours_0.0001.csv};
            \addplot[name path=r2,red!20, mark size=1.5pt, no markers, line width=0pt,opacity=0] table [x=x_vals, y=y_lb, col sep=comma]{csvs/main_proxy_rfp/Ours_0.0001.csv};
            \addplot[name path=b1,blue!20, mark size=1.5pt, no markers, line width=0pt,opacity=0] table [x=x_vals, y=y_ub, col sep=comma]{csvs/main_proxy_rfp/LaMBO_Long.csv};
            \addplot[name path=b2,blue!20, mark size=1.5pt, no markers, line width=0pt,opacity=0] table [x=x_vals, y=y_lb, col sep=comma]{csvs/main_proxy_rfp/LaMBO_Long.csv};
    \addplot[name path=g1,green!20, mark size=1.5pt, no markers, line width=0pt,opacity=0] table [x=x_vals, y=y_ub, col sep=comma]{csvs/main_proxy_rfp/MBGA_Long.csv};
    \addplot[name path=g2,green!20, mark size=1.5pt, no markers, line width=0pt,opacity=0] table [x=x_vals, y=y_lb, col sep=comma]{csvs/main_proxy_rfp/MBGA_Long.csv};

  		\addplot[name path=m1,magenta!15, mark size=1.5pt, no markers, line width=0pt,opacity=0] table [x=x_vals, y=y_ub, col sep=comma]{csvs/main_proxy_rfp/NSGA.csv};
  		\addplot[name path=m2,magenta!15, mark size=1.5pt, no markers, line width=0pt,opacity=0] table [x=x_vals, y=y_lb, col sep=comma]{csvs/main_proxy_rfp/NSGA.csv};
                  \addplot fill between[ 
            of = r1 and r2, 
            split, 
            every even segment/.style = {blue!30,opacity=0.5},
            ];
                      \addplot fill between[ 
            of = b1 and b2, 
            split, 
            every even segment/.style = {green!50,opacity=0.5},
            ];
                  \addplot fill between[ 
            of = g1 and g2, 
            split, 
            every even segment/.style = {red!30,opacity=0.5},
            ];
                      \addplot fill between[ 
            of = m1 and m2, 
            split, 
            every even segment/.style = {magenta!30,opacity=0.5},
            ];
                      \addplot fill between[ 
            of = n1 and n2, 
            split, 
            every even segment/.style = {cyan!30,opacity=0.5},
            ];
		\end{axis}
		\end{tikzpicture}
}
 \vspace{-1.6em}
 \subcaption{Multi-round active learning results.}
	\label{fig:multiround}
\end{subfigure}
 \vspace{0.8em}
\begin{subfigure}{0.45\textwidth}
	\centering
\resizebox{\textwidth}{!}{
		\begin{tikzpicture}
		\begin{axis}[
		width=8.5cm,
		height=8.0cm,
		every axis plot/.append style={thick},
		grid=major,
		scaled ticks = false,
		ylabel near ticks,
		tick pos=left,
		tick label style={},
		xtick={10000,11000, 12000, 13000, 14000},
		xticklabels={10000,11000, 12000, 13000, 14000},
		ytick={-20, 0, 20, 40, 60, 80},
		yticklabels={-20, 0, 20, 40, 60, 80},
		label style={},
		xlabel={SASA ($\uparrow$)},
		xlabel style={at={(0.5,0.0)}, font=\large},
		ylabel={Stability ($\uparrow$)},
		ylabel style={align=center, at={(-0.1,0.5)}, font=\large},
		xmin=9900,
		xmax=14100,
		ymin=-40,
		ymax=100,
            legend cell align={left},
		legend style={legend columns=1, at={(1.6, 0.89)}, font=\small},
        ]
            \addplot[black, no marks] table [x=xval, y=yval, col sep=comma]{csvs/pareto_proxy_rfp/ours_total.csv};
            \addlegendentry{Frontier by Ours}
            \addplot[black, mark=+, mark size=2pt, only marks] coordinates {
                (-100,-10000)
            };
            \addlegendentry{Offsprings by Ours}
            \addplot[black, no marks, dotted] table [x=xval, y=yval, col sep=comma]{csvs/pareto_proxy_rfp/mogfn_total.csv};
		\addlegendentry{Frontier by MOGFN}

            \addplot[black, mark=x, mark size=2pt, only marks] coordinates {
                (-100,-10000)
            };
            \addlegendentry{Offsprings by MOGFN}

            \addplot[black, no marks, dashed] table [x=xval, y=yval, col sep=comma]{csvs/pareto_proxy_rfp/old_total.csv};
		\addlegendentry{Initial RFPs}
            \addplot[blue, only marks, mark=o, mark size=2pt, mark options={line width=1.5pt}] table [x=xval, y=yval, col sep=comma]{csvs/pareto_proxy_rfp/old_DsRed.M1.csv};
            \addlegendentry{DsRed.M1}
            \addplot[orange, only marks, mark=o, mark size=2pt, mark options={line width=1.5pt}] table [x=xval, y=yval, col sep=comma]{csvs/pareto_proxy_rfp/old_mScarlet.csv};
            \addlegendentry{mScarlet}
            \addplot[green!80!black!100, only marks, mark=o, mark size=2pt, mark options={line width=1.5pt}] table [x=xval, y=yval, col sep=comma]{csvs/pareto_proxy_rfp/old_DsRed.T4.csv};
            \addlegendentry{DsRed.T4}
            \addplot[red, only marks, mark=o, mark size=2pt, mark options={line width=1.5pt}] table [x=xval, y=yval, col sep=comma]{csvs/pareto_proxy_rfp/old_AdRed.csv};
            \addlegendentry{AdRed}
            \addplot[purple!80!black!100, only marks, mark=o, mark size=2pt, mark options={line width=1.5pt}] table [x=xval, y=yval, col sep=comma]{csvs/pareto_proxy_rfp/old_mRouge.csv};
            \addlegendentry{mRouge}
            \addplot[brown, only marks, mark=o, mark size=2pt, mark options={line width=1.5pt}] table [x=xval, y=yval, col sep=comma]{csvs/pareto_proxy_rfp/old_RFP630.csv};
            \addlegendentry{RFP630}

            \addplot[blue, mark=x, mark size=2pt, only marks] table [x=xval, y=yval, col sep=comma]{csvs/pareto_proxy_rfp/mogfn_DsRed.M1.csv};
            \addplot[orange, mark=x, mark size=2pt, only marks] table [x=xval, y=yval, col sep=comma]{csvs/pareto_proxy_rfp/mogfn_mScarlet.csv};
            \addplot[green!80!black!100, mark=x, mark size=2pt, only marks] table [x=xval, y=yval, col sep=comma]{csvs/pareto_proxy_rfp/mogfn_DsRed.T4.csv};
            \addplot[purple!80!black!100, mark=x, mark size=2pt, only marks] table [x=xval, y=yval, col sep=comma]{csvs/pareto_proxy_rfp/mogfn_mRouge.csv};

            \addplot[blue, mark=+, mark size=1pt, only marks] table [x=xval, y=yval, col sep=comma]{csvs/pareto_proxy_rfp/ours_DsRed.M1.csv};
            \addplot[orange, mark=+, mark size=1pt, only marks] table [x=xval, y=yval, col sep=comma]{csvs/pareto_proxy_rfp/ours_mScarlet.csv};
            \addplot[green!80!black!100, mark=+, mark size=1pt, only marks] table [x=xval, y=yval, col sep=comma]{csvs/pareto_proxy_rfp/ours_DsRed.T4.csv};
            \addplot[red, mark=+, mark size=1pt, only marks] table [x=xval, y=yval, col sep=comma]{csvs/pareto_proxy_rfp/ours_AdRed.csv};
            \addplot[purple!80!black!100, mark=+, mark size=1pt, only marks] table [x=xval, y=yval, col sep=comma]{csvs/pareto_proxy_rfp/ours_mRouge.csv};
            \addplot[brown, mark=+, mark size=1pt, only marks] table [x=xval, y=yval, col sep=comma]{csvs/pareto_proxy_rfp/ours_RFP630.csv};
		\end{axis}
		\end{tikzpicture}
}
 \vspace{-1.6em}
 \subcaption{Discovered frontiers}
 \label{fig:frontier}
\end{subfigure}
 \vspace{-1em}
 \caption{Multi-round active learning results and the discovered frontiers on the RFP task under a query limit of $1024$. (a) Midpoint, lower, and upper boundaries show the 50th, 30th, and 70th percentiles, respectively, derived from 10 trials. (b) Colored circles indicates ancestor proteins. }
	\label{fig:RFP}
\end{figure}
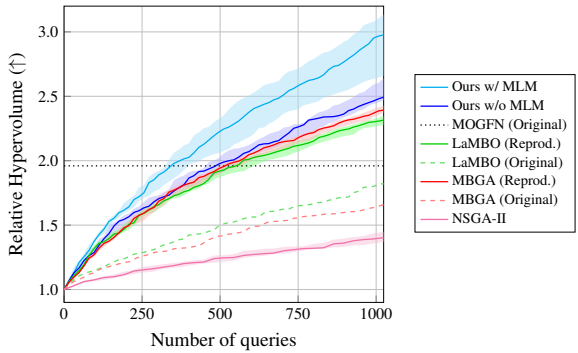
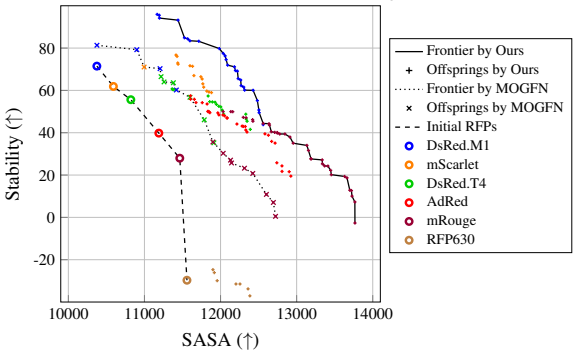

\section{Related Works}
\textbf{Genetic algorithms (GAs) for MOCO} A widely-used off-the-shelf GA method for multi-objective optimization, NSGA-II, employs random mutations and a non-dominated sorting for iterative population updates \citep{nsga2, moobasic}.
\citet{eamoo} improved GA to better handle large combinatorial search spaces by utilizing graph neural networks.
While GAs are recognized for their simplicity and adaptability across various types of problems, generic GA methods may not be suitable for expensive MOCO problems because they often necessitate a large number of queries \citep{gabad}.

\textbf{MOCO based on Markov decision processes (MDPs).} An emerging research direction treats combinatorial optimization problems as decision-making problems, framing them within the context of MDPs and training policies via RL or GFlowNet frameworks \citep{rlnas,rlneuralco,rlchip,bengio2023gflownet}. 
In MOCO research, this perspective has led to the development of methods that train policies to generate the Pareto set \citep{decisionmaking,yang2019morl,nagentspsl1,nagentspsl2,lin2022pareto,jain2023multi,zhu2023sampleefficient}. 
These methods predominantly utilize preference-based scalarization techniques, such as weighted scalarization or weighted Chebyshev scalarization, to decompose multi-objective problems into single-objective subproblems \citep{chebyshev}. 
\citet{nagentspsl1} and \citet{nagentspsl2} train multiple RL agents, each corresponding to a single subproblem.
\citet{yang2019morl} amortize subproblems into a single problem with a stochastic reward scalarized by random preference vectors, demonstrating the training of a single RL agent to solve MOCO problems.
\citet{lin2022pareto} introduce a preference-conditioned RL agent, which is trained to maximize a scalarized reward corresponding to the conditioned preference vector.
In a different approach, \citet{jain2023multi} and \citet{zhu2023sampleefficient} employ GFlowNet frameworks to enhance the diversity of the learned solutions, applying this technique to solve expensive MOCO problems, such as biological sequence design and molecular graph discovery, in active learning frameworks.

\textbf{Latent space optimization (LSO) for MOCO.} Rather than directly searching for candidates in the explicit combinatorial space, \citet{lambo} proposed LaMBO, a LSO-style inner loop optimization method designed for discrete sequence data. This method involves training an autoencoder and proposing a batch of candidates through first-order optimization of the batch acquisition function in a continuous latent space \citep{lsbo, lsbo2, mtgp}.

\section{Conclusion}
We propose a query-efficient optimization method for expensive MOCO problems based on active learning utilizing a novel greedy-style subset selection algorithm.
In contrast to prior methods, our subset selection method explicitly optimizes the batch acquisition function on the combinatorial space.
Moreover, our subset selection algorithm trains a greedy policy to address all greedy subproblems simultaneously, overcoming the typical difficulty of parallelization in greedy-style approaches.
By utilizing the trained greedy policy, our algorithm constructs the proposal batch by sequential sampling from the greedy policy.
Furthermore, we extend the theoretical bound on approximated greedy algorithms for various types of set functions containing monotone set functions and sum-dispersion functions.
Empirical results on single-round subset selection benchmark in biological sequence designs show that our method consistently outperforms baseline methods, finding the batch with higher batch acquisition value.
Surprisingly, in multi-round active learning for red fluorescent protein design, our approach achieves the same level of performance as baseline methods but with 1.69$\times$ fewer queries, demonstrating its effectiveness and efficiency.

\section*{Impact Statement}
The primary objective of our research is to improve the end-to-end process for solving expensive MOCO problems.
Expensive MOCO problems encompass a wide range of complex challenges in society, including drug discovery and chip design, which have widespread impact. 
Through our approach of generating superior proposal candidates using a greedy policy, 
our work has the potential to contribute to the quicker development of new medications, and the creation of more efficient electronic devices, potentially offering benefits to both the industry and society.

\section*{Acknowledgement}
The work was done as part of the Meta–NYU mentorship program and partly supported by the National Science Foundation (under NSF Award 1922658). 
Kyunghyun Cho is supported by the Samsung Advanced Institute of Technology (under the project Next Generation Deep Learning: From Pattern Recognition to AI). 
Deokjae Lee and Hyun Oh Song are supported by Samsung Advanced Institute of Technology, 
Samsung Electronics Co., Ltd. (IO220810-01900-01), 
Institute of Information \& Communications Technology Planning \& Evaluation (IITP) grant funded by the Korea government (MSIT) [No. RS-2020-II200882, (SW STAR LAB) Development of deployable learning intelligence via self-sustainable and trustworthy machine learning and No. RS-2021-II211343, Artificial Intelligence Graduate School Program (Seoul National University)],
the National Research Foundation of Korea (NRF) grant funded by the Korea government (MSIT) (No. RS-2024-00354036), 
and Basic Science Research Program through the National Research Foundation of Korea (NRF) funded by the Ministry of Education (RS-2023-00274280). Kyunghyun Cho is the corresponding author.

\bibliography{example_paper}
\bibliographystyle{icml2024}

\newpage
\appendix
\onecolumn
\section{Mathematical Details}
\subsection{Proof of \Cref{thm:upd}}
\label{sec:convergeproof}
To start, we recall \Cref{def:partial} and \Cref{thm:upd}. 
\begin{definition}
\label{def:partialapp}
    Let $F:\mathcal{U}\times\mathcal{U}\to\reals$ be any continuously differentiable function. We define the \emph{partial derivative step} on $u\in\mathcal{U}$ given any behavior $u'\in\mathcal{U}$ as $\mathrm{PD}_F(u;u',\eta) \coloneqq u + \eta \frac{\partial}{\partial u} F(u,u')$.
\end{definition}
\begin{theorem}
\label{thm:updapp}
    Let $F:\mathcal{U}\times \mathcal{U}\to\reals$ be a function with nice conditions. Also, assume that there exists $u^*\in\mathcal{U}$ such that $u^*=\argmax_{u\in\mathcal{U}}F(u,u^*)$. 
    Then, by iterating the update rule $u\leftarrow \mathrm{PD}_F(u;u'=u,\eta)$, $u$ converges to $u^*$ for a small $\eta>0$. Moreover, for any  $N_t>0$, by iterating the update rule with $N_t$ partial derivative steps with a fixed behavior, \ie, $u\leftarrow \left(\mathrm{PD}_F(\cdot ; u'=u,\eta)\right)^{N_t}\!(u)$, $u$ converges to $u^*$ for a small $\eta>0$.
\end{theorem}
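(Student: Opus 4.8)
The plan is to read both update rules as explicit (forward-Euler) discretizations of a single vector field and to exhibit $u^*$ as its unique attracting fixed point. Write $F_1$ for the partial gradient of $F$ in its first argument and set $g(u)\coloneqq F_1(u,u)$, so that the first rule is exactly $u_{k+1}=u_k+\eta\,g(u_k)$. The hypothesis $u^*=\argmax_u F(u,u^*)$ gives the first-order stationarity $F_1(u^*,u^*)=0$, i.e. $g(u^*)=0$, so $u^*$ is a fixed point of the first update. I would spell out the ``nice conditions'' as: (i) $F(\cdot,u')$ is $\mu$-strongly concave uniformly in $u'$; (ii) $F_1$ is $L$-Lipschitz; and (iii) the cross second derivative is bounded, $\|\partial_{u'}F_1\|\le\beta$ with $\beta<\mu$ (equivalently, the best-response map $\mathrm{BR}(u')\coloneqq\argmax_u F(u,u')$ is a $(\beta/\mu)$-contraction, whose fixed point is $u^*$).

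For the first rule I would argue a direct one-step contraction. Since $g(u^*)=0$, subtract to get $u_{k+1}-u^*=(u_k-u^*)+\eta\,(g(u_k)-g(u^*))$ and split $g(u_k)-g(u^*)=[F_1(u_k,u_k)-F_1(u^*,u_k)]+[F_1(u^*,u_k)-F_1(u^*,u^*)]$. Strong concavity (i) controls the inner product of the first bracket with $u_k-u^*$ by $-\mu\|u_k-u^*\|^2$, while the cross bound (iii) bounds the second bracket in norm by $\beta\|u_k-u^*\|$; together with the $(L+\beta)$-Lipschitzness of $g$ this yields
\[ \|u_{k+1}-u^*\|^2 \le \bigl(1-2\eta(\mu-\beta)+\eta^2(L+\beta)^2\bigr)\,\|u_k-u^*\|^2. \]
Because $\mu>\beta$, the bracketed factor is strictly below $1$ for every $\eta\in(0,\,2(\mu-\beta)/(L+\beta)^2)$, giving geometric convergence $u_k\to u^*$.

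For the second rule I would reduce to the same estimate by expanding the inner loop. Fixing the behavior at the current iterate $u$ and composing $N_t$ steps of $v\mapsto v+\eta F_1(v,u)$ yields an outer map $T_{N_t,\eta}(u)$; a Taylor expansion in $\eta$, using the Lipschitz bounds to control the remainder uniformly on a neighborhood of $u^*$, gives $T_{N_t,\eta}(u)=u+\eta N_t\,g(u)+O(\eta^2)$. Thus one outer iteration is a forward-Euler step for $\dot u=g(u)$ with effective step $\eta N_t$ plus a higher-order perturbation, and $T_{N_t,\eta}(u^*)=u^*$ since gradient steps fix the maximizer. Repeating the computation above with $\eta N_t$ in place of $\eta$ and absorbing the $O(\eta^2)$ term into the quadratic-in-$\eta$ slack shows $\|T_{N_t,\eta}(u)-u^*\|\le c\,\|u-u^*\|$ with $c<1$ for all sufficiently small $\eta$, hence convergence.

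The main obstacle I anticipate is condition (iii), the cross-coupling bound $\beta<\mu$: strong concavity of each subproblem alone does \emph{not} make the self-referential map $u\mapsto F_1(u,u)$ contractive, and without controlling how moving the behavior $u'$ displaces the maximizer, the symmetric part of $\nabla g(u^*)=\partial_u F_1+\partial_{u'}F_1$ need not be negative definite. Identifying this as the right ``nice condition'' and verifying that it forces $\nabla g(u^*)$ to be stable (Hurwitz, indeed with negative-definite symmetric part) is the crux; the secondary technical point is controlling the $O(\eta^2)$ remainder of the $N_t$-step map uniformly so that both rules share one convergence criterion. I would also flag that, as stated, convergence is local around $u^*$ unless (i)--(iii) hold globally, in which case the contraction is global.
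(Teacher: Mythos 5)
Your hypotheses (i)--(iii) are exactly the paper's ``nice conditions'' in monotonicity language: the paper assumes smoothness, uniform strong concavity of $F(\cdot,u')$ via Hessian eigenvalue bounds $L\le\lambda(\nabla_1^2F)\le M<0$, and a cross-coupling bound $\sigma_{\max}(\nabla_2\nabla_1F)\le M'<-M$, which matches your $\beta<\mu$. Your overall strategy is sound but genuinely different from the paper's. For the first rule, the paper never argues decay of $\|u_k-u^*\|$ directly; instead it proves a two-variable contraction estimate, $\|\mathrm{PD}_F(u_2;u_2',\eta)-\mathrm{PD}_F(u_1;u_1',\eta)\|_2\le(1+\eta M+\eta M')\|u_2'-u_1'\|_2$ whenever $\|u_2-u_1\|_2\le\|u_2'-u_1'\|_2$, via the fundamental theorem of line integrals and spectral-norm bounds, from which both update maps are Banach contractions and convergence (plus uniqueness of the fixed point) follows. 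Your one-step estimate $\|u_{k+1}-u^*\|^2\le\bigl(1-2\eta(\mu-\beta)+\eta^2(L+\beta)^2\bigr)\|u_k-u^*\|^2$ is a correct and arguably more elementary alternative for that rule, though it buys only convergence to the assumed $u^*$ rather than a self-contained contraction-mapping statement.

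The place where your sketch has a real gap is the $N_t$-step rule. The paper handles it exactly, by induction over the inner steps using the same two-variable contraction (each inner iterate pair stays within $\beta\|u_2-u_1\|_2$), with no expansion in $\eta$ and no remainder to control. Your Taylor/perturbed-Euler reduction writes $T_{N_t,\eta}(u)=u+\eta N_t\,g(u)+O(\eta^2)$ with a remainder controlled ``uniformly on a neighborhood of $u^*$''; but a remainder that is merely uniformly $O(\eta^2)$ only yields convergence to an $O(\eta)$ ball around $u^*$, not the claimed contraction $\|T_{N_t,\eta}(u)-u^*\|\le c\,\|u-u^*\|$ with $c<1$. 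What you need is that the remainder is $O(\eta^2\|u-u^*\|)$. This is true under the stated conditions --- since $g(u^*)=F_1(u^*,u^*)=0$ and $F_1$ is Lipschitz, every inner iterate started at $u$ stays within $O(\eta\|g(u)\|)=O(\eta\|u-u^*\|)$ of $u$, so the accumulated deviation from the ideal step $\eta N_t g(u)$ is proportional to $\|u-u^*\|$ --- but it must be stated and proved; it is precisely the ``secondary technical point'' you flagged, and without it the final absorption step does not go through. Also note that under the paper's globally assumed conditions no localization caveat is needed: both arguments are global.
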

We first introduce the intuition of the proposed update rules. 
Assume that $F:\mathcal{U}\times \mathcal{U}\to\reals $ is a smooth function and $F_{u'}\coloneqq F(\cdot, u'):\mathcal{U}\to\reals$ is a strongly concave function on $\mathcal{U}$ for any $u'\in\mathcal{U}$.
Under this assumption, if $u,u'\in\mathcal{U}$ satisfies 
\begin{equation}
    \left.\frac{\partial}{\partial u} F(u, u')\right|_{u'=u} = 0,
    \label{eq:cond0}
\end{equation}
we have the following equality:
$$(\nabla F_{u})(u) = \left.\frac{\partial}{\partial u} F(u, u')\right|_{u'=u} = 0.$$
Due to the strong concavity, $F_u$ has a unique maximizer and we finally get the desired property as following:
$$u=\argmax_{v\in\mathcal{U}} F_u(v) = \argmax_{v\in\mathcal{U}}F(v,u).$$
Inspired by \citet{fixedpoint}, we design two update rules, whose fixed point satisfies \Cref{eq:cond0}, based on partial derivative steps in \Cref{def:partialapp}.
For simplicity, we introduce notations $\nabla_1 F$, $\nabla_2F$, $\nabla_1^2 F$, and $\nabla_2\nabla_1 F$ as 
$$(\nabla_1 F)(u,u') = \left(\frac{\partial}{\partial u} F\right)(u,u'), (\nabla_2 F)(u,u') = \left(\frac{\partial}{\partial u'} F\right)(u,u'),$$
 $$   (\nabla_1^2 F)(u,u') = \left(\frac{\partial^2}{\partial u^2} F\right)(u,u'), (\nabla_2\nabla_1 F)(u,u') = \left(\frac{\partial^2}{\partial u'\partial u}  F\right)(u,u').$$\\
From now on, we state the conditions on $F$ required in our proof for justifying the convergence of the update rules as follows.\\

\textbf{Conditions ($*$):}
\begin{enumerate}
    \item $F$ is a smooth function.
    \item $F_{u'}$ is a strongly concave function and the eigenvalues of the hessian $(\nabla^2 F_{u'})=\nabla_1^2 F(u,u')$ is uniformly bounded by two negative values $L\le M<0$, \ie, $$L \le \lambda_\text{min}\left(\nabla_1^2 F(u,u')\right) \le  \lambda_\text{max}\left(\nabla_1^2 F(u,u')\right) \le M < 0,$$
    for all $u, u'\in\mathcal{U}$.
    \item 
    The square matrix of second order derivatives $\nabla_2\nabla_1 F(u,u')$ has singular values bounded above by $M'<-M$, \ie,
    $$ 0 \le \sigma_\text{min}\left(\nabla_2\nabla_1 F(u,u')\right) \le  \sigma_\text{max}\left(\nabla_2\nabla_1 F(u,u')\right) \le M' < -M \le -L,$$
    for all $u,u'\in\mathcal{U}$.
\end{enumerate}
Assuming these three conditions ($*$) on $F$, we prove the following proposition first.
\begin{proposition}
\label{prop:contract}
If $F:\mathcal{U}\times\mathcal{U}\to\reals$ satisfies three conditions ($*$), there exists an $\eta >0$ and $0<\beta < 1$ such that the partial derivative step  $\mathrm{PD}_F(u;u',\eta)$ satiesfies the following property:
$$\|\mathrm{PD}_F(u_2;u_2',\eta) - \mathrm{PD}_F(u_1;u_1',\eta)\|_2 \le \beta\|u_2'-u_1'\|_2 $$
for all $u_1, u_1',u_2,u_2'\in\mathcal{U}$ such that $\|u_2-u_1\|\le \|u_2'-u_1'\|$. 
\end{proposition}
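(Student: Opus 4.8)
The plan is to show the partial derivative step is a contraction in the behavior argument by a single path-integral estimate, reading off everything from the spectral bounds in Conditions ($*$). First I would set $g(u,u') \coloneqq \mathrm{PD}_F(u;u',\eta) = u + \eta\, \nabla_1 F(u,u')$, which is $C^1$ by Condition 1 (smoothness makes $\nabla_1 F$ continuously differentiable), and estimate the difference $g(u_2,u_2') - g(u_1,u_1')$ by integrating the derivative of $g$ along the straight segment $t\mapsto (u_1 + t(u_2-u_1),\, u_1' + t(u_2'-u_1'))$, $t\in[0,1]$ (using that $\mathcal{U}$ is convex). The two Jacobian blocks are $\partial_u g = I + \eta\, \nabla_1^2 F$ and $\partial_{u'} g = \eta\, \nabla_2\nabla_1 F$, so
$$ g(u_2,u_2') - g(u_1,u_1') = \int_0^1 \left[(I + \eta\, \nabla_1^2 F)(u_2-u_1) + \eta\,(\nabla_2\nabla_1 F)(u_2'-u_1')\right]dt, $$
with the second-order derivatives evaluated at the running point of the path.

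Next I would bound the two integrand pieces using the uniform spectral information. Since $\nabla_1^2 F$ is symmetric with eigenvalues in $[L,M]$, the eigenvalues of $I + \eta\, \nabla_1^2 F$ lie in $[1+\eta L,\, 1+\eta M]$; for $0 < \eta < -1/L$ these are all positive (note $|L|\ge|M|$, so this also forces $1+\eta M>0$) and bounded above by $1+\eta M < 1$, giving $\|I + \eta\,\nabla_1^2 F\|_2 \le 1+\eta M$. Condition 3 directly yields $\|\nabla_2\nabla_1 F\|_2 = \sigma_{\max}(\nabla_2\nabla_1 F) \le M'$. Taking norms inside the integral and applying these uniform bounds produces
$$ \|g(u_2,u_2') - g(u_1,u_1')\|_2 \le (1+\eta M)\,\|u_2-u_1\|_2 + \eta M'\,\|u_2'-u_1'\|_2. $$

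Finally I would invoke the hypothesis $\|u_2-u_1\|_2 \le \|u_2'-u_1'\|_2$. Because $1+\eta M > 0$ for the chosen $\eta$, the coefficient of $\|u_2-u_1\|_2$ is positive, so that norm may be replaced by the larger $\|u_2'-u_1'\|_2$, yielding
$$ \|g(u_2,u_2') - g(u_1,u_1')\|_2 \le \bigl(1 + \eta(M + M')\bigr)\,\|u_2'-u_1'\|_2. $$
The crux — and the reason Condition 3 is stated as $M' < -M$ — is precisely that then $M + M' < 0$, so $\beta \coloneqq 1 + \eta(M+M')$ is strictly below $1$ for every $\eta>0$; choosing $\eta < \min\{-1/L,\, -1/(M+M')\}$ also keeps $\beta > 0$, delivering $0 < \beta < 1$ as required. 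I expect the only delicate point to be the sign bookkeeping that both conditions cooperate: Condition 2 supplies the decay $1+\eta M<1$ contributed by the first argument, while Condition 3 controls how strongly a perturbation in the behavior argument feeds back through $\nabla_2\nabla_1 F$, and the inequality $M' < -M$ is exactly what makes that feedback strictly dominated by the contraction. I would therefore track the positivity of $1+\eta M$ carefully so that the replacement step using the hypothesis is valid.
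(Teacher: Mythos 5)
Your proof is correct and follows essentially the same route as the paper's: both reduce the difference of the two partial-derivative steps to the Jacobian blocks $I+\eta\,\nabla_1^2 F$ and $\eta\,\nabla_2\nabla_1 F$ via the fundamental theorem of calculus, bound them by $1+\eta M$ and $\eta M'$ using Conditions 2 and 3, and then invoke $\|u_2-u_1\|_2\le\|u_2'-u_1'\|_2$ to arrive at the same contraction factor $\beta = 1+\eta(M+M')$. The only cosmetic difference is that you integrate once along the joint segment in $\mathcal{U}\times\mathcal{U}$ while the paper adds and subtracts $\nabla_1 F(u_1,u_2')$ and uses two one-variable line integrals, and your admissible step-size window $\eta<\min\{-1/L,\,-1/(M+M')\}$ differs harmlessly from the paper's $\eta<-1/(2L)$.
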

\begin{proof}
    \begin{align*}
        &\|\mathrm{PD}_F(u_2;u_2',\eta) - \mathrm{PD}_F(u_1;u_1',\eta)\|_2\\
        &= \left\|\left(u_2 + \eta \nabla_1F(u_2,u_2')\right) - 
        \left(u_1 + \eta \nabla_1F(u_1,u_1')\right)\right\|_2\\
        &= \left\| (u_2 - u_1) + \eta \left(\nabla_1 F(u_2,u_2') - \nabla_1 F(u_1,u_1')\right)
        \right\|_2\\
        &= \left\| (u_2 - u_1) + \eta \left(\nabla_1 F(u_2,u_2') - 
        \nabla_1 F(u_1, u_2') + \nabla_1 F(u_1, u_2') - 
        \nabla_1 F(u_1, u_1')\right)
        \right\|_2\\
        &= \left\| (u_2 - u_1) + \eta \left(\nabla_1 F(u_2,u_2') - 
        \nabla_1 F(u_1, u_2')\right) + \eta \left(\nabla_1 F(u_1, u_2') - 
        \nabla_1 F(u_1, u_1')\right)
        \right\|_2\\
        &= \left\|
        (u_2 - u_1) + \eta \int_{0}^1 \nabla_1^2 F(u_1 + t(u_2-u_1), u_2')(u_2-u_1)dt + \eta \int_{0}^1 \nabla_2\nabla_1 F(u_1, u_1' + t(u_2'-u_1'))(u_2'-u_1')dt
        \right\|_2\\
        &\le \Bigg\|
        \underbrace{\left(I + \eta \int_{0}^1 \nabla_1^2 F(u_1 + t(u_2-u_1), u_2')dt\right)}_{\eqqcolon A}(u_2-u_1)\Bigg\|_2 \!\!+ \Bigg\|\underbrace{\left(\eta \int_{0}^1 \nabla_2\nabla_1 F(u_1, u_1' + t(u_2'-u_1'))dt\right)}_{\eqqcolon B}(u_2'-u_1')
        \Bigg\|_2.\\
    \end{align*}
The last inequality is from the triangle inequality and the last equality is from the fundamental theorem of line integrals.
By utilizing the notion of the spectral norm, we have
$$\|A(u_2-u_1)\|_2 \le \sigma_\text{max}(A)\|u_2-u_1\|_2 = \max(|\lambda_\text{min}(A)|,|\lambda_\text{max}(A)|)\|u_2-u_1\|_2,$$
$$\|B(u_2'-u_1')\|_2 \le \sigma_\text{max}(B)\|u_2-u_1\|_2.$$
Due to the second condition in ($*$), $A$ has eigenvalues bounded by $1+\eta L$ and $1+\eta M$, \ie, $$1+\eta L \le \lambda_{\text{min}}(A) \le \lambda_{\text{max}}(A) \le 1+\eta M.$$
Hence, for $0< \eta < -\frac{1}{2L}$, $A$ is positive definite since $\lambda_{\text{min}}(A) \ge 1+\eta L >\frac{1}{2}>0$. Thus, we have 
\begin{equation}
\sigma_{\text{max}}(A) = \lambda_\text{max}(A) \le 1 + \eta M. \nonumber
\end{equation}
Moreover, utilizing the third condition in ($*$), $B$ has singular values bounded above by $\eta M'$, \ie,
\begin{equation}
\sigma_{\text{max}}(B) \le \eta M'. \nonumber
\end{equation}
As a result, we get 
\begin{align*}
\|\mathrm{PD}_F(u_2;u_2',\eta) - &\mathrm{PD}_F(u_1;u_1',\eta)\|_2 
\\&\le 
\|A(u_2-u_1)\|_2 + \|B(u_2'-u_1')\|_2\\
&\le \sigma_\text{max}(A) \|u_2-u_1\|_2 + \sigma_\text{max}(B) \|u_2'-u_1'\|_2\\
&\le (\sigma_\text{max}(A)+ \sigma_\text{max}(B)) \|u_2'-u_1'\|_2 &&(\because \|u_2-u_1\|_2\le \|u_2'-u_1'\|_2)\\
&\le (1 + \eta M + \eta M') \|u_2'-u_1'\|^2.
\end{align*}
Since $0<M'<-M \le -L$, we have $0<1+\eta M + \eta M'<1$ for $0 <\eta < -\frac{1}{2L}$.
Hence for $0 < \eta < -\frac{1}{2L}$ and $\beta = (1+\eta M +\eta M') < 1$, we proved that the partial derivative step satisfies the desired property, concluding the proof.
\end{proof}
From \Cref{prop:contract}, we can derive two important corollaries.
\begin{corollary}
\label{cor:contract1}
    If $F:\mathcal{U}\times\mathcal{U}\to\reals$ satisfies three conditions ($*$), there exists an $\eta >0$ and $0<\beta < 1$ such that the first update rule of \Cref{thm:upd},  $u\leftarrow \mathrm{PD}_F(u;u'=u,\eta)$, 
is a $\beta$-contraction mapping, \ie,
$$\|\mathrm{PD}_F(u_2;u_2,\eta) - \mathrm{PD}_F(u_1;u_1,\eta)\|_2 \le \beta\|u_2-u_1\|_2 $$
for all $u_1, u_2\in\mathcal{U}$.
\end{corollary}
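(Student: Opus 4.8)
The plan is to obtain this corollary as an immediate specialization of \Cref{prop:contract}, which I take as already established. The first update rule of \Cref{thm:upd} is precisely the map $u\mapsto \mathrm{PD}_F(u;u'=u,\eta)$, obtained by forcing the behavior variable to coincide with the point being updated. So I would simply feed \Cref{prop:contract} the diagonal inputs $u_1'=u_1$ and $u_2'=u_2$ and read off the conclusion.

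The one point to verify is that the hypothesis of \Cref{prop:contract}, namely $\|u_2-u_1\|_2\le\|u_2'-u_1'\|_2$, is met under this substitution. With $u_1'=u_1$ and $u_2'=u_2$ it becomes $\|u_2-u_1\|_2\le\|u_2-u_1\|_2$, which holds with equality for every pair $u_1,u_2\in\mathcal{U}$. Hence the norm restriction appearing in \Cref{prop:contract} is vacuous in this instance, and the proposition's conclusion applies to all $u_1,u_2\in\mathcal{U}$ without exception.

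Substituting the diagonal inputs into the conclusion of \Cref{prop:contract} then yields
$$\|\mathrm{PD}_F(u_2;u_2,\eta)-\mathrm{PD}_F(u_1;u_1,\eta)\|_2\le\beta\|u_2-u_1\|_2,$$
with the very same $\eta>0$ and $0<\beta<1$ furnished by the proposition. This is exactly the asserted $\beta$-contraction property, completing the argument. I expect essentially no obstacle here: \Cref{prop:contract} was deliberately phrased in the more general off-diagonal form, carrying the constraint $\|u_2-u_1\|_2\le\|u_2'-u_1'\|_2$, precisely so that the diagonal case needed for this corollary (and, presumably, the analogous $N_t$-step contraction statement in the follow-up corollary) falls out with no additional computation.
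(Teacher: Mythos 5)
Your proposal is correct and is exactly the paper's own proof: the paper likewise obtains \Cref{cor:contract1} by specializing \Cref{prop:contract} to the diagonal case $u_1'=u_1$, $u_2'=u_2$, where the constraint $\|u_2-u_1\|_2\le\|u_2'-u_1'\|_2$ holds trivially with equality. Your explicit verification of that hypothesis is a detail the paper leaves implicit, but the argument is the same.
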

\begin{proof}
    This corollary corresponds to \Cref{prop:contract} of the case that $u_1'=u_1$ and $u_2'=u_2$.
\end{proof}
\begin{corollary}
\label{cor:contract2}
    If $F:\mathcal{U}\times\mathcal{U}\to\reals$ satisfies three conditions ($*$), there exists an $\eta >0$ and $0<\beta < 1$ such that the second update rule of \Cref{thm:upd},  $u\leftarrow (\mathrm{PD}_F(\cdot;u'=u,\eta))^{N_t}(u)$, 
is a $\beta$-contraction mapping, \ie, 
$$\|(\mathrm{PD}_F(\cdot;u'=u_2,\eta))^{N_t}(u_2) - (\mathrm{PD}_F(\cdot;u'=u_1,\eta))^{N_t}(u_1)\|_2 \le \beta\|u_2-u_1\|_2. $$
\begin{proof}
For any given $u_1, u_2\in\mathcal{U}$, we define the following recurrence relation:
\begin{enumerate}
    \item Initialize $u^{(0)}_1 = u_1$, $u^{(0)}_2 = u_2$.
    \item For $i=0, ..., N_t-1$, 
    $$u^{(i+1)}_1 = \mathrm{PD}_F(u^{(i)}_1;u_1,\eta),~~  u^{(i+1)}_2 = \mathrm{PD}_F(u^{(i)}_2;u_2,\eta).$$
\end{enumerate}
Let $0<\beta<1$ and $\eta>0$ be the constants that satisfies the property in \Cref{prop:contract}.
Then, we prove that $\|u_2^{(i)} - u_1^{(i)}\|_2 \le \beta \|u_2-u_1\|_2$ for all $i=1,\ldots,N_t$ by the mathematical induction on $i$.
We considered the initial case of $i=1$ in \Cref{cor:contract1}.
Next, assume that $\|u_2^{(i)} - u_1^{(i)}\|_2 \le \beta \|u_2-u_1\|_2$ for some $i\ge 1$. 
By plugging in $u_1 \leftarrow u_1^{(i)},u_2 \leftarrow u_2^{(i)},u_1'\leftarrow u_1, u_2'\leftarrow u_2$ to \Cref{prop:contract}, we have 
$$\|u^{(i+1)}_1 - u^{(i+1)}_2\|_2 = \|\mathrm{PD}_F(u^{(i)}_1;u_1,\eta) -  \mathrm{PD}_F(u^{(i)}_2;u_2,\eta)\|_2\le \beta \|u_2-u_1\|_2$$
since $\|u_2^{(i)} - u_1^{(i)}\|_2 \le \beta \|u_2-u_1\|_2 < \|u_2-u_1\|_2$.
Thus, we complete the proof by mathematical induction on $i$.
\end{proof}
\end{corollary}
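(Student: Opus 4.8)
The plan is to unroll the $N_t$-fold composition and reduce the whole statement to a single application of \Cref{prop:contract} at each inner step, glued together by an induction on the step index. Concretely, for fixed $u_1, u_2 \in \mathcal{U}$ I would introduce the two iterate sequences generated with \emph{frozen} behaviors: set $u_1^{(0)} = u_1$ and $u_2^{(0)} = u_2$, and for $i = 0, \ldots, N_t - 1$ define $u_1^{(i+1)} = \mathrm{PD}_F(u_1^{(i)}; u_1, \eta)$ and $u_2^{(i+1)} = \mathrm{PD}_F(u_2^{(i)}; u_2, \eta)$. Then $u_1^{(N_t)}$ and $u_2^{(N_t)}$ are exactly the two quantities appearing in the claimed contraction, and I would work with the same $\eta > 0$ and $\beta \in (0,1)$ furnished by \Cref{prop:contract}.

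The statement to prove by induction on $i$ is that $\|u_2^{(i)} - u_1^{(i)}\|_2 \le \beta \|u_2 - u_1\|_2$ for every $i = 1, \ldots, N_t$. The base case $i = 1$ is precisely \Cref{cor:contract1}, the single-step contraction obtained from \Cref{prop:contract} by identifying the behavior arguments with the state arguments. For the inductive step, suppose the bound holds at index $i$. Since $\beta < 1$, the inductive hypothesis yields $\|u_2^{(i)} - u_1^{(i)}\| \le \beta \|u_2 - u_1\| < \|u_2 - u_1\|$, which is exactly the side condition required to invoke \Cref{prop:contract} with state arguments $u_1^{(i)}, u_2^{(i)}$ and behavior arguments $u_1, u_2$. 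Applying the proposition gives $\|u_2^{(i+1)} - u_1^{(i+1)}\|_2 = \|\mathrm{PD}_F(u_2^{(i)}; u_2, \eta) - \mathrm{PD}_F(u_1^{(i)}; u_1, \eta)\|_2 \le \beta \|u_2 - u_1\|_2$, closing the induction; evaluating at $i = N_t$ delivers the corollary.

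The crux — and the reason this is a genuine corollary rather than routine telescoping — is that \Cref{prop:contract} bounds the output difference by $\beta$ times the \emph{behavior} difference $\|u_2' - u_1'\|_2$, not the state difference. Because the behaviors are held fixed at $u_1$ and $u_2$ across all $N_t$ inner steps, the right-hand side stays pinned at $\beta \|u_2 - u_1\|_2$ and does \emph{not} compound to $\beta^{N_t}$; the contraction factor is a single $\beta$ independent of $N_t$. The one thing to track carefully is that the state difference never exceeds the behavior difference, so that the hypothesis $\|u_2^{(i)} - u_1^{(i)}\| \le \|u_2 - u_1\|$ of \Cref{prop:contract} stays valid at each step — and this is guaranteed automatically by the inductive bound together with $\beta < 1$. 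I expect verifying that this side condition propagates to be the only subtle point, and the induction is the cleanest device to discharge it.
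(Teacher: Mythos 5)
Your proposal is correct and follows essentially the same route as the paper's proof: the same frozen-behavior iterate sequences, the same induction on the step index with \Cref{cor:contract1} as the base case, and the same invocation of \Cref{prop:contract} with state arguments $u_1^{(i)}, u_2^{(i)}$ and behavior arguments $u_1, u_2$, using $\beta < 1$ to propagate the side condition. Your remark that the bound stays pinned at $\beta\|u_2 - u_1\|_2$ rather than compounding is exactly the mechanism at work in the paper's argument as well.
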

By utilizing these corollaries, we prove the convergence of our proposed update rules when $F$ satisfies three conditions ($*$).

\emph{Proof of \Cref{thm:upd}.} 
For simplicity we notate two update rules by $$\mathrm{upd}_1(u) = \mathrm{PD}_F(u;u'=u,\eta)~~\text{and}~~\mathrm{upd}_2(u) = (\mathrm{PD}_F(\cdot;u'=u,\eta))^{N_t}(u).$$
From the condition in \Cref{thm:upd}, there exists the $u^*\in\mathcal{U}$ such that $u^*=\argmax_{u\in\mathcal{U}} F(u,u^*) = \argmax_{u\in\mathcal{U}} F_{u^*}(u)$.
Hence, $u^*$ satisfies $\left.\frac{\partial}{\partial u }F(u,u^*)\right|_{u=u^*} = 0$. 
Thus, $u^*$ is a fixed point of $\mathrm{upd}_1$ and $\mathrm{upd}_2$. By \Cref{cor:contract1}, there exists $0<\beta<1$ and $\eta>0$ such that $\mathrm{upd}_1$ is a $\beta$-contraction mapping. For any $u\in\mathcal{U}$, we have
$$\|\mathrm{upd}_1(u) - u^*\|_2 = \|\mathrm{upd}_1(u) - \mathrm{upd}_1(u^*) \|_2 \le \beta \|u-u^*\|_2.$$
Thus, by repeatedly applying the update rule $i$ times, we get
$$\|(\mathrm{upd}_1)^i(u) - u^*\|_2 \le \beta^i \|u-u^*\|_2,$$
which converges to $0$ as $i$ goes infinity since $0<\beta <1$.
Also, for the second update rule, we also have $0<\beta<1$ and $\eta>0$ such that $\mathrm{upd}_2$ is a $\beta$-contraction mapping by \Cref{cor:contract2}.
By the same logic, we have 
$$\|(\mathrm{upd}_2)^i(u) - u^*\|_2 \le \beta^i \|u-u^*\|_2,$$
which converges to $0$ as $i$ goes infinity.
Hence, for both update rules, we prove the convergence to $u^*$, finishing the proof.
\qed
\newpage
\subsection{Proof of \Cref{lem:greedypolicy}, \Cref{prop:setgrad}, and \Cref{lem:easy}}
For the proof of these statements, we assume that the set-conditioned policy model has enough representative power to model the policies so that any policy that generate candidates given any set $B$ is corresponding to $\pi_{\theta}^\text{set}$ for some $\theta\in\Theta$. 

In \Cref{alg:exactgreedy} and \Cref{alg:samplesetpolicy}, we inherently assume the unique maximizer at each step for ease of understanding.
However, rigorously, there can be multiple maximizers at each step.
In such cases, we assume that \Cref{alg:exactgreedy} and \Cref{alg:samplesetpolicy} select the candidate uniformly at random among the maximizers at each step.
Consequently, there can be more than one possible solutions from \Cref{alg:exactgreedy}. 
To address this, we define \emph{exact greedy solutions} as follows:
\begin{definition}
    A $n$-subset $B_n\subseteq \mathcal{X}$ is an exact greedy solution if there exists $\xvec_0,\ldots,\xvec_{n-1}\in\mathcal{X}$ that satisfies $$B_n=\{\xvec_0,\ldots,\xvec_{n-1}\}, ~~\text{and} ~~ \forall~ 0\le t \le n-1,~~~
    \xvec_{t}\in\argmax_{\xvec'\in\mathcal{X}}\Delta_a(\xvec'\mid\{\xvec_0,\ldots,\xvec_{t-1}\}),$$
where $\{\xvec_0,\ldots, \xvec_{t-1}\}$ denotes the empty set $\emptyset $ if $t=0$.
\label{def:exactsol}
\end{definition}
In other words, an exact greedy solution is an $n$-subset which can be sampled from \Cref{alg:exactgreedy} with a positive probability. Now, we recall and prove \Cref{lem:greedypolicy} by contradiction as following.
\label{sec:lemsproof}
\begin{lemma}
\label{lem:greedypolicyapp}
$\mathrm{GS}(a,\pi_{\theta^*}^\text{set},n,l)$ samples exact greedy solutions almost surely if $\pi_{\theta^*}$ is the greedy policy.
\end{lemma}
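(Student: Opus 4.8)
The plan is to argue by contradiction: assume $\pi_{\theta^*}^\text{set}$ is the greedy policy yet $\mathrm{GS}(a,\pi_{\theta^*}^\text{set},n,l)$ fails to return exact greedy solutions almost surely, and then exhibit a strictly better response to the behavior $\theta^*$, contradicting $\theta^*=\argmax_\theta \mathcal{J}(\theta,\theta^*)$. Throughout I write $g_\theta(B)\coloneqq \mathbb{E}_{\xvec\sim\pi_\theta^\text{set}(\cdot\mid B)}[\Delta_a(\xvec\mid B)]$, so that $\mathcal{J}(\theta,\theta^*)=\frac1n\sum_{k=0}^{n-1}\mathbb{E}_{B\sim\mathrm{GS}(a,\pi_{\theta^*}^\text{set},k,1)}[g_\theta(B)]$, where crucially the behavior is frozen at $\theta^*$, so the distribution over $B$ does not depend on the optimization variable $\theta$. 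The elementary fact I repeatedly use is that for every fixed $B$, $g_\theta(B)\le \max_{\xvec\in\mathcal{X}}\Delta_a(\xvec\mid B)$, with equality if and only if $\pi_\theta^\text{set}(\cdot\mid B)$ is supported on $\argmax_{\xvec}\Delta_a(\xvec\mid B)$ (the maximum is attained for finite $\mathcal{X}$). By the representativeness assumption, the conditionals $\pi_\theta^\text{set}(\cdot\mid B)$ may be chosen independently across $B$, so maximizing $\mathcal{J}(\cdot,\theta^*)$ reduces to maximizing $g_\theta(B)$ separately at each $B$ in the support of the frozen behavior.

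Next I set up the contradiction. Suppose $\mathrm{GS}(a,\pi_{\theta^*}^\text{set},n,l)$ produces a non-greedy trajectory with positive probability. Let $i$ be the smallest index for which, with positive probability, the first $i$ additions $\xvec_0,\ldots,\xvec_{i-1}$ are maximizers of their respective marginal gains while the $(i{+}1)$-th addition $\xvec_i$ is not a maximizer of $\Delta_a(\cdot\mid B_i)$. On this event the prefix $B_0\subset\cdots\subset B_i$ is an exact greedy prefix (\Cref{def:exactsol}), so there is a concrete greedy prefix $B_i$ reached greedily with positive probability at which the deviation occurs. I will establish two facts about this $B_i$: first, since each greedy addition $\xvec_j$ ($j<i$) lies in $\mathrm{supp}(\pi_{\theta^*}^\text{set}(\cdot\mid B_j))$, the same sequence is realizable by the $l=1$ behavior $\mathrm{GS}(a,\pi_{\theta^*}^\text{set},i,1)$, so $B_i$ has positive probability under the $k=i$ behavior distribution; second, because $\xvec_i$ is the $\Delta_a$-best of $l$ i.i.d.\ samples, $\xvec_i$ can be a non-maximizer only if \emph{all} $l$ samples are non-maximizers, which forces $\pi_{\theta^*}^\text{set}(\cdot\mid B_i)$ to place positive mass on non-maximizers and hence $g_{\theta^*}(B_i)<\max_{\xvec}\Delta_a(\xvec\mid B_i)$.

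Finally I construct the improving response. Using representativeness, define $\theta'$ to agree with $\theta^*$ on every conditional except at $B_i$, where I replace $\pi_{\theta'}^\text{set}(\cdot\mid B_i)$ by any distribution supported on $\argmax_{\xvec}\Delta_a(\xvec\mid B_i)$; this keeps $g_{\theta'}(B)=g_{\theta^*}(B)$ for all $B\neq B_i$ and raises $g$ at $B_i$ to the maximum. Since $i$-subsets appear only in the $k=i$ summand of $\mathcal{J}$, and $B_i$ has positive probability there, this yields $\mathcal{J}(\theta',\theta^*)-\mathcal{J}(\theta^*,\theta^*)=\tfrac1n\,\mathbb{P}[B=B_i]\,(\max_{\xvec}\Delta_a(\xvec\mid B_i)-g_{\theta^*}(B_i))>0$, contradicting $\theta^*=\argmax_\theta\mathcal{J}(\theta,\theta^*)$. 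Hence no positive-probability deviation exists, so $\mathrm{GS}(a,\pi_{\theta^*}^\text{set},n,l)$ samples exact greedy solutions almost surely.

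I expect the main obstacle to be the bookkeeping around the ``first deviation'' and the reachability transfer: defining the minimal deviation index so that the prefix is genuinely greedy, and verifying that a prefix reached by the $l$-sample procedure also lies in the support of the $l=1$ behavior distribution appearing inside $\mathcal{J}$. A secondary technical point is justifying, via the representativeness assumption, the independent single-conditional modification of the policy and confirming it perturbs only the $k=i$ term. An alternative, non-contradiction route is a forward induction: from the equality case of the pointwise bound applied to $\mathcal{J}(\theta^*,\theta^*)=\max_\theta\mathcal{J}(\theta,\theta^*)$ one shows every $B$ in the behavior support has its conditional supported on maximizers, then inducts along the $\mathrm{GS}$ trajectory; I would keep the contradiction version as the primary presentation.
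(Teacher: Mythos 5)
Your proposal is correct and follows essentially the same route as the paper's proof: a contradiction argument that freezes the behavior at $\theta^*$, locates a conditioning set $B$ reachable with positive probability under the $l=1$ behavior $\mathrm{GS}(a,\pi_{\theta^*}^\text{set},\cdot,1)$ at which the policy places mass on non-maximizers, and then modifies the single conditional $\pi^\text{set}(\cdot\mid B)$ (via the representativeness assumption) to obtain $\mathcal{J}(\theta',\theta^*)>\mathcal{J}(\theta^*,\theta^*)$, contradicting \Cref{def:greedypolicy}. The only cosmetic difference is your ``first deviation index'' bookkeeping, which the paper avoids by taking an arbitrary deviation step $t$ (the prefix need not itself be greedy, since reachability under the $l=1$ behavior only requires each prefix element to lie in the support of the corresponding conditional).
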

\begin{proof}
(Proof by contradiction.)
Let $\pi_{\theta^*}^\text{set}$ be a greedy policy. According to \Cref{def:greedypolicy}, we have 
\begin{equation}
   \forall~ \theta\in\Theta,~~~~ \mathcal{J}(\theta^*,\theta^*) \ge \mathcal{J}(\theta,\theta^*).
    \label{eq:contrad}
\end{equation} 
Now, assume the opposite of the desired conclusion, that with a positive probability, $\mathrm{GS}(a,\pi_{\theta^*}^\text{set},n,l)$ generates $\xvec_0,\ldots,\xvec_{n-1}$ sequentially by \Cref{alg:samplesetpolicy} and results in a set $B_n=\{\xvec_0,\ldots,\xvec_{n-1}\}$ that is not an exact greedy solution. 
Then,  $\mathrm{GS}(a,\pi_{\theta^*}^\text{set},n,1)$ also generates $\xvec_0,\ldots,\xvec_{n-1}$ with a positive probability. We denote this probability by $c_1>0$, \ie,
\begin{equation}
    c_1\coloneqq \prod_{i=0}^{n-1} \pi_{\theta^*}^\text{set}(\xvec_{i}\mid \{\xvec_0,\ldots,\xvec_{i-1}\}) > 0.
    \label{eq:defc1}
\end{equation} Since $B_n$ is not an exact greedy solution, there exists at least one $0\le t\le n-1$ such that 
\begin{equation}
    \xvec_t \notin \argmax_{\xvec'\in\mathcal{X}} \Delta_a(\xvec'\mid \{\xvec_0,\ldots,\xvec_{t-1}\}),
    \label{eq:defxt}
\end{equation}
according to \Cref{def:exactsol}. For that $t$, let $\pi_\phi^\text{set}$ be the policy that replicates $\pi_{\theta^*}^\text{set}$ for all set $B\subset\mathcal{X}$ except for $\{\xvec_0,\ldots, \xvec_{t-1}\}$, that satisfies
\begin{align*}
    \forall \xvec\in\mathcal{X},~~~~\pi_\phi^\text{set}( \xvec \mid B) = \begin{cases}
        \pi_{\theta^*}^\text{set}( \xvec \mid B) & \text{if $B\neq \{\xvec_0,\ldots,\xvec_{t-1}\}$},\\
        \mathbf{1}_{\{\mathbf{x}_t^*\}}(\xvec) & \text{if $B=\{\xvec_0,\ldots,\xvec_{t-1}\}$},
    \end{cases}
\end{align*}
where $\mathbf{x}_t^*$ be any maximizer of the marginal gain at $t$-th step, \ie, $\mathbf{x}_t^*\in \argmax_{\xvec'\in\mathcal{X}} \Delta_a(\xvec'\mid \{\xvec_0,\ldots,\xvec_{t-1}\})$.
Next, we define the difference in expected return between $\phi$ and $\theta^*$ given a set $B$ as following:
$$C(B)\coloneqq \mathbb{E}_{\xvec\sim\pi_{\phi}^\text{set}(\cdot\mid B)}[\Delta_a(\xvec\mid B)] -     \mathbb{E}_{\xvec\sim\pi_{\theta^*}^\text{set}(\cdot\mid B)}[\Delta_a(\xvec\mid B)].$$
If $B\neq \{\xvec_0,\ldots,\xvec_{t-1}\}$, $\pi_\phi^\text{set}( \xvec \mid B)=\pi_{\theta^*}^\text{set}( \xvec \mid B)$ for all $\xvec\in\mathcal{X}$. Hence, we have a zero difference $C(B)=0$.\\
If $B= \{\xvec_0,\ldots,\xvec_{t-1}\}$, $\pi_\phi^\text{set}( \xvec \mid B)=\mathbf{1}_{\{\mathbf{x}_t^*\}}(\xvec)=\begin{cases}1 & \text{if}~\xvec=\xvec_t^*\\0&\text{otherwise} \end{cases}$ \, for all $\xvec\in\mathcal{X}$. Hence, we have
\begin{align}
 c_2&\coloneqq C(B)\nonumber\\
    &= \Delta_a(\xvec^*_t\mid B) -     \mathbb{E}_{\xvec\sim\pi_{\theta^*}^\text{set}(\cdot\mid B)}[\Delta_a(\xvec\mid B)]\nonumber\\
    &= \left(\max_{\xvec'\in\mathcal{X}}\Delta_a(\xvec'\mid B)\right) -     \mathbb{E}_{\xvec\sim\pi_{\theta^*}^\text{set}(\cdot\mid B)}[\Delta_a(\xvec\mid B)] &&\left(\because \mathbf{x}_t^*\in \argmax_{\xvec'\in\mathcal{X}} \Delta_a(\xvec'\mid B)\right) \nonumber\\
    &=     \mathbb{E}_{\xvec\sim\pi_{\theta^*}^\text{set}(\cdot\mid B)}\left[\left(\max_{\xvec'\in\mathcal{X}}\Delta_a(\xvec'\mid B)\right) - \Delta_a(\xvec\mid B)\right]
    &&\left(\because\left(\max_{\xvec'\in\mathcal{X}}\Delta_a(\xvec'\mid B)\right) \text{is a constant given $B$}\right)\nonumber\\
    &\ge \pi_{\theta^*}^\text{set}(\xvec_t\mid B) \left[\left(\max_{\xvec'\in\mathcal{X}}\Delta_a(\xvec'\mid B)\right) - \Delta_a(\xvec_t\mid B)\right]. &&\left(\because \forall \xvec\in\mathcal{X}, \left(\max_{\xvec'\in\mathcal{X}}\Delta_a(\xvec'\mid B)\right) - \Delta_a(\xvec\mid B)\ge 0 \right)
    \label{eq:longeq}
\end{align}
Here, 
\begin{align}
    \pi_{\theta^*}^\text{set}(\xvec_t\mid B) = \pi_{\theta^*}^\text{set}(\xvec_t\mid\{\xvec_0,\ldots,\xvec_{t-1}\}) \ge \prod_{i=0}^{n-1} \pi_{\theta^*}^\text{set}(\xvec_{i}\mid \{\xvec_0,\ldots,\xvec_{i-1}\}) = c_1 > 0. &&(\because \text{\Cref{eq:defc1}})
    \label{eq:cond1}
\end{align}
Moreover, 
\begin{align}
    \left(\max_{\xvec'\in\mathcal{X}}\Delta_a(\xvec'\mid B)\right) - \Delta_a(\xvec_t\mid B) > 0.&&(\because \text{\Cref{eq:defxt}})
    \label{eq:cond2}
\end{align}
By combining Equations (\ref{eq:longeq}), (\ref{eq:cond1}), and (\ref{eq:cond2}), we finally have $c_2 > 0$.
As a result, we have 
\begin{align*}
    C(B) = \begin{cases}
        0 & \text{if  $B\neq \{\xvec_0,\ldots,\xvec_{t-1}\}$},\\
        c_2 > 0 & \text{if  $B= \{\xvec_0,\ldots,\xvec_{t-1}\}$}.\\
    \end{cases}
\end{align*}
Using this,
\begin{align*}
    \mathcal{J}(\phi,\theta^*) &- \mathcal{J}(\theta^*,\theta^*) \\
    &= \frac{1}{n}\sum_{k=0}^{n-1}\mathbb{E}_{B\sim \mathrm{GS}(a,\pi_{\theta^*}^\text{set},k,1)}[\mathbb{E}_{\xvec\sim\pi_{\phi}^\text{set}(\cdot\mid B)}[\Delta_a(\xvec\mid B)]] -         
    \frac{1}{n}\sum_{k=0}^{n-1}\mathbb{E}_{B\sim \mathrm{GS}(a,\pi_{\theta^*}^\text{set},k,1)}[\mathbb{E}_{\xvec\sim\pi_{\theta^*}^\text{set}(\cdot\mid B)}[\Delta_a(\xvec\mid B)]]\\
    &= \frac{1}{n}\sum_{k=0}^{n-1}\mathbb{E}_{B\sim \mathrm{GS}(a,\pi_{\theta^*}^\text{set},k,1)}[C(B)]\\
    &=\frac{1}{n} \underbrace{(\mathrm{GS}(a,\pi_{\theta^*}^\text{set},t,1))(\{\xvec_0,\ldots,\xvec_{t-1}\})}_{\text{Probability of sampling $\{\xvec_0,\ldots\xvec_{t-1}\}$}} c_2\\
    &\ge \frac{1}{n} \underbrace{\prod_{i=0}^{t-1} \pi_{\theta^*}^\text{set}(\xvec_{i}\mid \{\xvec_0,\ldots,\xvec_{i-1}\})}_{\text{Probability of sampling $\xvec_0\rightarrow\cdots\rightarrow\xvec_{t-1}$}} c_2
    \ge \frac{1}{n} \underbrace{\prod_{i=0}^{n-1} \pi_{\theta^*}^\text{set}(\xvec_{i}\mid \{\xvec_0,\ldots,\xvec_{i-1}\})}_{\text{Probability of sampling $\xvec_0\rightarrow\cdots\rightarrow\xvec_{n-1}$}} c_2 = \frac{c_1c_2}{n}> 0.
\end{align*}
 Thus, we finally have $\mathcal{J}(\phi,\theta^*) > \mathcal{J}(\theta^*,\theta^*)$ which contradicts to \Cref{eq:contrad}. In conclusion, we complete the proof by contradiction.
\end{proof}
Next, we recall and prove \Cref{prop:setgrad}.
\begin{proposition}
\label{lem:setgradapp} (Policy gradient for $\mathcal{J}$)
For any baseline set function $b:2^\mathcal{X}\to\reals$ and the number of episodes $N_e$,

$$\hat{g}=\frac{1}{N_e}\sum_{j=0}^{N_e-1}(\Delta_a(\xvec^{(j)}\mid B) - b(B)) \nabla_\theta \pi^\text{set}_\theta(\xvec^{(j)}\mid B), $$

is an unbiased MC estimator of  the partial derivative $\frac{\partial}{\partial \theta} \mathcal{J}(\theta,\theta')$ where $B\sim \frac{1}{n}\sum_{k=0}^{n-1}\mathrm{GS}(a,\pi_{\theta'}^\text{set},k,1)$.
\end{proposition}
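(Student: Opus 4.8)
The plan is to compute $\frac{\partial}{\partial\theta}\mathcal{J}(\theta,\theta')$ directly and then recognize $\hat{g}$ as an unbiased Monte Carlo average of the resulting expression. The crucial structural observation is that in \Cref{def:expgain} the outer sampling distribution $B\sim\mathrm{GS}(a,\pi_{\theta'}^\text{set},k,1)$ depends only on the behavior parameter $\theta'$, not on the evaluated parameter $\theta$. Consequently, when differentiating with respect to $\theta$, the distribution of $B$ is held fixed, and I only need to differentiate the inner expectation $\mathbb{E}_{\xvec\sim\pi_\theta^\text{set}(\cdot\mid B)}[\Delta_a(\xvec\mid B)]$ for each fixed $B$.

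First I would push $\nabla_\theta$ inside the finite outer average over $k$ and the expectation over $B$, and write the inner expectation explicitly as $\sum_{\xvec\in\mathcal{X}}\pi_\theta^\text{set}(\xvec\mid B)\Delta_a(\xvec\mid B)$. Interchanging differentiation with this summation (justified when $\mathcal{X}$ is finite, or otherwise by a standard dominated-convergence/smoothness assumption on the policy) and applying the log-derivative identity $\nabla_\theta\pi_\theta^\text{set}(\xvec\mid B)=\pi_\theta^\text{set}(\xvec\mid B)\,\nabla_\theta\log\pi_\theta^\text{set}(\xvec\mid B)$ rewrites the gradient of the inner term as $\mathbb{E}_{\xvec\sim\pi_\theta^\text{set}(\cdot\mid B)}[\Delta_a(\xvec\mid B)\,\nabla_\theta\log\pi_\theta^\text{set}(\xvec\mid B)]$.

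Second I would show the baseline $b(B)$ contributes nothing in expectation. Since $b(B)$ is constant in $\xvec$ given $B$, we have $\mathbb{E}_{\xvec\sim\pi_\theta^\text{set}(\cdot\mid B)}[b(B)\,\nabla_\theta\log\pi_\theta^\text{set}(\xvec\mid B)]=b(B)\sum_{\xvec}\nabla_\theta\pi_\theta^\text{set}(\xvec\mid B)=b(B)\,\nabla_\theta 1=0$. Subtracting this vanishing term lets me replace $\Delta_a(\xvec\mid B)$ by $\Delta_a(\xvec\mid B)-b(B)$ without altering the expectation, yielding $\frac{\partial}{\partial\theta}\mathcal{J}(\theta,\theta')=\frac{1}{n}\sum_{k=0}^{n-1}\mathbb{E}_{B\sim\mathrm{GS}(a,\pi_{\theta'}^\text{set},k,1)}\big[\mathbb{E}_{\xvec\sim\pi_\theta^\text{set}(\cdot\mid B)}[(\Delta_a(\xvec\mid B)-b(B))\,\nabla_\theta\log\pi_\theta^\text{set}(\xvec\mid B)]\big]$.

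Finally I would match the sampling scheme of $\hat g$ to this expression: the mixture draw $B\sim\frac{1}{n}\sum_{k=0}^{n-1}\mathrm{GS}(a,\pi_{\theta'}^\text{set},k,1)$ reproduces exactly the $\frac{1}{n}\sum_k\mathbb{E}_{B}$ average, and drawing $\xvec^{(j)}\sim\pi_\theta^\text{set}(\cdot\mid B)$ for $j\in[N_e]$ and averaging gives, by linearity of expectation and the tower property, $\mathbb{E}[\hat g]$ equal to the displayed gradient. The main obstacle I anticipate is purely technical: rigorously justifying the interchange of $\nabla_\theta$ with the expectation/summation, and, more importantly, being careful that the $B$-samples are generated from the \emph{behavior} policy $\theta'$ so that no spurious gradient of the sampling distribution of $B$ appears. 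This decoupling of the evaluated parameter $\theta$ from the behavior parameter $\theta'$ is precisely what renders the simple REINFORCE-style estimator unbiased.
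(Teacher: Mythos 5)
Your proposal is correct and follows essentially the same route as the paper's own proof: exploiting that the distribution of $B$ depends only on the behavior parameter $\theta'$, applying the log-derivative (REINFORCE) trick to the inner expectation, and showing the baseline term vanishes because $\sum_{\xvec}\nabla_\theta\pi_\theta^\text{set}(\xvec\mid B)=\nabla_\theta(1)=0$. Note that both your derivation and the paper's proof yield the estimator with $\nabla_\theta\log\pi_\theta^\text{set}$, which is the correct form (the displayed statement omits the $\log$, a typo relative to the main-text version of the proposition).
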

\begin{proof}
This proposition is a variation of policy gradients and our proof also takes the same idea of the log-derivative trick for obtaining MC estimator of the derivative \citep{reinforce}. 
First of all, we can combine $n$ expectations in $\mathcal{J}(\theta,\theta')$ by utilizing the mixture of distributions $\frac{1}{n}\sum_{k=0}^{n-1}\mathrm{GS}(a,\pi_{\theta'}^\text{set},k,1)$ as follows:
    \begin{align*} 
        \mathcal{J}(\theta,\theta')&=\frac{1}{n}\sum_{k=0}^{n-1}\mathbb{E}_{B\sim \mathrm{GS}(a,\pi_{\theta'}^\text{set},k,1)}[\mathbb{E}_{\xvec\sim\pi_\theta^\text{set}(\cdot\mid B)}[\Delta_a(\xvec\mid B)]]\\
        &=\mathbb{E}_{B\sim \frac{1}{n}\sum_{k=0}^{n-1}\mathrm{GS}(a,\pi_{\theta'}^\text{set},k,1)}[\mathbb{E}_{\xvec\sim\pi_\theta^\text{set}(\cdot\mid B)}[\Delta_a(\xvec\mid B)]].
    \end{align*}
Since the mixture of the distributions $\frac{1}{n}\sum_{k=0}^{n-1}\mathrm{GS}(a,\pi_{\theta'}^\text{set},k,1)$ is independent to $\theta$, we denote this distribution by $p$ for simplicity.
Then, our goal is to obtain the estimator of partial derivative:
\begin{align}
        \frac{\partial}{\partial \theta} \mathcal{J}(\theta,\theta')
        &=\frac{\partial}{\partial \theta} \mathbb{E}_{B\sim p}[\mathbb{E}_{\xvec\sim\pi_\theta^\text{set}(\cdot\mid B)}[\Delta_a(\xvec\mid B)]]\nonumber\\
    &=
\frac{\partial}{\partial \theta}  \left(\sum_{B\subset\mathcal{X}, |B|\le n} p(B) \sum_{\xvec\in\mathcal{X}}\Delta_a(\xvec\mid B) \pi_\theta^\text{set}(\xvec\mid B) \right)\nonumber\\
&= \sum_{B\subset\mathcal{X}, |B|\le n} p(B)  \left(\sum_{\xvec\in\mathcal{X}} \Delta_a(\xvec\mid B)\nabla_\theta \pi_\theta^\text{set}(\xvec\mid B) \right)\nonumber\\
&= \sum_{B\subset\mathcal{X}, |B|\le n} p(B)  \left(\sum_{\xvec\in\mathcal{X}}  \Delta_a(\xvec\mid B) \pi_\theta^\text{set}(\xvec\mid B)\nabla_\theta \log \pi_\theta^\text{set}(\xvec\mid B)\right) &&\left(\because \nabla_\theta \log \pi_\theta^\text{set} = \frac{\nabla_\theta \pi_\theta^\text{set}}{\pi_\theta^\text{set}}\right)\nonumber\\
&= \sum_{B\subset\mathcal{X}, |B|\le n} p(B)  \mathbb{E}_{\xvec\sim\pi_\theta^\text{set}(\cdot\mid B)}\left[\Delta_a(\xvec\mid B) \nabla_\theta \log \pi_\theta^\text{set}(\xvec\mid B)\right] \nonumber\\
&=\mathbb{E}_{B\sim p} \left[\mathbb{E}_{\xvec\sim\pi_\theta^\text{set}(\cdot\mid B)}\left[\Delta_a(\xvec\mid B) \nabla_\theta \log \pi_\theta^\text{set}(\xvec\mid B)\right] \right].
\label{eq:policygrad}
\end{align}
For the next step, we prove that following equality holds for any baseline set function $b:2^\mathcal{X}\to\reals$:
\begin{align}
\mathbb{E}_{B\sim p} \left[\mathbb{E}_{\xvec\sim\pi_\theta^\text{set}(\cdot\mid B)}\left[b(B) \nabla_\theta \log \pi_\theta^\text{set}(\xvec\mid B)\right] \right]=0.
\label{eq:base}
\end{align}
For any $B\subset \mathcal{X}$, we have 
\begin{align*}
    \mathbb{E}_{\xvec\sim\pi_\theta^\text{set}(\cdot\mid B)} [b(B)\nabla_\theta \log \pi_\theta^\text{set}(\xvec\mid B)] 
    &= b(B)      \mathbb{E}_{\xvec\sim\pi_\theta^\text{set}(\cdot\mid B)} [\nabla_\theta \log \pi_\theta^\text{set}(\xvec\mid B)] \\
    &= b(B) \sum_{\xvec\in \mathcal{X}}\pi_\theta^\text{set}(\xvec\mid B)\nabla_\theta \log \pi_\theta^\text{set}(\xvec\mid B)\\
    &= b(B) \sum_{\xvec\in \mathcal{X}}\nabla_\theta \pi_\theta^\text{set}(\xvec\mid B)\\
    &= b(B) \nabla_\theta\left(\sum_{\xvec\in \mathcal{X}}\pi_\theta^\text{set}(\xvec\mid B)\right) =  b(B) \nabla_\theta (1) = 0.
\end{align*}
Thus, we get \Cref{eq:base}. By combining \Cref{eq:policygrad} and \Cref{eq:base}, we finally achieve 
\begin{align*}
        \frac{\partial}{\partial \theta} \mathcal{J}(\theta,\theta')
&=\mathbb{E}_{B\sim p} \left[\mathbb{E}_{\xvec\sim\pi_\theta^\text{set}(\cdot\mid B)}\left[(\Delta_a(\xvec\mid B) -b(B))\nabla_\theta \log \pi_\theta^\text{set}(\xvec\mid B)\right] \right].
\end{align*}
Finally, we derive an unbiased MC estimator $\hat{g}$ of the parital derivative $ \frac{\partial}{\partial \theta} \mathcal{J}(\theta,\theta')$ by 
$$\hat{g}=\frac{1}{N_e}\sum_{j=0}^{N_e-1}(\Delta_a(\xvec^{(j)}\mid B) - b(B)) \nabla_\theta \log \pi^\text{set}_\theta(\xvec^{(j)}\mid B), $$
where $B\sim p$ and $\xvec^{(j)}\sim \pi_\theta^\text{set}(\cdot\mid B)$.
\end{proof}
Finally, we recall and prove \Cref{lem:easy}.
\begin{lemma}
\label{lem:easyapp}
    For any $B,B'\subset\mathcal{X}$ satisfying $\tilde{f}(B)=\tilde{f}(B')$, $\Delta_a(\xvec\mid B) = \Delta_a(\xvec\mid B')$ for all $\xvec\in\mathcal{X}$ if $a$ is given by HVI and $\tilde{f}$ is deterministic surrogate function.
\end{lemma}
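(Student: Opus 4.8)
The plan is to show that, for HVI with a deterministic surrogate, the marginal gain $\Delta_a(\xvec\mid B)$ is a function of the \emph{image set} $\tilde f(B)=\{\tilde f(\xvec')\mid \xvec'\in B\}$ alone, with $\xvec$, $\tilde{\mathcal P}$, and $\mathbf r_\text{ref}$ held fixed. Once this is established, the lemma is immediate: the hypothesis $\tilde f(B)=\tilde f(B')$ forces $\Delta_a(\xvec\mid B)=\Delta_a(\xvec\mid B')$ for every $\xvec\in\mathcal X$.

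First I would unfold the definitions. Writing $a(B)=\mathbf{HVI}(B;\tilde f,\tilde{\mathcal P},\mathbf r_\text{ref})=\mathbf{HV}(\tilde f(B)\cup\tilde f(\tilde{\mathcal P});\mathbf r_\text{ref})-\mathbf{HV}(\tilde f(\tilde{\mathcal P});\mathbf r_\text{ref})$, I observe that the subtracted term $\mathbf{HV}(\tilde f(\tilde{\mathcal P});\mathbf r_\text{ref})$ does not depend on $B$, so it cancels when forming $\Delta_a(\xvec\mid B)=a(B\cup\{\xvec\})-a(B)$. This leaves
\[
\Delta_a(\xvec\mid B)=\mathbf{HV}(\tilde f(B\cup\{\xvec\})\cup\tilde f(\tilde{\mathcal P});\mathbf r_\text{ref})-\mathbf{HV}(\tilde f(B)\cup\tilde f(\tilde{\mathcal P});\mathbf r_\text{ref}).
\]
Next I would invoke the elementary identity that the image of a union is the union of images, $\tilde f(B\cup\{\xvec\})=\tilde f(B)\cup\{\tilde f(\xvec)\}$, to rewrite the first argument as $\tilde f(B)\cup\{\tilde f(\xvec)\}\cup\tilde f(\tilde{\mathcal P})$. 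At this point every quantity on the right-hand side is determined by the image set $\tilde f(B)$ together with the fixed data $\tilde f(\xvec)$, $\tilde f(\tilde{\mathcal P})$, and $\mathbf r_\text{ref}$, with no residual dependence on which preimages constitute $B$. Substituting $\tilde f(B)=\tilde f(B')$ into both $\mathbf{HV}$ terms then yields $\Delta_a(\xvec\mid B)=\Delta_a(\xvec\mid B')$, completing the argument.

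There is essentially no hard step here; the proof is pure bookkeeping once the HVI definition is expanded. The only points needing a moment's care are (i) recognizing that the baseline hypervolume $\mathbf{HV}(\tilde f(\tilde{\mathcal P});\mathbf r_\text{ref})$ is constant in $B$ and so vanishes under the difference, and (ii) noting that HVI sees $B$ only through the \emph{set} $\tilde f(B)$ (so, for instance, coincident image values are irrelevant), which is precisely what justifies taking $\tilde f(\cdot)$ as the feature map feeding the deep-set encoder of the set-conditioned policy.
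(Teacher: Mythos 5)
Your proof is correct and takes essentially the same route as the paper's: unfold the HVI definition and observe that $a$, and hence $\Delta_a$, depends on $B$ only through the image set $\tilde{f}(B)$, so $\tilde{f}(B)=\tilde{f}(B')$ immediately gives the claim. If anything, your write-up is more careful than the paper's own display, which sloppily equates $\Delta_a(\xvec\mid B)$ with $\mathbf{HVI}(B;\tilde{\fvec},\tilde{\mathcal{P}},\mathbf{r}_\text{ref})$ (dropping the dependence on $\xvec$ entirely), whereas you correctly expand the marginal gain as the difference $\mathbf{HV}(\tilde{f}(B)\cup\{\tilde{f}(\xvec)\}\cup\tilde{f}(\tilde{\mathcal{P}});\mathbf{r}_\text{ref})-\mathbf{HV}(\tilde{f}(B)\cup\tilde{f}(\tilde{\mathcal{P}});\mathbf{r}_\text{ref})$ after cancelling the $B$-independent baseline term.
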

\begin{proof}
We prove the consequence directly from the definition of HVI as follows:
\begin{align*}
    \Delta_a(\xvec\mid B) = \mathbf{HVI}(B;\tilde{\fvec},\tilde{\mathcal{P}},\mathbf{r}_\text{ref}) &= \mathbf{HV}(\tilde{f}(B)\cup\tilde{f}(\tilde{\mathcal{P}});\mathbf{r}_\text{ref}) - \mathbf{HV}(\tilde{f}(\tilde{\mathcal{P}});\mathbf{r}_\text{ref})\\
    &= \mathbf{HV}(\tilde{f}(B')\cup\tilde{f}(\tilde{\mathcal{P}});\mathbf{r}_\text{ref}) - \mathbf{HV}(\tilde{f}(\tilde{\mathcal{P}});\mathbf{r}_\text{ref})\\
    &= \mathbf{HVI}(B';\tilde{\fvec}, \tilde{\mathcal{P}}, \mathbf{r}_\text{ref})= \Delta_a(\xvec\mid B').
\end{align*}
\end{proof}

\subsection{Detailed Explanation of Bounds for Approximated Greedy Algorithm}
\label{app:boundsdetail}
In this section, we provide a more formal explanation on bounds for approximated greedy algorithm proposed in \Cref{sec:bounds}.
First, we introduce the \emph{monotone submodularity}.
\begin{definition} (Monotone Submodularity)
    A real-valued set function $a:2^\mathcal{X}\to\reals$ is submodular if the inequality $\Delta_a(\xvec \mid B') \ge \Delta_a(\xvec \mid B)$ holds for all $B' \subset B \subset \mathcal{X}$ and $\mathbf{x}\in\mathcal{X}\setminus B$. 
    Additionally, a set function $a$ is monotone if $\Delta_a(\xvec \mid B) \ge 0$ for all $B \subset \mathcal{X}$ and $\xvec \in \mathcal{X}\setminus B$.  
    Finally, a set function $s$ is non-negative if $a(B)\ge 0$ for all $B\subset \mathcal{X}$. 
\end{definition}
In essence, a monotone submodularity is characterized by a consistently non-negative marginal gain that diminishes as the augmenting set enlarges. 
For a non-negative monotone submodular function $a$, the exact greedy algorithm is known to guarantee a $(1 - 1/e)$-approximation to the optimal $n$-subset $B_n^*$, \ie, $a(B_n)\ge (1-1/e) a(B_n^*)$ \citep{exactgreedybound}. 

In \Cref{thm:modbound}, we extend this bound to the approximated greedy algorithm when $a$ is any monotone near-submodular set function using the notion of \emph{submodularity ratio}, which is formally defined as follows.
\begin{definition} (Submodularity Ratio)
\label{def:submodind}
Let $a:2^\mathcal{X}\to\reals$ be a monotone set function. For $B\subset \mathcal{X}$ and $n\ge 1$, the submodularity ratio $\gamma_{B,n}(a)$ is defined as 
$$\gamma_{B,n}(a)\coloneqq \min_{S\subset \mathcal{X}, B'\subset B\setminus S, |S|\le n}\frac{\sum_{\xvec\in S}\Delta_a(\xvec\mid B')}{a(B'\cup S)-a(B')}\le 1,$$
where we define $0/0\coloneqq1$ ($S=\emptyset$ case).
\end{definition}
The submodularity ratio measures the extent to which the function $a$ exhibits submodularity \citep{submodindexdef}. 
\subsubsection{Proof of \Cref{thm:modbound}}
\label{sec:modproof}
To start, we recall \Cref{thm:modbound}. 
\begin{theorem}
\label{thm:appmodbound}
    Let $a:2^\mathcal{X}\to\reals$ be a non-negative monotone set function. If $\mathcal{A}$ is an $\alpha$-approximation algorithm, the resulting solution $B_n$ of \Cref{alg:approxgreedy} is an $(1-1/e^{\alpha\gamma_{B_n,n}(a)})$-approxmiation to the optimal $n$-subset $B_n^*$, \ie, $a(B_n)\ge (1-1/e^{\alpha\gamma_{B_n,n}(a)}) a(B_n^*)$.
\end{theorem}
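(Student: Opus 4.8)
The plan is to adapt the classical greedy analysis \citep{exactgreedybound} to the approximate, near-submodular setting by tracking the optimality gap $\delta_i \coloneqq a(B_n^*) - a(B_i)$ and showing it contracts geometrically at each step. First I would relate this gap to the marginal gains available to the greedy algorithm. By monotonicity, $a(B_n^*) \le a(B_i \cup B_n^*)$, so $\delta_i \le a(B_i \cup B_n^*) - a(B_i)$. Applying the submodularity ratio (\Cref{def:submodind}) with reference set $B = B_n$ and the pair $B' = B_i$, $S = B_n^* \setminus B_i$ yields
$$a(B_i \cup B_n^*) - a(B_i) \le \frac{1}{\gamma_{B_n,n}(a)} \sum_{\xvec \in B_n^* \setminus B_i} \Delta_a(\xvec \mid B_i),$$
so that $\sum_{\xvec \in B_n^* \setminus B_i} \Delta_a(\xvec \mid B_i) \ge \gamma_{B_n,n}(a)\, \delta_i$. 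To invoke the ratio I must verify that $(S, B')$ is admissible in \Cref{def:submodind}: indeed $|S| \le |B_n^*| = n$, and since $B_i \subseteq B_n$ with $B_i \cap (B_n^* \setminus B_i) = \emptyset$, we have $B' = B_i \subseteq B_n \setminus S$, as required.

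Next I would bring in the $\alpha$-approximation guarantee. Since the maximum is at least the average, the greedy step satisfies
$$\Delta_a(\xvec_i \mid B_i) \ge \alpha \max_{\xvec \in \mathcal{X}\setminus B_i} \Delta_a(\xvec \mid B_i) \ge \frac{\alpha}{n} \sum_{\xvec \in B_n^* \setminus B_i} \Delta_a(\xvec \mid B_i) \ge \frac{\alpha\, \gamma_{B_n,n}(a)}{n}\, \delta_i.$$
Writing $\gamma \coloneqq \gamma_{B_n,n}(a)$ and noting $\Delta_a(\xvec_i \mid B_i) = a(B_{i+1}) - a(B_i) = \delta_i - \delta_{i+1}$, this rearranges to the recursion $\delta_{i+1} \le (1 - \alpha\gamma/n)\,\delta_i$, where the contraction factor lies in $[0,1)$ because $\alpha \le 1$ and $\gamma \le 1$.

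Finally I would telescope: iterating the recursion $n$ times gives $\delta_n \le (1 - \alpha\gamma/n)^n \delta_0$. Non-negativity of $a$ gives $\delta_0 = a(B_n^*) - a(\emptyset) \le a(B_n^*)$, and the elementary inequality $(1 - x/n)^n \le e^{-x}$ yields $\delta_n \le e^{-\alpha\gamma} a(B_n^*)$. Unpacking $\delta_n = a(B_n^*) - a(B_n)$ then delivers $a(B_n) \ge (1 - 1/e^{\alpha\gamma}) a(B_n^*)$, as claimed.

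The main obstacle is the submodularity-ratio step of the first paragraph: the quantity $\gamma_{B_n,n}(a)$ is defined relative to the \emph{final} greedy set $B_n$, yet it must be applied at every intermediate step with reference subset $B' = B_i$. The crux is checking that each pair $(B_n^* \setminus B_i,\, B_i)$ falls inside the feasible region of the minimum defining $\gamma_{B_n,n}(a)$, which hinges on the nesting $B_i \subseteq B_n$ produced by the greedy construction together with the disjointness of $B_i$ from $B_n^* \setminus B_i$. Once this is secured, the remainder is a routine contraction-and-telescoping argument.
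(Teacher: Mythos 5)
Your proof is correct and follows essentially the same route as the paper's: the same per-step inequality combining the $\alpha$-approximation guarantee, the max-over-average bound on $B_n^* \setminus B_i$, and the submodularity ratio $\gamma_{B_n,n}(a)$ (with the same admissibility check $B_i \subseteq B_n \setminus S$, $|S| \le n$), followed by the identical geometric contraction of the gap $a(B_n^*) - a(B_i)$ and the telescoping argument with $(1-x/n)^n \le e^{-x}$. The only difference is presentational ordering within the per-step lemma, not mathematical substance.
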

Next, we summarize notations for the proof as follows:
\begin{table}[hbt!]
    \centering
    \caption{Notations for the proof of \Cref{thm:modbound}.}
    \begin{tabular}{ll}
    \toprule
    Notation& Definition\\
    \midrule
    $\mathcal{A}$ & an $\alpha$-approximation algorithm.\\
    $n\in\mathbb{N}$ & a cardinality constraint.\\
    $a:2^\mathcal{X}\to\reals$ & a non-negative monotone objective set function.\\
      $\xvec_i^{\mathcal{A}}\in\mathcal{X}$   & the candidate appended at $i$-th step of \Cref{alg:approxgreedy} with an algorithm $\mathcal{A}$.  \\
    $B_i^\mathcal{A}=\{\xvec_j^{\mathcal{A}}\mid 0\le j \le i-1\}\subset\mathcal{X}$   & the $i$-subset constructed by \Cref{alg:approxgreedy} with an algorithm $\mathcal{A}$.\\
      $B_i^*\in\argmax_{B\subset \mathcal{X}, |B|=i} a(B)$   & the optimal $i$-subset of $a$.  \\ 
    \bottomrule
    \end{tabular}
    \label{tab:notationmodproof}
\end{table}

Our proof is an extension of the proof by \citet{submodindexbound}. 
Using the notion of the submodularity ratio, we first prove the following lemma.
\begin{lemma}
\label{lem:mod1}
For any  $0\le i< n$, the following inequality holds:
    $$\Delta_a(\xvec_i^\mathcal{A}\mid B_i^\mathcal{A}) \ge \frac{\alpha\gamma_{B_n^\mathcal{A},n}(a)}{n}(a(B_n^*) - a(B_i^\mathcal{A})).$$
\end{lemma}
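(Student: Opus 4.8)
The plan is to adapt the classical greedy analysis for submodular maximization to the near-submodular, $\alpha$-approximate setting, isolating the single inductive inequality stated in the lemma. Throughout, fix $0\le i<n$ and abbreviate $B:=B_i^\mathcal{A}$. Since the approximate greedy construction is nested ($B_0\subseteq B_1\subseteq\cdots\subseteq B_n$), we have $B=B_i^\mathcal{A}\subseteq B_n^\mathcal{A}$. First I would dispose of the trivial case: if $a(B_n^*)\le a(B)$, then the right-hand side is non-positive while the left-hand side $\Delta_a(\xvec_i^\mathcal{A}\mid B)$ is non-negative by monotonicity, so the inequality holds. Hence I assume $a(B_n^*)>a(B)$ from here on.

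Next, set $S:=B_n^*\setminus B$, so that $|S|\le|B_n^*|=n$ and $B\cup S=B\cup B_n^*$. By monotonicity, $a(B\cup S)\ge a(B_n^*)>a(B)$, so the gain $a(B\cup S)-a(B)$ is strictly positive and in particular $a(B\cup S)-a(B)\ge a(B_n^*)-a(B)$. The key step is to invoke the submodularity ratio $\gamma_{B_n^\mathcal{A},n}(a)$ at the pair $(B',S)$ with $B'=B$: this pair is admissible in the minimization of \Cref{def:submodind} because $|S|\le n$ and $B=B_i^\mathcal{A}\subseteq B_n^\mathcal{A}\setminus S$ (the containment holds since $B\subseteq B_n^\mathcal{A}$ and $B\cap S=\emptyset$ by construction of $S$). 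Since the denominator is positive, \Cref{def:submodind} yields
$$\sum_{\xvec\in S}\Delta_a(\xvec\mid B)\ge \gamma_{B_n^\mathcal{A},n}(a)\,\bigl(a(B\cup S)-a(B)\bigr)\ge \gamma_{B_n^\mathcal{A},n}(a)\,\bigl(a(B_n^*)-a(B)\bigr).$$

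To finish, I would upper-bound the same sum by the per-step greedy gain. Every $\xvec\in S$ lies in $\mathcal{X}\setminus B$, so $\Delta_a(\xvec\mid B)\le\max_{\xvec'\in\mathcal{X}\setminus B}\Delta_a(\xvec'\mid B)$; the $\alpha$-approximation guarantee of $\mathcal{A}$ (\Cref{def:alphaapp}) bounds this maximum by $\tfrac{1}{\alpha}\Delta_a(\xvec_i^\mathcal{A}\mid B)$. Summing over the at most $n$ elements of $S$ gives $\sum_{\xvec\in S}\Delta_a(\xvec\mid B)\le\tfrac{n}{\alpha}\Delta_a(\xvec_i^\mathcal{A}\mid B)$. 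Chaining this with the previous display and rearranging produces exactly $\Delta_a(\xvec_i^\mathcal{A}\mid B)\ge\tfrac{\alpha\gamma_{B_n^\mathcal{A},n}(a)}{n}\bigl(a(B_n^*)-a(B)\bigr)$, as claimed.

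The only delicate point is confirming that $(B',S)=(B_i^\mathcal{A},\,B_n^*\setminus B_i^\mathcal{A})$ is a legitimate argument of the minimum defining $\gamma_{B_n^\mathcal{A},n}(a)$ — in particular the containment $B_i^\mathcal{A}\subseteq B_n^\mathcal{A}\setminus S$ and the cardinality bound $|S|\le n$ — together with ensuring the ratio's denominator is positive, which the preliminary case split guarantees (the $0/0:=1$ convention covers the remaining degenerate instance). Everything else reduces to monotonicity and the definitional $\alpha$-bound.
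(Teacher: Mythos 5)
Your proof is correct and takes essentially the same route as the paper's: define $S = B_n^*\setminus B_i^\mathcal{A}$, lower-bound $\sum_{\xvec\in S}\Delta_a(\xvec\mid B_i^\mathcal{A})$ via the submodularity ratio (admissible because $B_i^\mathcal{A}\subseteq B_n^\mathcal{A}\setminus S$ and $|S|\le n$) together with monotonicity, and upper-bound the same sum by $\tfrac{n}{\alpha}\Delta_a(\xvec_i^\mathcal{A}\mid B_i^\mathcal{A})$ using the $\alpha$-approximation guarantee. The only differences are cosmetic: you chain the two bounds on the sum rather than writing one descending chain of inequalities, and you add a preliminary case split that handles the degenerate situations (empty $S$, zero denominator) which the paper glosses over.
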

\begin{proof}
Let $S_i\coloneqq B_n^*\setminus B_i^\mathcal{A}$. Then,
    \begin{align*}
        \Delta_a(\xvec_i^\mathcal{A}\mid B_i^\mathcal{A}) 
        &\ge \alpha \max_{\xvec\in\mathcal{X}} \Delta_a(\xvec \mid B_i^\mathcal{A})  && (\because \mathcal{A} ~\text{is an}~ \alpha\text{-approximation algorithm}.)\\
        &\ge \alpha \frac{\sum_{\xvec\in S_i} \Delta_a(\xvec\mid B_i^\mathcal{A})}{|S_i|} && (\because \text{maximality})\\
        &\ge \alpha \frac{\gamma_{B_n^\mathcal{A}, n}(a)}{|S_i|} (a(B_i^\mathcal{A}\cup S_i) - a(B_i^\mathcal{A})) &&(\because \text{\Cref{def:submodind}}, B_i^\mathcal{A}\subset B_n^\mathcal{A}\setminus S_i, |S_i|\le |B_n^*|= n)\\
        &\ge  \frac{\alpha\gamma_{B_n^\mathcal{A},n}(a)}{n}(a( B_i^\mathcal{A}\cup S_i) - a(B_i^\mathcal{A})) &&(\because |S_i|\le |B_n^*| = n)\\
        &\ge  \frac{\alpha\gamma_{B_n^\mathcal{A},n}(a)}{n}(a(B_n^*)-a(B_i^\mathcal{A})). &&(\because  B_n^*\subseteq S_i\cup B_i^\mathcal{A}, \text{monotonicity of }a)\\
    \end{align*}
\end{proof}
For the next step, we prove the following lemma using \Cref{lem:mod1}.
\begin{lemma}
\label{lem:mod2}
For any $i\ge 0$, the following inequality holds:
    $$a(B_n^*) - a(B_{i+1}^\mathcal{A}) \le \left(1-\frac{\alpha\gamma_{B_n^\mathcal{A},n}(a)}{n}\right)(a(B_n^*) - a(B_{i}^\mathcal{A})).$$
\end{lemma}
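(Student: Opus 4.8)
The plan is to recognize that this lemma is an immediate algebraic consequence of \Cref{lem:mod1} once the marginal gain is rewritten as a difference of objective values. First I would observe that the greedy construction of \Cref{alg:approxgreedy} gives $B_{i+1}^\mathcal{A} = B_i^\mathcal{A}\cup\{\xvec_i^\mathcal{A}\}$, so by the very definition of the marginal gain $\Delta_a(\xvec_i^\mathcal{A}\mid B_i^\mathcal{A}) = a(B_{i+1}^\mathcal{A}) - a(B_i^\mathcal{A})$. This lets me express the quantity of interest through the telescoping identity
$$a(B_n^*) - a(B_{i+1}^\mathcal{A}) = \left(a(B_n^*) - a(B_i^\mathcal{A})\right) - \Delta_a(\xvec_i^\mathcal{A}\mid B_i^\mathcal{A}).$$

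Next I would plug in the lower bound already established in \Cref{lem:mod1}, namely $\Delta_a(\xvec_i^\mathcal{A}\mid B_i^\mathcal{A}) \ge \frac{\alpha\gamma_{B_n^\mathcal{A},n}(a)}{n}\left(a(B_n^*) - a(B_i^\mathcal{A})\right)$. Since the marginal gain enters the identity with a minus sign, substituting this lower bound produces an upper bound, yielding
$$a(B_n^*) - a(B_{i+1}^\mathcal{A}) \le \left(1 - \frac{\alpha\gamma_{B_n^\mathcal{A},n}(a)}{n}\right)\left(a(B_n^*) - a(B_i^\mathcal{A})\right),$$
which is exactly the claim. Writing $c \coloneqq \frac{\alpha\gamma_{B_n^\mathcal{A},n}(a)}{n}$ and $X \coloneqq a(B_n^*) - a(B_i^\mathcal{A})$, the entire manipulation is just $X - (\text{gain}) \le X - cX = (1-c)X$.

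There is essentially no obstacle here: the statement follows in two lines, and the only point requiring minor care is tracking the direction of the inequality when the marginal gain (entering negatively) is replaced by its lower bound. Conceptually, this lemma is the per-step contraction factor; once established it will be iterated over $i = 0,\ldots,n-1$, combined with $a(B_0^\mathcal{A}) = a(\emptyset)\ge 0$ and the elementary bound $1 - x \le e^{-x}$, to yield the final $(1 - 1/e^{\alpha\gamma_{B_n^\mathcal{A},n}(a)})$-approximation guarantee of \Cref{thm:modbound}.
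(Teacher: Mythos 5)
Your proof is correct and is essentially identical to the paper's: both rewrite $a(B_{i+1}^\mathcal{A})$ as $a(B_i^\mathcal{A}) + \Delta_a(\xvec_i^\mathcal{A}\mid B_i^\mathcal{A})$ via the greedy update $B_{i+1}^\mathcal{A} = B_i^\mathcal{A}\cup\{\xvec_i^\mathcal{A}\}$, then substitute the lower bound from \Cref{lem:mod1} on the (negatively signed) marginal gain to obtain the contraction factor $\left(1-\frac{\alpha\gamma_{B_n^\mathcal{A},n}(a)}{n}\right)$. Your remarks on the inequality direction and on how the lemma is subsequently iterated in \Cref{thm:modbound} also match the paper's development.
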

\begin{proof}
    \begin{align*}
        a(B_n^*) - a(B_{i+1}^\mathcal{A}) &= a(B_n^*) - (a(B_{i}^\mathcal{A}) + \Delta_a(\xvec_i^\mathcal{A}\mid B_i^\mathcal{A}))\\
        &\le (a(B_n^*) - a(B_{i}^\mathcal{A})) - \frac{\alpha\gamma_{B_n^\mathcal{A},n}(a)}{n}(a(B_n^*) - a(B_i^\mathcal{A})) && (\because \text{\Cref{lem:mod1}})\\
        &= \left(1-\frac{\alpha\gamma_{B_n^\mathcal{A},n}(a)}{n}\right)(a(B_n^*) - a(B_i^\mathcal{A})).
    \end{align*}
\end{proof}
Finally, we prove \Cref{thm:modbound}.\\

\emph{Proof of \Cref{thm:modbound}.} By combining \Cref{lem:mod2} with $i=0,\ldots,n-1$, we get
\begin{align*}
    a(B_n^*)-a(B_n^\mathcal{A}) &\le \left(1-\frac{\alpha\gamma_{B_n^\mathcal{A},n}(a)}{n}\right)^n (a(B_n^*)-a(B_0^\mathcal{A}))\\
    &\le\left(1-\frac{\alpha\gamma_{B_n^\mathcal{A},n}(a)}{n}\right)^n a(B_n^*). && (\because \text{non-negativity of }a)
\end{align*}
Finally, we get the desired bound as follows:
\begin{align*}
    a(B_n^\mathcal{A}) &\ge a(B_n^*) - \left(1-\frac{\alpha\gamma_{B_n^\mathcal{A},n}(a)}{n}\right)^n a(B_n^*)\\
    &= \left(1-\left(1-\frac{\alpha\gamma_{B_n^\mathcal{A},n}(a)}{n}\right)^n \right) a(B_n^*)\\
    &\ge \left(1-\frac{1}{e^{\alpha\gamma_{B_n^\mathcal{A},n}(a)}}\right) a(B_n^*). 
    &&(\because (1-1/t)^t\le 1/e \text{~for any~} t\ge1)
\end{align*}
\qed

\subsubsection{Proof of \Cref{thm:divmodbound}}
\label{sec:divmodproof}
First, we recall \Cref{thm:divmodbound}.
\begin{theorem}
\label{thm:appdivmodbound}
    Let $s$ be a non-negative monotone set function and $\mathrm{div}$ be a sum-dispersion function. If $\mathcal{A}$ is an $\alpha$-approximation algorithm, 
    \Cref{alg:approxgreedy} with the set function $a = s/2+ \lambda \mathrm{div}$ returns $B_n$, an $(\alpha\hat{\gamma}/2)$-approximation to the optimal $n$-subset $B_n^*$ of $(s+\lambda \mathrm{div})$, where $\hat{\gamma}\coloneqq \gamma_{B_n^*\cup B_n,n}(s)$.
\end{theorem}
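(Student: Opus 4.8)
The plan is to reduce the statement to a bound on the surrogate objective that the greedy algorithm actually optimizes, and then to split the analysis into an $s$-channel controlled by the submodularity ratio and a $\mathrm{div}$-channel controlled by the metric triangle inequality. Write $g\coloneqq s+\lambda\,\mathrm{div}$ (the true objective) and $a\coloneqq s/2+\lambda\,\mathrm{div}$ (the objective fed to \Cref{alg:approxgreedy}). Since $s$ and $\mathrm{div}$ are non-negative, $g(B_n)=s(B_n)+\lambda\,\mathrm{div}(B_n)\ge \tfrac12 s(B_n)+\lambda\,\mathrm{div}(B_n)=a(B_n)$, so it suffices to prove $a(B_n)\ge \tfrac{\alpha\hat\gamma}{2}\,g(B_n^*)$. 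Telescoping gives $a(B_n)=a(\emptyset)+\sum_{i=0}^{n-1}\Delta_a(\xvec_i\mid B_i)\ge\sum_{i=0}^{n-1}\Delta_a(\xvec_i\mid B_i)$, and because $\mathcal{A}$ is an $\alpha$-approximation algorithm each increment obeys $\Delta_a(\xvec_i\mid B_i)\ge\alpha\max_{\xvec\notin B_i}\Delta_a(\xvec\mid B_i)$, which dominates the average of $\Delta_a(o\mid B_i)=\tfrac12\Delta_s(o\mid B_i)+\lambda\,d(o,B_i)$ over $o\in B_n^*\setminus B_i$.

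I would then bound the two halves of this marginal separately. For the $s$-part I reuse the submodularity-ratio step behind \Cref{thm:modbound}: taking $S_i=B_n^*\setminus B_i$ one has $B_i\subset (B_n^*\cup B_n)\setminus S_i$ and $|S_i|\le n$, so \Cref{def:submodind} with $\hat\gamma=\gamma_{B_n^*\cup B_n,n}(s)$ yields $\sum_{o\in S_i}\Delta_s(o\mid B_i)\ge\hat\gamma\,(s(B_i\cup B_n^*)-s(B_i))\ge\hat\gamma\,(s(B_n^*)-s(B_i))$ by monotonicity, which is exactly why the ratio must be taken on the union $B_n^*\cup B_n$. For the $\mathrm{div}$-part I would prove a metric lemma $\mathrm{div}(B_n^*)\le\mathrm{div}(B_n)+2\sum_{o\in B_n^*\setminus B_n}d(o,B_n)$, obtained by splitting $\mathrm{div}(B_n^*)$ over the blocks $B_n^*\cap B_n$ and $B_n^*\setminus B_n$ and applying $d(o,o')\le\tfrac{1}{|B_n|}\sum_{b\in B_n}\bigl(d(o,b)+d(b,o')\bigr)$ to the within-$(B_n^*\setminus B_n)$ pairs (using $|B_n^*\setminus B_n|\le n=|B_n|$). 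This triangle-inequality step is the source of both the factor $\tfrac12$ in the final bound and the design choice of running greedy on $s/2$ rather than $s$, and it is the place where Borodin's submodular argument is generalized.

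Combining the two channels with the per-step $\alpha$-approximation inequality and summing over $i$ (or telescoping through $B_0\subset\cdots\subset B_n$ in the manner of \Cref{thm:modbound}'s recursion) should assemble $a(B_n)\ge\tfrac{\alpha\hat\gamma}{2}\bigl(s(B_n^*)+\lambda\,\mathrm{div}(B_n^*)\bigr)$. The main obstacle I anticipate is the $\mathrm{div}$-channel: because sum-dispersion is \emph{supermodular}, its marginal $d(\xvec,B)$ \emph{grows} as $B$ grows, so there is no diversity analogue of the clean substitution $s(B_n^*)\le s(B_n)+\sum_{o}\Delta_s(o\mid B_n)$, and a naive comparison of an optimal element's diversity contribution $d(o,B_n)$ against the greedy increment at the step it was added points the wrong way. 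Threading the metric lemma so that its reference set and the greedy steps align, while simultaneously carrying the factor $\alpha$—which couples the $s$ and $\mathrm{div}$ comparisons because $\mathcal{A}$ approximately maximizes the combined $a$ and not the two terms individually—is where the real care lies, alongside the routine verification that $\mathrm{div}$ is non-negative and monotone and the handling of the empty-reference base case $i=0$.
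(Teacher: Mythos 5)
Your high-level plan coincides with the paper's proof: run \Cref{alg:approxgreedy} on $\tilde a\coloneqq s/2+\lambda\,\mathrm{div}$, telescope $\tilde a(B_n)=\sum_{i=0}^{n-1}\Delta_{\tilde a}(\xvec_i\mid B_i)$, lower-bound each increment by $\alpha$ times the average of $\Delta_{\tilde a}(\cdot\mid B_i)$ over $W_i\coloneqq B_n^*\setminus B_i$, handle the $s$-part with the submodularity ratio taken on $B_n^*\cup B_n$ (your $s$-channel step is exactly the paper's and is correct, including the reason the ratio must be anchored on the union), and finish with non-negativity and $\hat\gamma,\alpha\le 1$. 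One minor imprecision: the target you state, $a(B_n)\ge\tfrac{\alpha\hat\gamma}{2}\,g(B_n^*)$ with $a=\tilde a$, is not what these channel bounds can deliver, since the $s$-channel inevitably produces $\tfrac{\alpha\hat\gamma}{2}\bigl(s(B_n^*)-s(B_i)\bigr)$; what one actually proves carries a $-s(B_n)$ correction that must be moved to the left-hand side and absorbed at the very end, as the paper does. The genuine gap, however, is exactly where you flag it: the diversity channel. Your metric lemma $\mathrm{div}(B_n^*)\le\mathrm{div}(B_n)+2\sum_{o\in B_n^*\setminus B_n}d(o,B_n)$ is true (it is essentially \Cref{lem:ravi} applied with reference set $B_n$), but it is anchored to the \emph{final} greedy set, while the telescoped sum contains $d(W_i,B_i)$ with the \emph{intermediate} prefixes $B_i$; since $d(o,B_i)\le d(o,B_n)$, the lemma sits on the wrong side of what you need. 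This misalignment is not a detail to be ``threaded''; it is the missing theorem.

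The paper closes this gap with a step-indexed dispersion bound (\Cref{lem:borodin}), obtained by combining three applications of \Cref{lem:ravi} with the decomposition $\mathrm{div}(B_n^*)=\mathrm{div}(U)+d(U,W_i)+\mathrm{div}(W_i)$ under carefully chosen positive coefficients: for $|W_i|>1$, $d(B_i,W_i)\ge\frac{i\,|W_i|}{n(n-1)}\mathrm{div}(B_n^*)$. The crucial feature is that the coefficient grows linearly in the step index and $\sum_{i=0}^{n-1}\frac{i}{n(n-1)}=\frac12$ exactly; this summation, not a final-set triangle-inequality lemma, is where the $\frac{\alpha\lambda}{2}\mathrm{div}(B_n^*)$ term and hence the factor $1/2$ in the theorem come from. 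The natural repair of your approach, applying your lemma per step with reference set $B_i$, fails quantitatively: because $|W_i|$ can exceed $|B_i|=i$, one only gets $d(W_i,B_i)\ge\frac{i}{i+n}\bigl(\mathrm{div}(B_n^*)-\mathrm{div}(B_i)\bigr)$, and $\frac1n\sum_{i=0}^{n-1}\frac{i}{i+n}\to 1-\ln 2\approx 0.31<\tfrac12$, too weak for the claimed constant. You would also need the paper's separate treatment of the edge cases $n=1$ and $|W_i|=1$ (possible only at $i=n-1$), where \Cref{lem:borodin} does not apply.
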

\Cref{thm:divmodbound} suggests a bound for non-oblivious variation of the approximated greedy algorithm which guide the algorithm with the set function $a=s/2 + \lambda \mathrm{div}$ that is different to the actual objective function $s + \lambda \mathrm{div}$ \citep{nonoblivious}.
Our bound extends the theoretical bound of the non-oblivious exact greedy algorithm proved by \citet{borodin} when the objective set function is the sum of a submodular function and a sum-dispersion function.
Our proof combines the ideas from \citet{borodin} and \citet{submodindexbound}. To start, we define some notations as following:

\begin{table}[hbt!]
    \centering
    \caption{Notations for the proof of \Cref{thm:divmodbound}.}
    \resizebox{\columnwidth}{!}{
    \begin{tabular}{ll}
    \toprule
    Notation& Definition\\
    \midrule
    $\mathcal{A}$ & an $\alpha$-approximation algorithm.\\
    $n\in\mathbb{N}$ & a cardinality constraint.\\
    $s:2^\mathcal{X}\to\reals$ & a non-negative monotone set function.\\
    $\mathrm{div}:2^\mathcal{X}\to\reals$ & a dispersion function defined as $\mathrm{div}(B)=\frac{1}{2}\sum_{\xvec\in B}\sum_{\xvec'\in B} d(\xvec, \xvec')$ for a metric $d.$\\
    $\lambda > 0$ & a real-valued coefficient that controlls tradeoff between $s$ and $\mathrm{div}$.\\
    $a = s + \lambda \mathrm{div}$ & the actual objective set function to optimize.\\
    $\tilde{a} = s/2 + \lambda \mathrm{div}$ & the set function to optimize during subproblems of \Cref{alg:approxgreedy}.\\
      $\xvec_i^{\mathcal{A}}\in\mathcal{X}$   & the candidate appended at $i$-th step of \Cref{alg:approxgreedy} with $\tilde{a}=\frac{1}{2}s+\lambda\mathrm{div}$ and  $\mathcal{A}$.  \\
    $B_i^\mathcal{A}=\{\xvec_j^{\mathcal{A}}\mid 0\le j \le i-1\}\subset\mathcal{X}$   & the $i$-subset constructed by \Cref{alg:approxgreedy} with $\tilde{a}=\frac{1}{2}s+\lambda\mathrm{div}$ and  $\mathcal{A}$.\\
      $B_i^*\in\argmax_{B\subset \mathcal{X}, |B|=i} a(B)$   & the optimal $i$-subset of $a=s+\lambda \mathrm{div}$.  \\ 
      $\hat{\gamma}\coloneqq \gamma_{B_n^*\cup B_n^\mathcal{A},n}(s)$ & the submodularity index.\\
    \bottomrule
    \end{tabular}}
    \label{tab:notationdivmodproof}
\end{table}

For notational simplicity, we define $d(B,B')\coloneqq \sum_{\xvec \in B}\sum_{\xvec'\in B'} d(\xvec,\xvec')$ for any $B,B'\subset \mathcal{X}$. Then, $\mathrm{div}(B)=\frac{1}{2}d(B,B)$.
We introduce two lemmas from the previous works \citep{ravi,borodin}. For the completeness, we contain the proof of these lemmas.
\begin{lemma} \citep{ravi}
\label{lem:ravi}
    For a given metric function $d:\mathcal{X}\times\mathcal{X}\to\reals$ and two disjoint sets $B, B' \subset \mathcal{X},$ we have the following inequality:
        $(|B'|-1)d(B,B')\ge |B|\mathrm{div}(B').$
\end{lemma}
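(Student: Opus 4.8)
The plan is to reduce everything to the triangle inequality for the metric $d$ and then exploit the fact that the diagonal terms of the dispersion vanish, which is exactly what upgrades the naive bound (carrying a factor $|B'|$) to the sharper one (carrying the factor $|B'|-1$).

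First I would fix an element $\bfz \in B$ together with two \emph{distinct} elements $\xvec, \xvec' \in B'$, and combine the triangle inequality with the symmetry of $d$ to write $d(\xvec,\xvec') \le d(\xvec,\bfz) + d(\xvec',\bfz)$. Summing this over all $\bfz\in B$ introduces a factor $|B|$ on the left, giving $|B|\,d(\xvec,\xvec') \le \sum_{\bfz\in B}\big(d(\xvec,\bfz)+d(\xvec',\bfz)\big)$ for every pair $\xvec\neq\xvec'$ in $B'$.

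Next I would sum this inequality over all \emph{ordered} pairs $(\xvec,\xvec')\in B'\times B'$ with $\xvec\neq\xvec'$. The crucial choice is to sum only over off-diagonal pairs. On the left-hand side this produces $|B|\sum_{\xvec\neq\xvec'} d(\xvec,\xvec') = 2|B|\,\mathrm{div}(B')$, since the omitted diagonal terms $d(\xvec,\xvec)$ are zero and hence $\sum_{\xvec\neq\xvec'} d(\xvec,\xvec') = \sum_{\xvec,\xvec'\in B'} d(\xvec,\xvec') = 2\,\mathrm{div}(B')$. On the right-hand side, relabeling the two summation variables collapses the two terms into $2\sum_{\bfz\in B}\sum_{\xvec\neq\xvec'} d(\xvec,\bfz)$; because $d(\xvec,\bfz)$ does not depend on $\xvec'$, the inner sum over the $|B'|-1$ choices of $\xvec'\neq\xvec$ produces the factor $(|B'|-1)$, leaving $(|B'|-1)\sum_{\bfz\in B}\sum_{\xvec\in B'} d(\xvec,\bfz) = (|B'|-1)\,d(B,B')$. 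Thus the right-hand side equals $2(|B'|-1)\,d(B,B')$.

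Combining the two sides yields $2|B|\,\mathrm{div}(B') \le 2(|B'|-1)\,d(B,B')$, and dividing by $2$ gives the claim. The one step that needs genuine care—and the main obstacle—is the bookkeeping that separates the diagonal from the off-diagonal pairs: summing over all pairs including $\xvec=\xvec'$ would only yield the weaker factor $|B'|$, so it is essential to restrict to distinct pairs, where the triangle inequality is applied nontrivially. I would also note that disjointness of $B$ and $B'$ is not actually used, since the triangle inequality holds for arbitrary points regardless of overlap.
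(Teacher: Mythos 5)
Your proof is correct and is essentially the same argument as the paper's: both rest on the triangle inequality $d(\xvec,\xvec') \le d(\xvec,\bfz) + d(\xvec',\bfz)$ with $\bfz\in B$, discard the vanishing diagonal terms $d(\xvec,\xvec)=0$ to obtain the factor $|B'|-1$ instead of $|B'|$, and finish by the same relabeling/counting of off-diagonal pairs, differing only in whether one expands $|B|\,\mathrm{div}(B')$ first (the paper) or sums the pairwise inequality afterwards (you). Your side remark that disjointness of $B$ and $B'$ is never used is also accurate of the paper's proof.
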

\begin{proof}
{\begin{align*}
                |B| \mathrm{div}(B')
                &= \frac{1}{2}|B| d(B',B') \\
                &= \frac{1}{2}\sum_{\xvec\in B}\sum_{\xvec'\in B'} \sum_{\xvec''\in B'} d(\xvec',\xvec'') \\
                &= \frac{1}{2}\sum_{\xvec\in B}\sum_{\xvec'\in B'} \sum_{\xvec''\in B'\setminus\{\xvec'\}} d(\xvec',\xvec'') &&(\because d(\xvec',\xvec')=0)\\
                &\le \frac{1}{2} \sum_{\xvec\in B}\sum_{\xvec'\in B'} \sum_{\xvec''\in B'\setminus\{\xvec'\}} (d(\xvec,\xvec') + d(\xvec,\xvec'')) &&(\because \text{triangle inequality})\\
                &= \frac{1}{2} \sum_{\xvec\in B}\sum_{\xvec'\in B'} \sum_{\xvec''\in B'\setminus\{\xvec'\}} d(\xvec,\xvec') + \frac{1}{2} \sum_{\xvec\in B}\sum_{\xvec''\in B'} \sum_{\xvec'\in B'\setminus\{\xvec''\}} d(\xvec,\xvec'') \\
                &=\frac{|B'|-1}{2} \sum_{\xvec\in B}\sum_{\xvec'\in B'} d(\xvec,\xvec') + 
                \frac{|B'|-1}{2} \sum_{\xvec\in B}\sum_{\xvec''\in B'} d(\xvec,\xvec'')\\
                &= (|B'|-1)  \sum_{\xvec\in B}\sum_{\xvec'\in B'} d(\xvec,\xvec')
                = (|B'|-1) d(B,B').
            \end{align*}}
\normalsize
\end{proof}
\begin{lemma}\citep{borodin}
\label{lem:borodin}
            For $1\le i\le n$, let $U=B_n^*\cap B_i^\mathcal{A}$, $V=B_i^\mathcal{A}-U$, and $W=B_n^*-U$. If $|W|>1$, we have the following inequality:
        $$d(B_i^\mathcal{A},W)\ge \frac{i|W|}{n(n-1)}\mathrm{div}(B_n^*).$$
\end{lemma}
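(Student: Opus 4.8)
The plan is to reduce the claim to a purely metric inequality relating $\mathrm{div}(B_n^*)$ to the cross-distances $d(B_i^\mathcal{A},W)$, and then prove that inequality by three applications of Ravi's lemma (\Cref{lem:ravi}) together with one counting identity. First I would fix the disjoint decompositions $B_n^*=U\cup W$ and $B_i^\mathcal{A}=U\cup V$ (disjoint unions), and record that $V\cap W=\emptyset$, since any common element would lie in $B_i^\mathcal{A}\cap B_n^*=U$. Expanding the double sums $d(\cdot,\cdot)$ then gives
\[
\mathrm{div}(B_n^*)=\mathrm{div}(U)+d(U,W)+\mathrm{div}(W),\qquad d(B_i^\mathcal{A},W)=d(U,W)+d(V,W),
\]
so that, writing $i=|B_i^\mathcal{A}|$ and using positivity, the target $d(B_i^\mathcal{A},W)\ge\frac{i|W|}{n(n-1)}\mathrm{div}(B_n^*)$ is equivalent to
\[
\mathrm{div}(U)+d(U,W)+\mathrm{div}(W)\ \le\ \tfrac{n(n-1)}{i|W|}\bigl(d(U,W)+d(V,W)\bigr).
\]

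Next I would invoke \Cref{lem:ravi} three times, each time with $W$ (or $U$) disjoint from the paired set; this is exactly where the hypothesis $|W|>1$ is needed, as it permits dividing by $|W|-1$. Concretely: (i) with $B=W,\,B'=U$ gives $\mathrm{div}(U)\le\frac{|U|-1}{|W|}d(U,W)$; (ii) with $B=U,\,B'=W$ gives $d(U,W)\ge\frac{|U|}{|W|-1}\mathrm{div}(W)$; and (iii) with $B=V,\,B'=W$ gives $d(V,W)\ge\frac{|V|}{|W|-1}\mathrm{div}(W)$. Bound (i) collapses the first two terms, using $|U|-1+|W|=n-1$, into $\mathrm{div}(U)+d(U,W)\le\frac{n-1}{|W|}d(U,W)$, so it remains to establish
\[
\tfrac{n-1}{|W|}d(U,W)+\mathrm{div}(W)\ \le\ \tfrac{n(n-1)}{i|W|}\bigl(d(U,W)+d(V,W)\bigr).
\]

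Clearing denominators (multiplying by $\tfrac{i|W|}{n-1}$), this is precisely $(n-i)\,d(U,W)+n\,d(V,W)\ge\frac{i|W|}{n-1}\mathrm{div}(W)$. Here I would substitute the lower bounds (ii) and (iii), which is legitimate because $i\le n$ makes both coefficients $n-i$ and $n$ nonnegative; this turns the left-hand side into $\frac{(n-i)|U|+n|V|}{|W|-1}\mathrm{div}(W)$. The counting identity $(n-i)|U|+n|V|=i(n-|U|)=i|W|$, which follows from $|V|=i-|U|$ and $n-|U|=|W|$, simplifies this to $\frac{i|W|}{|W|-1}\mathrm{div}(W)$, and finally $|W|-1\le n-1$ gives $\frac{i|W|}{|W|-1}\mathrm{div}(W)\ge\frac{i|W|}{n-1}\mathrm{div}(W)$, closing the chain.

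The main obstacle I anticipate is constant-tracking rather than any deep idea: the $d(V,W)$ term must be retained throughout, because the tempting simplification $d(U,W)\le d(B_i^\mathcal{A},W)$ (which discards $d(V,W)$) is too lossy and fails once $i$ is large. The delicate point is to choose exactly which \emph{direction} of Ravi's lemma to apply to each of $\mathrm{div}(U)$, $d(U,W)$, and $d(V,W)$ so that the $|W|-1$ factors cancel and the identity $(n-i)|U|+n|V|=i|W|$ produces precisely the constant $\frac{i|W|}{n(n-1)}$; a careless pairing degrades the bound to one that only holds when $|W|\le 2$.
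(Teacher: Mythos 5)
Your proof is correct and is essentially the paper's own argument: it relies on the same decomposition $\mathrm{div}(B_n^*)=\mathrm{div}(U)+d(U,W)+\mathrm{div}(W)$ and the same three applications of \Cref{lem:ravi} (to the pairs $(V,W)$, $(U,W)$, and $(W,U)$, which are exactly the paper's three displayed inequalities), with the counting identity $(n-i)|U|+n|V|=i|W|$ and the final sign argument $|W|\le n$ also appearing in both. The only difference is presentational: you organize the ingredients as a sequential chain of reductions, whereas the paper packages the identical inequalities into a single explicit linear combination.
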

\begin{proof}
                Using \Cref{lem:ravi},
            \begin{align}
                (|W|-1)d(V,W) \ge |V|\mathrm{div}(W),\label{eq:bor1}\\
                (|W|-1)d(U,W) \ge |U|\mathrm{div}(W),\label{eq:bor2}\\
                (|U|-1)d(U,W) \ge |W|\mathrm{div}(U).\label{eq:bor3}
            \end{align}
            Also,
            \begin{align}
                \mathrm{div}(B^*_n) &= \frac{1}{2} \sum_{\xvec\in B^*_n} \sum_{\xvec'\in B^*_n} d(\xvec,\xvec') \nonumber \\
                &= \frac{1}{2} \sum_{\xvec\in U \cup W} \sum_{\xvec'\in U\cup W} d(\xvec,\xvec') \nonumber  && (\because B_n^*=U\cup W)\\
                &= \frac{1}{2} \left(\sum_{\xvec\in U}\sum_{\xvec'\in U} d(\xvec,\xvec')\right) + \left(\sum_{\xvec\in U}\sum_{\xvec'\in W} d(\xvec,\xvec')\right) + \frac{1}{2} \left(\sum_{\xvec\in W}\sum_{\xvec'\in W} d(\xvec,\xvec')\right) \nonumber\\
                &= \mathrm{div}(U) + d(U,W) + \mathrm{div}(W).
                \label{eq:bor4}
            \end{align}
                By combining four equations as $$\text{\Cref{eq:bor1}}\times \frac{1}{|W|-1}+\text{\Cref{eq:bor2}}\times\frac{|W|-|V|}{n(|W|-1)} + \text{\Cref{eq:bor3}}\times \frac{i}{n(n-1)} + \text{\Cref{eq:bor4}}\times \frac{i|W|}{n(n-1)},$$ we have the following inequality:
        \begin{equation}
        \label{eq:combine}
            d(U,W) + d(V,W) - \frac{i|W|(n-|W|)}{n(n-1)(|W|-1)}\mathrm{div}(W) \ge \frac{i|W|}{n(n-1)}\mathrm{div}(B^*_n).
        \end{equation}
        Hence, we have the desired result as follows:
        \begin{align*}
            d(B_i^\mathcal{A}, W) &= d(U\cup V, W) \\
            &= d(U,W) + d(V,W) && (\because U\cap V = \emptyset) \\ 
            &\ge d(U,W) + d(V,W) - \frac{i|W|(n-|W|)}{n(n-1)(|W|-1)}\mathrm{div}(W) && (\because 1<|W|\le |B_n^*| = n) \\ 
            &\ge \frac{i|W|}{n(n-1)}\mathrm{div}(B^*_n). && (\because  \text{\Cref{eq:combine}})
        \end{align*}
\end{proof}
For the convenience, we introduce the following lemma:
\begin{lemma}
\label{lem:ata}
    For any $B\subset \mathcal{X}$ and $\xvec\in\mathcal{X}$, the following inequality holds: $ \frac{1}{2}\Delta_a(\xvec\mid B) \le \Delta_{\tilde{a}}(\xvec\mid B) \le \Delta_a (\xvec\mid B). $
\end{lemma}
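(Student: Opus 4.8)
The plan is to use linearity of the marginal-gain operator together with monotonicity of the two pieces of the objective. Since $a = s + \lambda\,\mathrm{div}$ and $\tilde a = s/2 + \lambda\,\mathrm{div}$ are both linear combinations of $s$ and $\mathrm{div}$, and $B\mapsto \Delta_{(\cdot)}(\xvec\mid B)$ is itself linear in the set-function argument, I would first record
\[
\Delta_a(\xvec\mid B) = \Delta_s(\xvec\mid B) + \lambda\,\Delta_{\mathrm{div}}(\xvec\mid B),
\qquad
\Delta_{\tilde a}(\xvec\mid B) = \tfrac12\,\Delta_s(\xvec\mid B) + \lambda\,\Delta_{\mathrm{div}}(\xvec\mid B).
\]
Subtracting these gives the two identities that carry the whole argument: $\Delta_{\tilde a}(\xvec\mid B) = \Delta_a(\xvec\mid B) - \tfrac12\,\Delta_s(\xvec\mid B)$ and $\Delta_{\tilde a}(\xvec\mid B) = \tfrac12\,\Delta_a(\xvec\mid B) + \tfrac12\,\lambda\,\Delta_{\mathrm{div}}(\xvec\mid B)$.

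With these in hand, the lemma reduces to two non-negativity facts. First, $\Delta_s(\xvec\mid B)\ge 0$ holds because $s$ is monotone; substituting into the first identity yields the upper bound $\Delta_{\tilde a}(\xvec\mid B) = \Delta_a(\xvec\mid B) - \tfrac12\Delta_s(\xvec\mid B)\le \Delta_a(\xvec\mid B)$. Second, $\lambda\,\Delta_{\mathrm{div}}(\xvec\mid B)\ge 0$; substituting into the second identity yields the lower bound $\Delta_{\tilde a}(\xvec\mid B) = \tfrac12\Delta_a(\xvec\mid B) + \tfrac12\lambda\,\Delta_{\mathrm{div}}(\xvec\mid B)\ge \tfrac12\Delta_a(\xvec\mid B)$.

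The only actual computation is verifying $\Delta_{\mathrm{div}}(\xvec\mid B)\ge 0$ for the sum-dispersion. For $\xvec\notin B$ I would expand $\mathrm{div}(B\cup\{\xvec\}) = \tfrac12\sum_{\mathbf{y}\in B\cup\{\xvec\}}\sum_{\mathbf{y}'\in B\cup\{\xvec\}} d(\mathbf{y},\mathbf{y}')$, separate out the terms involving $\xvec$, and use $d(\xvec,\xvec)=0$ together with symmetry of $d$ to obtain $\Delta_{\mathrm{div}}(\xvec\mid B) = \sum_{\mathbf{y}\in B} d(\xvec,\mathbf{y}) = d(\xvec, B)\ge 0$, the final inequality being non-negativity of the metric; when $\xvec\in B$ the gain is trivially $0$. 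Combined with $\lambda>0$, this establishes the second fact.

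I do not expect a genuine obstacle here: the lemma is essentially bookkeeping around the factor $1/2$ attached to $s$. The one point to watch is that monotonicity of $\mathrm{div}$ must be verified directly from the metric axioms (rather than assumed, as it is for $s$), so that each of the two inequalities is tied to the correct source of non-negativity.
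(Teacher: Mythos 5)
Your proof is correct and takes essentially the same route as the paper's: both reduce the upper bound to $\Delta_s(\xvec\mid B)\ge 0$ (monotonicity of $s$) and the lower bound to $\lambda\,\Delta_{\mathrm{div}}(\xvec\mid B)\ge 0$, applied to the decomposition $\Delta_{\tilde a} = \tfrac12\Delta_s + \lambda\Delta_{\mathrm{div}}$. Your explicit verification that $\Delta_{\mathrm{div}}(\xvec\mid B) = d(\xvec,B)\ge 0$ from the metric axioms is a detail the paper leaves implicit (it simply cites monotonicity of $\mathrm{div}$), but it does not change the argument.
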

\begin{proof}
\begin{align*}
    \Delta_{\tilde{a}}(\xvec\mid B) = \frac{1}{2}\Delta_s(\xvec\mid B) + \lambda \Delta_{\mathrm{div}}(\xvec\mid B) &\le \Delta_s(\xvec\mid B) + \lambda \Delta_{\mathrm{div}}(\xvec\mid B) && (\because \text{monotoniciy of $s$})\\
    &=\Delta_a(\xvec\mid B),
\end{align*}
\begin{align*}
    \Delta_{\tilde{a}}(\xvec\mid B) = \frac{1}{2}\Delta_s(\xvec\mid B) + \lambda \Delta_{\mathrm{div}}(\xvec\mid B) &\ge \frac{1}{2}\left(\Delta_s(\xvec\mid B) + \lambda \Delta_{\mathrm{div}}(\xvec\mid B)\right) && (\because \text{monotoniciy of $\mathrm{div}$})\\
    &=\frac{1}{2}\Delta_a(\xvec\mid B).
\end{align*}
\end{proof}
\emph{Proof of \Cref{thm:divmodbound}.}\\
        \textbf{(Case 1: $n=1$)} 
        \\
        Let $\xvec^\dagger\in\mathcal{X}$ be the maximizer of $\Delta_{\tilde{a}}(\cdot \mid \emptyset)$ and $\xvec^*\in \mathcal{X}$ be the maximizer of $\Delta_{a}(\cdot\mid \emptyset)$, \ie $\{\xvec^*\}=B_1^*$. Then, we have
        \begin{align*}
        a(B_1^\mathcal{A}) 
        = a(\{\xvec_0^\mathcal{A}\})
        &= s(\{\xvec_0^\mathcal{A}\}) + \lambda \mathrm{div}(\{\xvec_0^\mathcal{A}\})\\
        &\ge \frac{1}{2}s(\{\xvec_0^\mathcal{A}\}) + \lambda \mathrm{div}(\{\xvec_0^\mathcal{A}\}) &&(\because \text{non-negativity of}~ s)\\
        &= \tilde{a}(\{\xvec_0^\mathcal{A}\})\\
        &= \Delta_{\tilde{a}} (\xvec_0^\mathcal{A}\mid\emptyset)\\
        &\ge \alpha \Delta_{\tilde{a}}(\xvec^\dagger\mid\emptyset) && (\because \xvec_0^\mathcal{A} \text{~is an $\alpha$-approximation})\\
        &\ge \alpha \Delta_{\tilde{a}}(\xvec^*\mid\emptyset) &&(\because \text{optimality of}~\xvec^\dagger)\\
        &= \alpha \left(\frac{1}{2} s(\{\xvec^*\}) + \mathrm{div}(\{\xvec^*\})\right)\\
        &\ge \frac{\alpha}{2} (s(\{\xvec^*\}) + \mathrm{div}(\{\xvec^*\})) && (\because \text{non-negativity of } \mathrm{div})\\
        & = \frac{\alpha}{2} a(\{\xvec^*\})
         =  \frac{\alpha}{2} a(B_1^*)\ge \frac{\alpha\hat{\gamma}}{2}a(B_1^*). 
        \end{align*}
        
(\textbf{Case 2}: $n>1$)\\ For any $1\le i < n$, let $U=B^*_n\cap B_i^\mathcal{A}$, $V=B_i^\mathcal{A}-U$, and $W=B^*_n-U$ as in \Cref{lem:ravi}.\\
(\textbf{Case 2.a}: $n>1$ and $|W|=1$)\\
In this case, we have $i=n-1$ and $B_i^\mathcal{A} \subset B^*_n$ since $i<n$.
Let $\xvec^*\in B_{n-1}^\mathcal{A}$ be the element that is not in $B_n^*$, \ie, $\{\xvec^*\}=B_n^*\setminus B_{n-1}^\mathcal{A}$.
Let $\xvec^\dagger\in \mathcal{X}\setminus B_{n-1}^\mathcal{A}$ be the maximizer of $\Delta_{\tilde{a}}(\cdot\mid B_{n-1}^\mathcal{A})$. Then,
        \begin{align*}
            a(B_n^\mathcal{A}) &= a(B_{n-1}^\mathcal{A}) +\Delta_a(\xvec_{n-1}^\mathcal{A}\mid B_{n-1}^\mathcal{A})
            && (\because B_n^\mathcal{A} = B_{n-1}^\mathcal{A} \cup \{\xvec_{n-1}^\mathcal{A}\})\\
                 &\ge a(B_{n-1}^\mathcal{A}) + \alpha \Delta_a(\xvec^\dagger \mid B_{n-1}^\mathcal{A})  && (\because \xvec_{n-1}^\mathcal{A} \text{~is an $\alpha$-approximation})\\
                 &\ge a(B_{n-1}^\mathcal{A}) + \alpha \Delta_{\tilde{a}}(\xvec^\dagger \mid B_{n-1}^\mathcal{A}) && (\because \text{\Cref{lem:ata}})\\
                 &\ge a(B_{n-1}^\mathcal{A}) + \alpha \Delta_{\tilde{a}}(\xvec^*\mid B_{n-1}^\mathcal{A}) &&(\because \text{optimality of}~\xvec^\dagger)\\
                 &\ge a(B_{n-1}^\mathcal{A}) + \frac{\alpha }{2} \Delta_{a}(\xvec^*\mid B_{n-1}^\mathcal{A})  && 
                 (\because \text{\Cref{lem:ata}})\\
                 &\ge \frac{\alpha}{2} (a(B_{n-1}^\mathcal{A}) +  \Delta_{a}(\xvec^*\mid B_{n-1}^\mathcal{A})) && (\because \text{non-negativity of $a$})\\
                 &= \frac{\alpha }{2} a(B^*_n) \ge \frac{\alpha \hat{\gamma}}{2} a(B_n^*). && (\because \text{\Cref{def:submodind}})
        \end{align*}
(\textbf{Case 2.b}: $n>1$ and $|W|>1$)\\
Now we can consider the case that $n > 1$ and $|W|>1$. Using \Cref{lem:borodin}, we have
\begin{equation}
    d(B_i^\mathcal{A},W)\ge \frac{i|W|}{n(n-1)}\mathrm{div}(B^*_n). 
    \label{eq:modiv1}
\end{equation}
Using the monotonicity of $s$ and the fact that $B_i^\mathcal{A}\cup W \subset B_i^\mathcal{A}\cup B_n^* \subset B_n^\mathcal{A}\cup B_n^*$, $|W|\le |B_n^*|=n$, and $W\cap B_i^\mathcal{A}=\emptyset$ with \Cref{def:submodind}, we have
\begin{equation}
    \sum_{\xvec\in W} \Delta_{s}(\xvec\mid B_i^\mathcal{A}) \ge \hat{\gamma}\left(s( B_i^\mathcal{A}\cup W) - s(B_i^\mathcal{A}) \right)\ge \hat{\gamma}(s(B_n^*) - s(B_n^\mathcal{A})).
    \label{eq:modiv2}
\end{equation}
        Thus, 
        \begin{align}
            \sum_{\xvec\in W} \Delta_{\tilde{a}}(\xvec\mid B_i^\mathcal{A}) 
            &= \sum_{\xvec\in W} \left(\frac{1}{2}\Delta_s(\xvec\mid B_i^\mathcal{A}) + \lambda d(\xvec, B_i^\mathcal{A})\right)\nonumber\\
            &= \sum_{\xvec\in W} \frac{1}{2}\Delta_s(\xvec\mid B_i^\mathcal{A}) + \lambda d(W, B_i^\mathcal{A})\nonumber\\
            &\ge \frac{\hat{\gamma}}{2}(s(B_n^*) - s(B_n^\mathcal{A})) + \frac{i\lambda|W|}{n(n-1)} \mathrm{div}(B_n^*). &&(\because \text{\Cref{eq:modiv1} and \Cref{eq:modiv2}})
            \label{eq:modivprev}
        \end{align}
By utilizing the previous inequality, we have
                \begin{align*}
                \Delta_{\tilde{a}}(\xvec_{i}^\mathcal{A}\mid B_i^\mathcal{A}) 
                &\ge \alpha \max_{\xvec\in \mathcal{X}\setminus B_i^\mathcal{A}}  \Delta_{\tilde{a}}(\xvec\mid B_i^\mathcal{A}) &&(\because \xvec_i^\mathcal{A} \text{~is an $\alpha$-approximation})
                \\
                &\ge \frac{\alpha}{|W|}\sum_{\xvec\in W} \Delta_{\tilde{a}}(\xvec\mid B_i^\mathcal{A})  \\
                &\ge \frac{\alpha\hat{\gamma}}{2|W|}(s(B_n^*)-s(B_n^\mathcal{A})) + \frac{i\alpha \lambda}{n(n-1)}\mathrm{div}(B_n^*) && (\because \text{\Cref{eq:modivprev}})\\
                &\ge \frac{\alpha\hat{\gamma}}{2n}(s(B_n^*)-s(B_n^\mathcal{A})) + \frac{i\alpha \lambda}{n(n-1)}\mathrm{div}(B_n^*). && (\because |W|\le |B_n^*| =n)
            \end{align*} 
By summing the inequality above for all $i$ from $0$ to $n-1$, we have
                \begin{align*}
                \frac{1}{2}s(B_n^\mathcal{A}) + \lambda \mathrm{div}(B_n^\mathcal{A}) = \tilde{a}(B_n^\mathcal{A}) &= \sum_{i=0}^{n-1} \Delta_{\tilde{a}}(\xvec_{i}^\mathcal{A}\mid B_i^\mathcal{A}) \\
                &\ge \frac{\alpha\hat{\gamma}}{2}(s(B^*_n) - s(B^\mathcal{A}_n)) + \frac{\alpha\lambda}{2}\mathrm{div}(B^*_n)\\
                &\ge \frac{\alpha \hat{\gamma}}{2}(s(B_n^*) - s(B_n^\mathcal{A}) + \lambda \mathrm{div}(B_n^*)). &&(\because \hat{\gamma}\le 1 \text{~and non-negativity of }\mathrm{div} ) 
            \end{align*}
Hence, $$\frac{1+\alpha\hat{\gamma}}{2}s(B_n^\mathcal{A}) + \lambda \mathrm{div}(B_n^\mathcal{A}) \ge \frac{\alpha\hat{\gamma}}{2}(s(B_n^*) + \lambda \mathrm{div}(B_n^*)).$$
Finally, we have
                \begin{align*}
                a(B_n^\mathcal{A}) = s(B_n^\mathcal{A}) + \lambda \mathrm{div}(B_n^\mathcal{A})&\ge \frac{1+\hat{\gamma}\alpha}{2}s(B_n^\mathcal{A}) + \lambda \mathrm{div}(B^\mathcal{A}_n) && (\because \hat{\gamma}, \alpha \le 1)\\
                &\ge \frac{\alpha\hat{\gamma}}{2}(s(B_n^*) + \lambda \mathrm{div}(B_n^*)) = \frac{\alpha\hat{\gamma}}{2} a(B^*_n).
            \end{align*}
        \qed

\subsubsection{Connection to Prior Bounds}
\begin{table}[hbt!]
    \centering
    \caption{Bounds for the exact greedy algorithm and the approximated greedy algorithm.}
    \resizebox{\columnwidth}{!}{
    \begin{tabular}{cllll}
    \toprule
    &  \multicolumn{2}{l}{Exact Greedy Algorithm} & \multicolumn{2}{l}{Approximated Greedy Algorithm}\\
    \cmidrule(lr){2-3}
    \cmidrule(lr){4-5}
         Condition on Acquisition & w/o diversity & w/ diversity & w/o diversity & w/ diversity\\
         \midrule
        Submodular & $1 - 1/e$ \citep{exactgreedybound} & $1/2$ \citep{borodin} & $1 - (1/e)^{\alpha}$ \citep{approxgreedy} & $\alpha / 2$ (\Cref{thm:divmodbound}, $\gamma=1$)\\
        Near-submodular & $1 - (1/e)^\gamma$ \citep{submodindexbound} & $\gamma/2$ (\Cref{thm:divmodbound}, $\alpha=1$) & $1 - (1/e)^{\alpha\gamma}$ (\Cref{thm:modbound})	&$\alpha\gamma / 2$ (\Cref{thm:divmodbound})\\
        \bottomrule
    \end{tabular}}
    \label{tab:boundsummary}
\end{table}
Since $\gamma_{B,n}(a) = 1$ for any $B$, $n$ when $a$ is monotone submodular, \Cref{thm:modbound} directly contains a bound for the monotone submodular case (\Cref{cor:submodbound}) proved by \citet{approxgreedy}.
\begin{corollary}
\label{cor:submodbound}
    Let $a:\mathcal{X}\to\reals$ be a non-negative monotone submodular set function. If $\mathcal{A}$ is an $\alpha$-approximation algorithm, \Cref{alg:approxgreedy} returns an $(1-1/e^\alpha)$-approximation to the optimal $n$-subset.
\end{corollary}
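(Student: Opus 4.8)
The plan is to obtain Corollary \ref{cor:submodbound} as an immediate specialization of \Cref{thm:modbound}. The only thing that needs to be verified is that the submodularity ratio satisfies $\gamma_{B,n}(a) = 1$ for every $B \subset \mathcal{X}$ and every $n \ge 1$ whenever $a$ is monotone submodular; once this is in hand, substituting $\gamma_{B_n,n}(a) = 1$ into the bound $a(B_n) \ge (1 - 1/e^{\alpha\gamma_{B_n,n}(a)})\,a(B_n^*)$ from \Cref{thm:modbound} yields exactly $a(B_n) \ge (1 - 1/e^{\alpha})\,a(B_n^*)$, which is the claimed $(1-1/e^\alpha)$-approximation.

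So the work reduces to showing $\gamma_{B,n}(a) = 1$. Recall from \Cref{def:submodind} that the submodularity ratio is defined as a minimum of ratios $\frac{\sum_{\xvec\in S}\Delta_a(\xvec\mid B')}{a(B'\cup S)-a(B')}$ over admissible $S$ and $B'$, and is always $\le 1$ by definition. Hence it suffices to prove that each individual ratio is $\ge 1$, i.e.\ that
\begin{equation*}
\sum_{\xvec\in S}\Delta_a(\xvec\mid B') \ge a(B'\cup S) - a(B')
\end{equation*}
for all admissible $S$ and $B'$. First I would handle the degenerate case: by monotonicity $a(B'\cup S) - a(B') \ge 0$, and if it equals $0$ the ratio is $1$ by the $0/0 \coloneqq 1$ convention in \Cref{def:submodind}. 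For the main case, I would enumerate $S = \{\xvec_1, \ldots, \xvec_k\}$ and write the right-hand side as the telescoping sum $a(B'\cup S) - a(B') = \sum_{j=1}^{k} \Delta_a(\xvec_j \mid B' \cup \{\xvec_1, \ldots, \xvec_{j-1}\})$.

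The crux is then a single application of submodularity (diminishing returns): since $B' \subseteq B' \cup \{\xvec_1, \ldots, \xvec_{j-1}\}$, the defining inequality of submodularity in the appendix gives $\Delta_a(\xvec_j \mid B' \cup \{\xvec_1, \ldots, \xvec_{j-1}\}) \le \Delta_a(\xvec_j \mid B')$ for each $j$. Summing over $j$ collapses the telescoping expression into $a(B'\cup S) - a(B') \le \sum_{j=1}^{k} \Delta_a(\xvec_j \mid B') = \sum_{\xvec\in S}\Delta_a(\xvec\mid B')$, establishing that the ratio is $\ge 1$. Combined with the universal upper bound $\gamma_{B,n}(a) \le 1$, this forces $\gamma_{B,n}(a) = 1$, completing the argument. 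I do not anticipate a genuine obstacle here; the one point requiring minor care is ensuring the elements of $S$ really lie outside $B'$ (so each $\xvec_j$ is a legitimate new element at its telescoping step), which is guaranteed by the constraint $B' \subset B \setminus S$ in \Cref{def:submodind}, and correctly invoking the $0/0$ convention so the statement holds even when marginal gains vanish.
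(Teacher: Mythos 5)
Your proof is correct and follows the same route as the paper, which simply notes that $\gamma_{B,n}(a)=1$ for any monotone submodular $a$ and then specializes \Cref{thm:modbound}. Your telescoping-sum verification of $\gamma_{B,n}(a)=1$ carefully fills in a standard fact that the paper asserts without proof, but it is the identical argument in substance.
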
 
Similarly, \Cref{thm:divmodbound} directly contains the following corollary.
\begin{corollary}
\label{cor:divsubmodbound}
    Let $s$ be a non-negative monotone submodular set function and $\mathrm{div}$ be a sum-dispersion function. If $\mathcal{A}$ is an $\alpha$-approximation algorithm, 
    \Cref{alg:approxgreedy} with the set function $(s/2+ \lambda \mathrm{div})$ returns $B_n$, an $(\alpha/2)$-approximation to the optimal $n$-subset of $(s+\lambda \mathrm{div})$.
\end{corollary}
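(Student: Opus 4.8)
The plan is to obtain this corollary as an immediate specialization of \Cref{thm:divmodbound}. That theorem already establishes, for a non-negative monotone $s$ and a sum-dispersion $\mathrm{div}$, that running \Cref{alg:approxgreedy} on $s/2 + \lambda\,\mathrm{div}$ with an $\alpha$-approximation algorithm returns $B_n$ satisfying $a(B_n) \ge (\alpha\hat{\gamma}/2)\,a(B_n^*)$, where $a = s + \lambda\,\mathrm{div}$ and $\hat{\gamma} = \gamma_{B_n\cup B_n^*,n}(s)$. Hence the only thing left to do is to show that the extra submodularity hypothesis on $s$ forces $\hat{\gamma} = 1$, after which substituting $\hat{\gamma}=1$ into the theorem yields the claimed $(\alpha/2)$-bound.

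So the one substantive step is to verify that $\gamma_{B,n}(s) = 1$ for every $B$ and every $n$ whenever $s$ is monotone submodular. Recalling \Cref{def:submodind}, the submodularity ratio is a minimum over $S$ and $B'\subset B\setminus S$ of the ratio $\sum_{\xvec\in S}\Delta_s(\xvec\mid B')\,/\,(s(B'\cup S)-s(B'))$, and is at most $1$ by definition. To show it is also at least $1$, I would fix any such $S=\{\xvec_1,\ldots,\xvec_k\}$ and $B'$, telescope the denominator as $s(B'\cup S)-s(B') = \sum_{j=1}^{k}\Delta_s(\xvec_j\mid B'\cup\{\xvec_1,\ldots,\xvec_{j-1}\})$, and then apply submodularity termwise, $\Delta_s(\xvec_j\mid B'\cup\{\xvec_1,\ldots,\xvec_{j-1}\}) \le \Delta_s(\xvec_j\mid B')$, to bound this by $\sum_{j=1}^{k}\Delta_s(\xvec_j\mid B') = \sum_{\xvec\in S}\Delta_s(\xvec\mid B')$. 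Thus every ratio is $\ge 1$, so the minimum is $\ge 1$, and combined with the $\le 1$ bound we get $\gamma_{B,n}(s)=1$. This is exactly the fact already quoted in the text just above \Cref{cor:submodbound}.

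With this in hand, $\hat{\gamma} = \gamma_{B_n\cup B_n^*,n}(s) = 1$, so \Cref{thm:divmodbound} specializes to $a(B_n)\ge (\alpha/2)\,a(B_n^*)$, which is the assertion. There is essentially no obstacle here: the work is entirely contained in the already-proved theorem, and the reduction rests only on the elementary submodularity-ratio computation above. The one point worth a moment's care is that the telescoping identity and the termwise submodularity inequality hold for an arbitrary enumeration of $S$, so the bound $\gamma \ge 1$ does not depend on the chosen ordering; this is immediate but should be stated to keep the argument airtight.
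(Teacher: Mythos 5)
Your proposal is correct and takes essentially the same route as the paper: the paper also obtains this corollary as an immediate specialization of \Cref{thm:divmodbound}, using the fact that $\gamma_{B,n}(s)=1$ for any $B$ and $n$ when $s$ is monotone submodular, so that $\hat{\gamma}=1$ and the bound becomes $\alpha/2$. Your telescoping argument for $\gamma_{B,n}(s)=1$ fills in a step the paper merely quotes without proof, but this does not change the structure of the argument.
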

Note that the $\alpha=1$ case of \Cref{cor:divsubmodbound} is the same as the bound for the exact greedy algorithm proved by \citet{borodin}.
Finally, \Cref{tab:boundsummary} summarizes the prior bounds and new bounds for the exact greedy algorithm and the approximated greedy algorithm for various conditions on the set function.

\newpage
\twocolumn
\section{Implementation Details}
\subsection{MDP Designs}
This paper considers the benchmark tasks on sequence data such as proteins and aptamers. 
There are a variety of MDP designs to model the sequence data as MDPs \citep{shen}. 
In this paper, we consider two designs, \emph{appending MDP} and \emph{editing MDP} of MDPs following prior works \citep{jain2023multi}. 
First, appending MDP designs any sequence from scratch. Assume a simple scenario that a search space is given by the fixed length sequence space $\mathcal{X}=\mathcal{V}^L$ is given by its length $L$ and a vocabulary set $\mathcal{V}$. For this space, we can write the state space and the action space as follows:
$$\mathcal{S} = \bigcup_{i=0}^L \mathcal{V}^L \cup \{s_\mathrm{term}\},\mathcal{A} = \mathcal{V} \cup \{a_\mathrm{term}\}.$$
Here, the initial state is always given as an empty sequence, and the MDP has a deterministic transition function $\mathcal{T}:\mathcal{S}\times\mathcal{A}\to\mathcal{S}$. Concretely,
$$\mathcal{T}(s,a)=\begin{cases}
  s \oplus a & \text{if $a\in\mathcal{V}$ and $\mathrm{len}(s)\le L-1$ },\\
  s_\mathrm{term} &\text{otherwise}.
\end{cases}$$
In other words, action $a\in\mathcal{A}$ appends the chosen token to the current state $s\leftarrow s\oplus a$ or terminate the construction $s\leftarrow s_\mathrm{term}$. As introduced in \Cref{main:rlintro}, a reward is given at the terminal state by the objective value of a resulting sequence. Hence, we do not distinguish the terms rewards and returns in this scenario.
In this MDP design, a generic policy model $\pi_\theta:\mathcal{S}\to\mathcal{A}$ outputs an action distribution (token distribution) given state $s$ (subsequence).

Otherwise, editing MDP designs any sequence by editing a sequence in the given pool of candidates. 
Hence, an action corresponds to an edit operation on the sequence. 
As in \citet{jain2023multi}, we concentrate on substitution operations for the action space. 
In short, the state space is a possible set of sequences obtained by editing a given pool of candidates. The action space can be represented by 
$$\mathcal{A} = ([L_\text{max}] \times \mathcal{V}) \cup \{a_\mathrm{term}\}$$ where $L_\text{max}$ is the maximum length of the sequence that MDP considers, and $\mathcal{V}$ is a vocabulary set. Briefly, an action $(l, t)\in [L_\text{max}]\times \mathcal{V}$ substitutes $l$-th token in the target sequence to $t$.
 
\subsection{Architectures}
\label{sec:arch}
\textbf{State encoders and action decoders.}
We adopt the policy model architectures described by \citet{jain2023multi} for encoding states and decoding action logits in both appending and editing MDPs. The encoder architectures for both types of MDPs leverage transformer architectures to convert sequences into hidden features \citet{Devlin2019BERTPO}. In the appending MDP scenario, an MLP head predicts the action logit corresponding to the next token to be appended. For editing MDPs, an additional MLP head is employed to predict logits on positions, indicating the probability of substituting at that position. For a fair comparison, we maintain consistent configurations for state encoders and action decoders across all MDP-based subset selection methods (PC-RL, PC-MOGFN, Greedy + RL, and Ours).

\textbf{Set encoders.}
Like preference conditioning methods, we incorporate a set encoder to extract features from given sets, employing a deep set architecture designed for point cloud classification tasks \citep{deepset}. This architecture includes $3$ equivariant max-pooling layers with tanh activations and a $2$-layer MLP head to derive hidden features from the sets. The parameter count in the set encoder depends on the dimension $m$ of each point in the set and the hidden feature dimension $N_\text{hid}$, where $m$ aligns with the number of objectives in our approach. Note that the set encoder's total parameter count, calculated as $2mN_\text{hid} + 6N_\text{hid}^2$, is substantially lower than that of transformer-based state encoders. For instance, a 3-layer transformer employed in bigrams tasks possesses more than $30N_\text{hid}^2$ parameters.

\textbf{Incorporating MLM logits for decoding actions.}
Following \citet{jain2023multi}, we utilize appending MDP for single-round subset selection tasks with deterministic objective functions. 
For batch BO experiments in biological sequences, we utilize editing MDP architectures. Drawing inspiration from the LaMBO architecture \citep{lambo}, we introduce a variant of our method that integrates an MLM model trained on previously evaluated data points during action decoding. Like LaMBO, which optimizes in MLM latent space using data-trained autoencoders, our approach benefits from initiating optimization with tokens likely found in the evaluated data. 
Note that, in our experimental setup, the MLM model is concurrently trained with the surrogate model. Additionally, for the pool of candidates to be edited, we employ hashing on the MLM predictions for each position, introducing a minimal additional computational cost.

\subsection{Experimental Settings}

\label{sec:settings}
\textbf{Single-round subset selection with Bigrams tasks.}
For our main experiments in single-round subset selection, we employ the bigrams benchmark tasks as implemented by \citet{lambo} and \citet{jain2023multi}. Each task is designed around target bigrams, with each corresponding to a specific objective. In this setup, synthetic bigram matching objectives serve as a deterministic surrogate model for tackling the active learning inner loop problem, with no prior data points evaluated and the Hypervolume indicator functioning as the batch acquisition function. \Cref{tab:exp:bigrams} details the bigrams tasks.

We compare our method against PC-RLs and greedy-based approaches.
Note that we train a single set-conditioned policy model to sample subsets with various cardinalities in this experiment.
Our set-conditioned policy is trained with parameters $n_\text{train}=64$, $N_t=4$, over $N_u=4000$ update steps, and a batch size of $N_e= 128$. At every 500 steps, we perform greedy sampling $\mathrm{GS}(a,\pi_\theta^\text{set},n,l=128)$ for each cardinality constraint $n$ to evaluate the Hypervolume indicator value. After the training ends, we report the best evaluated results for each $n$.

Given that $n_\text{train}/2$ samples are used for subset $B$ conditioning at each step, our method uses $N_e + n_\text{train}/2 = 128 + 32 = 160$ samples per step. In contrast, other RL-based methods using $N_e=128$ operate with 20\% fewer samples. For a fair comparison, we provide other RL-based methods, including PC-RL (TS), PC-RL (WS), and Greedy + RL, with total $5000$ update steps, 25\% more steps than ours.

For the PC-RL baselines, preference vectors are sampled from a Dirichlet distribution with $\alpha=1$. Like our method, we train a single PC-policy and utilize this policy to sample subsets in various cardinalities $n$s.
To construct $n$-subset, we sample $n$ preference vectors. 
For each vector, $l=128$ candidates are sampled, conditioned on the preference vector. Then, the $n$-subset is formed by choosing the top candidate for each preference vector, adhering to protocols established in PC-based methods \citep{jain2023multi,zhu2023sampleefficient}.
Similar to our approach,
this sampling process is executed for each cardinality $n$ at every 500 steps, continuing until the total number of update steps, $N_u=5000$, is attained.
\begin{table}[hbt!]
    \centering
    \caption{Settings of bigrams tasks.}
    \resizebox{\columnwidth}{!}{
    \begin{tabular}{crrrr}
\toprule
    Task & Target Bigrams & Min. Len. & Max. Len.& Cardinalities\\
    \midrule
       2 Bigrams &  AV, VC & 32 & 36 & 4, 16\\
       3 Bigrams &  AV, VC, CA& 32 & 36 & 4, 16, 64\\
       4 Bigrams &  AV, VC, CA, AW& 32 & 36 & 4, 16, 64, 256\\
       \bottomrule
    \end{tabular}}
    \label{tab:exp:bigrams}
\end{table}
\begin{table}[hbt!]
\centering
\caption{Hyperparameters for bigrams tasks and the DNA aptamer task}
\begin{tabular}{rr}
\toprule
Hyperparameter & Values\\
   \midrule
   $\eta$   & 1E-4, 1E-5, 1E-6 \\
   Random Action Prob.  & 0, 0.05\\
   \bottomrule
\end{tabular}
\label{tab:hyperparams}
\end{table}
For greedy approaches, optimization is carried out individually for each $n$, with a total budget of $B = N_u * N_e + (N_u/500) \times n \times l$ allocated for surrogate model queries across the process for a fair comparison.  This allocation allows each iteration of the greedy method to use $B/n$ budget for optimization.  Greedy + RS selects the best sequence from $B/n$ randomly sampled sequences at each iteration. Greedy + HC starts from a random sequence and iteratively moves to the optimal sequence within a 1-Hamming distance, restarting from another random point if necessary until the surrogate model budget of $B/n$ is reached.
Specifically, Greedy + RL utilizes $N_e=128$ and sets $N_u/n$ as update steps for each greedy loop. We determine the number of samples to deploy during each greedy loop based on $B/n$. 

All RL-based methods employ a transformer encoder architecture with 3 layers, 8 heads, and a hidden dimension of 128. 
Also, we normalize returns as in \Cref{alg:trainsetpolicy} for all RL-based methods.
For RL-based methods (Ours, PC-RLs, Greedy + RL), we  tune the hyperparameters among the combinations in \Cref{tab:hyperparams}. For each combination of hyperparameters, we run 10 trials and  report the result from the best hyperparameters.

\textbf{Single-round subset selection with DNA aptamers.}
We utilize three objectives, the number of hairpins, the number of pairs, and the energy value computed by the NUPACK library \citep{nupack}, adopting the implementation of \citet{jain2023multi}.
In this setting, we use a larger transformer architecture with 4 layers, 16 heads, and a hidden dimension of 256. 
We set $N_u=2000$ for our method and allocate 25\% more update steps to other methods as in bigrams tasks.
Other parameter settings are identical to the bigrams tasks.

\begin{table*}[hbt!]
    \centering
    \caption{Ablation on training cardinality constraint. The mean and standard deviation values are calculated for 10 trials.}
\resizebox{\textwidth}{!}{
\begin{tabular}{lccccccccccccccccccc}
\toprule
& \multicolumn{9}{c}{Hypervolume Indicator ($\uparrow$)}\\
\cmidrule(lr){2-10}
         & \multicolumn{2}{c}{2 Bigrams}& \multicolumn{3}{c}{3 Bigrams}& \multicolumn{4}{c}{4 Bigrams}\\
    \cmidrule(lr){2-3}
    \cmidrule(lr){4-6}
    \cmidrule(lr){7-10}
Method     &  $n=4$ & $n=16$ &  $n=4$ & $n=16$ & $n=64$& $n=4$ & $n=16$ & $n=64$ & $n=256$\\
    \cmidrule(lr){1-3}
    \cmidrule(lr){4-6}
    \cmidrule(lr){7-10}
Optimum & \multicolumn{2}{c}{0.630} & \multicolumn{3}{c}{0.409}  & \multicolumn{4}{c}{0.106} \\
    \cmidrule(lr){1-3}
    \cmidrule(lr){4-6}
    \cmidrule(lr){7-10}
Exact Greedy& 0.568 & 0.630 & 0.350 & 0.408 & 0.409 & 0.055 & 0.078 & 0.097 & 0.106   \\
    \cmidrule(lr){1-3}
    \cmidrule(lr){4-6}
    \cmidrule(lr){7-10}
Ours ($n_\text{train}=128$) &\textbf{0.568} (0.000)&\textbf{0.630} (0.000)&\textbf{0.329} (0.005)&0.345 (0.003)&0.354 (0.006)& \textbf{0.055} (0.001)& 0.076 (0.001)&0.090 (0.002)&\textbf{0.094} (0.002)\\
Ours ($n_\text{train}=64$)       & \textbf{0.568} (0.000)& \textbf{0.630} (0.000) & \textbf{0.329} (0.005) & \textbf{0.349} (0.007) & \textbf{0.359} (0.003)&\textbf{0.055} (0.000) &\textbf{0.077} (0.000)& \textbf{0.091} (0.002)& \textbf{0.094} (0.003) \\
Ours ($n_\text{train}=32$) &\textbf{0.568} (0.000)&\textbf{0.630} (0.000)&0.328 (0.003)&0.345 (0.007)&\textbf{0.359} (0.005)&\textbf{0.055} (0.000)&0.076 (0.000)& 0.086 (0.002)&0.089 (0.002)\\
Ours ($n_\text{train}=16$)  &0.564 (0.013)&0.622 (0.023)&0.326 (0.006)&0.352 (0.005)&0.355 (0.005)&\textbf{0.055} (0.000) &0.074 (0.001)&0.081 (0.002)&0.084 (0.001)\\
Ours ($n_\text{train}=4$)  &0.525 (0.000)&0.537 (0.000)&0.326 (0.007)&0.347 (0.007)&0.355 (0.005)&0.052 (0.001)&0.065 (0.002)&0.069 (0.003)&0.072 (0.003)\\
\bottomrule
\end{tabular}}
\label{tab:appmain}
\end{table*}

\textbf{Batch BO Experiments.}
For the batch BO, we consider three benchmarks from \citet{lambo}. 
Our primary benchmark is the RFP task, which optimizes the stability and solvent-accessible surface area (SASA) of RFPs. Additionally, we conduct experiments on two other benchmarks: 3 Bigrams (\Cref{tab:exp:bigrams}) and small molecules. The latter optimizes the logP and quantitative estimate of drug-likeness (QED) of SELFIES-encoded small molecules \citep{qed,selfies}.

Beyond addressing the issues identified in LaMBO's implementation (as detailed in \Cref{sec:issues}), we adopt similar experimental setups, with the exception of the number of samples generated at each inner loop. 
We train a set-conditioned policy with $n_\text{train}=n=16$, $N_t=1$, over $N_u=256$ update steps and a batch size of $N_e=128$. At every $64$ step, we perform greedy sampling $\mathrm{GS}(a,\pi_\theta^\text{set},n=16,l=16)$, and propose the best sampled subset with the highest batch acquisition value for the proposal batch.
To ensure a fair comparison, we increase the number of samples generated per step of each baseline method, from $32$ to $2048$ for MBGA, and from $16$ to $256$ for LaMBO.
These modification leads to improvement in performance of baseline methods as illustrated in \Cref{fig:raise}.
In addition, the modification results in the end-to-end process for the RFP task taking a similar scale of runtime between $2$ to $3$ days for all active learning based methods we consider in this scenario. 
Also, we utilize the same architecture and training algorithm for updating MTGP models for a fair comparison.
For set conditioning, we set the continuous features, $\mathrm{feat}(\xvec)\coloneqq \tilde{f}_\mathrm{UCB}(\xvec;\beta=0.1)$, for all experiments with statistical surrogate models.
Also, we set the learning rate $\eta=0.0001$ and set the random action probability to $0$.
Finally, we set the maximum edit budget of editing MDP to $1$ as in LaMBO and MBGA.

\section{Additional Experiments}
\begin{figure}[t]
\centering
\resizebox{\columnwidth}{!}{
		\begin{tikzpicture}
		\begin{axis}[
		width=8.5cm,
		height=8.0cm,
		every axis plot/.append style={thick},
		grid=major,
		scaled ticks = false,
		ylabel near ticks,
		tick pos=left,
		tick label style={font=\large},
		xtick={0, 250, 500, 750, 1000},
		xticklabels={0, 250, 500, 750, 1000},
		ytick={1, 50, 100},
		yticklabels={1, 50, 100},
		label style={font=\large},
		xlabel={Number of queries},
		xlabel style={at={(0.5,0.0)}},
		ylabel={Relative Hypervolume ($\uparrow$)},
		ylabel style={align=center, at={(-0.1,0.5)}},
		xmin=0,
		xmax=512,
		ymin=0.8,
		ymax=120.0,
            legend cell align={left},
		legend style={legend columns=1, at={(1.6, 0.77)}, font=\small},
        ]
        \begin{pgfonlayer}{main}
            \addplot[cyan, mark size=1.5pt, no markers] table [x=x_vals, y=y_med, col sep=comma]{csvs/main_regex/Ours_0.0001_use_mlm.csv};
		\addlegendentry{Ours w/ MLM}
            \addplot[blue, mark size=1.5pt, no markers] table [x=x_vals, y=y_med, col sep=comma]{csvs/main_regex/Ours_0.0001.csv};
		\addlegendentry{Ours w/o MLM}
            \addplot[green!80!black!100, mark size=1.5pt, no markers] table [x=x_vals, y=y_med, col sep=comma]{csvs/main_regex/LaMBO_Long.csv};
		\addlegendentry{LaMBO}
    \addplot[red, mark size=1.5pt, no markers] table [x=x_vals, y=y_med, col sep=comma]{csvs/main_regex/MBGA_Long.csv};
		\addlegendentry{MBGA}
  		\addplot[magenta, mark size=1.5pt, no markers] table [x=x_vals, y=y_med, col sep=comma]{csvs/main_regex/NSGA.csv};
		\addlegendentry{NSGA-II}
  \end{pgfonlayer}
            \addplot[name path=r1, blue!20, mark size=1.5pt, no markers, line width=0pt,opacity=0] table [x=x_vals, y=y_ub, col sep=comma]{csvs/main_regex/Ours_0.0001.csv};
            \addplot[name path=r2, blue!20, mark size=1.5pt, no markers, line width=0pt,opacity=0] table [x=x_vals, y=y_lb, col sep=comma]{csvs/main_regex/Ours_0.0001.csv};
          \addplot fill between[ 
            of = r1 and r2, 
            split, 
            every even segment/.style = {blue!50,opacity=0.3},
            ];
            \addplot[name path=b1, green!20, mark size=1.5pt, no markers, line width=0pt,opacity=0] table [x=x_vals, y=y_ub, col sep=comma]{csvs/main_regex/LaMBO_Long.csv};
            \addplot[name path=b2, green!20, mark size=1.5pt, no markers, line width=0pt,opacity=0] table [x=x_vals, y=y_lb, col sep=comma]{csvs/main_regex/LaMBO_Long.csv};
          \addplot fill between[ 
            of = b1 and b2, 
            split, 
            every even segment/.style = {green!50,opacity=0.3},
            ];
    \addplot[name path=g1, red!20, mark size=1.5pt, no markers, line width=0pt,opacity=0] table [x=x_vals, y=y_ub, col sep=comma]{csvs/main_regex/MBGA_Long.csv};
    \addplot[name path=g2, red!20, mark size=1.5pt, no markers, line width=0pt,opacity=0] table [x=x_vals, y=y_lb, col sep=comma]{csvs/main_regex/MBGA_Long.csv};
          \addplot fill between[ 
            of = g1 and g2, 
            split, 
            every even segment/.style = {red!50,opacity=0.3},
            ];
  		\addplot[name path=m1, magenta!20, mark size=1.5pt, no markers, line width=0pt,opacity=0] table [x=x_vals, y=y_ub, col sep=comma]{csvs/main_regex/NSGA.csv};
  		\addplot[name path=m2, magenta!20, mark size=1.5pt, no markers, line width=0pt,opacity=0] table [x=x_vals, y=y_lb, col sep=comma]{csvs/main_regex/NSGA.csv};
              \addplot fill between[ 
            of = m1 and m2, 
            split, 
            every even segment/.style = {magenta!50,opacity=0.3},
            ];
            \addplot[name path=n1, black!20, mark size=1.5pt, no markers, line width=0pt,opacity=0] table [x=x_vals, y=y_ub, col sep=comma]{csvs/main_regex/Ours_0.0001_use_mlm.csv};
                \addplot[name path=n2, black!20, mark size=1.5pt, no markers, line width=0pt,opacity=0] table [x=x_vals, y=y_lb, col sep=comma]{csvs/main_regex/Ours_0.0001_use_mlm.csv};
          \addplot fill between[ 
            of = n1 and n2, 
            split, 
            every even segment/.style = {cyan!60,opacity=0.3},
            ];

		\end{axis}
		\end{tikzpicture}
  }
 \caption{Multi-round active learning results on the 3 Bigrams task when using NEHVI as the batch acquisition function under a query limit of $512$. Midpoint, lower, and upper boundaries show the 50th, 30th, and 70th percentiles, respectively, derived from 10 trials.}
	\label{fig:bigramsapp}
\end{figure}


\subsection{Additional Results on Synthetic Tasks}
\label{sec:addexp1}
\textbf{Ablation on cardinality constraint.}
In our experiments, we differentiate training cardinality constraints from sampling constraints during our experiments on bigrams tasks. We adjust the training cardinality ($n_\text{train}$) and evaluate the effectiveness of models across varying set sizes through greedy sampling. To ensure comparable execution speed, $N_t$, the training step count, is set to  $\max(1, n_\text{train}/16)$. \Cref{tab:appmain} demonstrates that models trained with larger set sizes yield superior results across both smaller and larger cardinalities in bigrams tasks. 
As we introduced in \Cref{sec:settings}, `Ours' with training cardinality constraint $n_\text{train}=64$ corresponds to the `Ours' in \Cref{tab:main}.

\subsection{Additional Results on Batch BO Benchmarks}
\label{sec:addexp2}
\textbf{Additional multi-round batch BO results.}
\Cref{fig:bigramsapp} illustrates the multi-round batch BO results on the 3 bigrams task with the MTGP surrogate model and NEHVI batch acquisition function. The results show that our method without MLM achieves higher relative Hypervolume indicator values faster than the baseline active learning results. However, for this synthetic task, MLM based strategy was not helpful for achieving the better performance as reported in \citet{lambo}.
Next, we conduct multi-round batch BO with UCBHVI batch acquisition function on the RFP task. 
\Cref{fig:RFPUCB} illustrates the performance of our method and the baseline methods equipped with UCBHVI.
The results show that our methods achieve superior performance in this setting.


\begin{figure}[t]
\centering
\resizebox{\columnwidth}{!}{
		\begin{tikzpicture}
		\begin{axis}[
		width=8.5cm,
		height=8.0cm,
		every axis plot/.append style={thick},
		grid=major,
		scaled ticks = false,
		ylabel near ticks,
		tick pos=left,
		tick label style={},
		xtick={0, 250, 500, 750, 1000},
		xticklabels={0, 250, 500, 750, 1000},
		ytick={0, 0.5, 1.0, 1.5, 2.0, 2.5, 3.0},
		yticklabels={0, 0.5, 1.0, 1.5, 2.0, 2.5, 3.0},
		label style={},
		xlabel={Number of queries},
		xlabel style={at={(0.5,0.0)}, font=\large},
		ylabel={Relative Hypervolume ($\uparrow$)},
		ylabel style={align=center, at={(-0.1,0.5)}, font=\large},
		xmin=0,
		xmax=512,
		ymin=0.9,
		ymax=2.6,
            legend cell align={left},
		legend style={legend columns=1, at={(1.6, 0.77)}, font=\small},
        ]
        \begin{pgfonlayer}{main}
            \addplot[cyan, mark size=1.5pt, no markers] table [x=x_vals, y=y_med, col sep=comma]{csvs/main_proxy_rfp_ucb/Ours_0.0001_ucb_use_mlm.csv};
		\addlegendentry{Ours w/ MLM}
            \addplot[blue, mark size=1.5pt, no markers] table [x=x_vals, y=y_med, col sep=comma]{csvs/main_proxy_rfp_ucb/Ours_0.0001_ucb.csv};
		\addlegendentry{Ours w/o MLM}
            \addplot[green!80!black!100, mark size=1.5pt, no markers] table [x=x_vals, y=y_med, col sep=comma]{csvs/main_proxy_rfp_ucb/LaMBO_Long_ucb.csv};
		\addlegendentry{LaMBO}
    \addplot[red, mark size=1.5pt, no markers] table [x=x_vals, y=y_med, col sep=comma]{csvs/main_proxy_rfp_ucb/MBGA_Long_ucb.csv};
		\addlegendentry{MBGA}
        \end{pgfonlayer}
            \addplot[name path=n1, black, mark size=1.5pt, no markers, line width=0pt,opacity=0] table [x=x_vals, y=y_ub, col sep=comma]{csvs/main_proxy_rfp_ucb/Ours_0.0001_ucb_use_mlm.csv};
            \addplot[name path=n2, black, mark size=1.5pt, no markers, line width=0pt,opacity=0] table [x=x_vals, y=y_lb, col sep=comma]{csvs/main_proxy_rfp_ucb/Ours_0.0001_ucb_use_mlm.csv};
            \addplot[name path=r1,red!20, mark size=1.5pt, no markers, line width=0pt,opacity=0] table [x=x_vals, y=y_ub, col sep=comma]{csvs/main_proxy_rfp_ucb/Ours_0.0001_ucb.csv};
            \addplot[name path=r2,red!20, mark size=1.5pt, no markers, line width=0pt,opacity=0] table [x=x_vals, y=y_lb, col sep=comma]{csvs/main_proxy_rfp_ucb/Ours_0.0001_ucb.csv};
            \addplot[name path=b1,blue!20, mark size=1.5pt, no markers, line width=0pt,opacity=0] table [x=x_vals, y=y_ub, col sep=comma]{csvs/main_proxy_rfp_ucb/LaMBO_Long_ucb.csv};
            \addplot[name path=b2,blue!20, mark size=1.5pt, no markers, line width=0pt,opacity=0] table [x=x_vals, y=y_lb, col sep=comma]{csvs/main_proxy_rfp_ucb/LaMBO_Long_ucb.csv};
    \addplot[name path=g1,green!20, mark size=1.5pt, no markers, line width=0pt,opacity=0] table [x=x_vals, y=y_ub, col sep=comma]{csvs/main_proxy_rfp_ucb/MBGA_Long_ucb.csv};
    \addplot[name path=g2,green!20, mark size=1.5pt, no markers, line width=0pt,opacity=0] table [x=x_vals, y=y_lb, col sep=comma]{csvs/main_proxy_rfp_ucb/MBGA_Long_ucb.csv};
                  \addplot fill between[ 
            of = r1 and r2, 
            split, 
            every even segment/.style = {blue!30,opacity=0.5},
            ];
                      \addplot fill between[ 
            of = b1 and b2, 
            split, 
            every even segment/.style = {green!50,opacity=0.5},
            ];
                  \addplot fill between[ 
            of = g1 and g2, 
            split, 
            every even segment/.style = {red!30,opacity=0.5},
            ];
                      \addplot fill between[ 
            of = m1 and m2, 
            split, 
            every even segment/.style = {magenta!30,opacity=0.5},
            ];
		\end{axis}
		\end{tikzpicture}
}
 \caption{Multi-round active learning results on the RFP task when using UCBHVI as the batch acquisition function under a query limit of $512$. Midpoint, lower, and upper boundaries show the 50th, 30th, and 70th percentiles, respectively, derived from 10 trials. }
	\label{fig:RFPUCB}
\end{figure}

\begin{table}[t]
    \centering
    \caption{Subset selection results in the first round when optimizing various batch acquisition functions (NEHVI, UCBHVI, PES) on the RFP task. `Ours-half' refers to our method with half the number of update steps. The mean and standard deviation values are calculated for 10 trials.}
    \begin{subtable}[t]{\columnwidth}
    \centering
    \caption{NEHVI.}
    \vspace{-0.5em}
    \resizebox{0.8 \columnwidth}{!}{
    \begin{tabular}{ccccc}
    \toprule
Method & NEHVI Value ($\uparrow$) & Runtime (mins) ($\downarrow$)\\
    \cmidrule(lr){1-3}
Ours	&\textbf{0.779 (0.045)}	&18.9 \\
Ours-half	&0.778 (0.033)	&\textbf{9.5} \\
LaMBO	&0.591 (0.033)	&24.4 \\
MBGA	&0.654 (0.052)	&14.0 \\
\bottomrule
    \end{tabular}}
    \end{subtable}
    
    \vspace{0.5em}
    \begin{subtable}[t]{\columnwidth}
    \centering
    \caption{UCBHVI.}
    \vspace{-0.5em}
    \resizebox{0.8\columnwidth}{!}{
    \begin{tabular}{ccccc}
    \toprule
Method & UCBHVI Value ($\uparrow$) & Runtime (mins) ($\downarrow$)\\
    \cmidrule(lr){1-3}
Ours	&\textbf{1.019 (0.032)}&	4.1\\
Ours-half	&1.005 (0.034)&	\textbf{2.1}\\
LaMBO	& 0.776 (0.022)	&16.7\\
MBGA   & 0.844 (0.054)	&9.4\\
\bottomrule
    \end{tabular}}
    \end{subtable}
    
    \vspace{0.5em}
    \begin{subtable}[t]{\columnwidth}
    \centering
    \caption{PES.}
    \vspace{-0.5em}
    \resizebox{0.8\columnwidth}{!}{
    \begin{tabular}{ccccc}
    \toprule
Method & PES Value ($\uparrow$) & Runtime (mins) ($\downarrow$)\\
    \cmidrule(lr){1-3}
Ours	&\textbf{1.233 (0.040)}&	\textbf{12.8}\\
LaMBO w/ FD	& 0.070 (0.015)	&18.0\\
MBGA   & 0.207 (0.107)	&29.8\\
\bottomrule
    \end{tabular}}
    \end{subtable}
    \label{tab:statucbhvi}
\end{table}

\textbf{Additional subset selection results in the first round.}
To demonstrate the scalability and broad applicability of our method, we additionally evaluate the first-round subset selection performance, including runtime, across various batch acquisition functions. 
\Cref{tab:statucbhvi} presents the acquisition values and runtime obtained by optimizing NEHVI, UCBHVI, and PES in the RFP task. 
For PES, we use the implementation in the \emph{BoTorch} framwork\footnote{\href{https://botorch.org/tutorials/information_theoretic_acquisition_functions}{https://botorch.org/tutorials/information\_theoretic\_acquisition\\\_functions}.} 
and we modify LaMBO to use a gradient approximated by finite differences (FD) due to the non-differentiability of PES computation \citep{botorch}. 
Our method (`Ours-half' for NEHVI and `Ours' for UCBHVI and PES) achieved higher batch acquisition values in less runtime compared to baseline methods, demonstrating the effectiveness and scalability of our approach when optimizing various batch acquisition functions with statistical surrogate models.

\subsection{Diversified subset selection results}
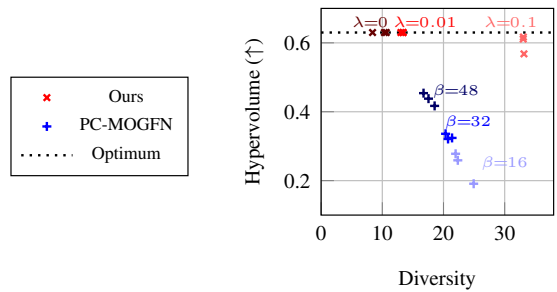
\begin{figure}[t]
\centering
\begin{subfigure}{0.9\columnwidth}
\resizebox{\textwidth}{!}{
\centering
		\begin{tikzpicture}
		\begin{axis}[
		width=4.3cm,
		height=4cm,
		every axis plot/.append style={thick},
		grid=major,
		scaled ticks = false,
		ylabel near ticks,
		tick pos=left,
		tick label style={font=\scriptsize},
		xtick={0, 10, 20, 30},
		xticklabels={0, 10, 20, 30},
		ytick={0, 0.2, 0.4, 0.6},
		yticklabels={0, 0.2 ,0.4, 0.6},
		label style={font=\scriptsize},
		xlabel={Diversity},
		xlabel style={at={(0.5,0.0)}},
		ylabel={Hypervolume ($\uparrow$)},
		ylabel style={align=center, at={(-0.2,0.5)}},
		xmin=0,
		xmax=38,
		ymin=0.1,
		ymax=0.70,
		legend style={legend columns=1, at={(-0.57, 0.67)}, font=\tiny},
        ]
        
            \addplot[red, mark=x, mark size=1.5pt, only marks] coordinates {
                (13.413,0.630)
                (13.034,0.630)
                (13.374,0.630)
            };
		\addlegendentry{Ours}
            \addplot[blue, mark=+, mark size=1.5pt, only marks] coordinates {
                (21.4,0.324)
                (20.744,0.321)
                (20.341,0.336)
            };
            \node at (axis cs: 24, 0.37) {\color{blue}\fontsize{3pt}{2pt}\selectfont $\beta\!\!=\!\!32$};
		\addlegendentry{PC-MOGFN}
            \addplot[black, dotted] coordinates {(0,0.630) (50,0.630)};
		\addlegendentry{Optimum}
            \addplot[red!40!black!100, mark=x, mark size=1.5pt, only marks] coordinates {
                (8.406,0.630)
                (10.647,0.630)
                (10.344,0.630)
            };
            \node at (axis cs: 8, 0.66) {\color{red!40!black!100}\fontsize{3pt}{2pt}\selectfont $\lambda\!\!=\!\!0$};
            \addplot[red!60, mark=x, mark size=1.5pt, only marks] coordinates {
                (33.149,0.568)
                (33.028,0.611)
                (33.054,0.617)
            };
            \node at (axis cs: 17, 0.66) {\color{red}\fontsize{3pt}{2pt}\selectfont $\lambda\!\!=\!\!0.01$};
            \node at (axis cs: 31, 0.66) {\color{red!60}\fontsize{3pt}{2pt}\selectfont $\lambda\!\!=\!\!0.1$};
            \addplot[blue!40, mark=+, mark size=1.5pt, only marks] coordinates {
                (22.34,0.259)
                (24.931,0.191)
                (21.98,0.278)
            };
            \node at (axis cs: 30, 0.25) {\color{blue!40}\fontsize{3pt}{2pt}\selectfont $\beta\!\!=\!\!16$};

            \addplot[blue!40!black!100, mark=+, mark size=1.5pt, only marks] coordinates {
                (18.549,0.417)
                (16.737,0.454)
                (17.549,0.438)
            };
            \node at (axis cs: 22, 0.46) {\color{blue!40!black!100}\fontsize{3pt}{2pt}\selectfont $\beta\!\!=\!\!48$};
		\end{axis}
		\end{tikzpicture}
}
 \vspace{-1.5em}
\end{subfigure}
 \caption{Diversified subset selection results on 2 bigrams task traversing tradeoff parameters. For each tradeoff parameter, $\beta$ for PC-MOGFN, and $\lambda$ for Ours, we plot 3 points for 3 different runs. }
	\label{fig:div}
\end{figure}

\Cref{fig:div} illustrates the results of diversified subset selection for 
2 bigrams tasks, comparing our method with PC-MOGFN. 
The results show that our method succeed to generate diverse candidates while keeping ability to generate near optimal solutions in the 2 bigrams task.
\Cref{tab:div} provides a summary of the diversified subset selection results in the first round on the RFP task in comparison with AL-MOGFN. The findings indicate that our method is capable of constructing subsets that are more diverse and have higher NEHVI values than those generated by the baseline method. However, unlike HVI-based batch acquisition functions, the features used for set conditioning in our method, $\mathrm{feat}(\xvec)=\tilde{f}(\xvec)$, might not offer enough information to steer the policy towards generating a variety of candidates. The development of techniques for extracting features that enhance diversity remains an area for future research in our study.

\begin{table}[t]
    \centering
    \caption{Diversified subset selection results in the first round of the RFP task with NEHVI. The mean and standard deviation values are calculated for 10 trials.}
    \resizebox{\columnwidth}{!}{
    \begin{tabular}{ccccc}
    \toprule
Method & NEHVI Value ($\uparrow$) & Diversity ($\uparrow$)\\
\midrule
Ours w/o MLM ($\lambda=0.0$) & 0.779 (0.045) & 78.352 (6.194)\\
\midrule
Ours w/o MLM ($\lambda=1.0$) & \textbf{0.731} (0.033) & \textbf{95.917} (0.35)\\
AL-MOGFN ($\beta=16$)  &  0.608 (0.074) & 93.253 (0.346)\\
AL-MOGFN ($\beta=24$)  &  0.613 (0.061) & 93.240 (0.200)\\
\bottomrule
    \end{tabular}}
    \label{tab:div}
\end{table}

\section{Addressing Previous Issues in LaMBO}
\label{sec:issues}
\begin{figure}[t]
\centering
\begin{subfigure}{0.45\textwidth}
	\centering
		\begin{tikzpicture}
		\begin{axis}[
		width=5.5cm,
		height=5.2cm,
		every axis plot/.append style={thick},
		grid=major,
		scaled ticks = false,
		ylabel near ticks,
		tick pos=left,
		tick label style={font=\small},
		xtick={0, 250, 500, 750, 1000},
		xticklabels={0, 250, 500, 750, 1000},
		ytick={0, 0.5, 1.0, 1.5, 2.0, 2.5, 3.0},
		yticklabels={0, 0.5, 1.0, 1.5, 2.0, 2.5, 3.0},
		label style={font=\small},
		xlabel={Number of queries},
		xlabel style={at={(0.5,0.0)}},
		ylabel={Relative Hypervolume ($\uparrow$)},
		ylabel style={align=center, at={(-0.13,0.5)}},
		xmin=0,
		xmax=1024,
		ymin=0.8,
		ymax=2.6,
            legend cell align={left},
		legend style={legend columns=1, at={(0.92, 1.65)}, font=\tiny},
        ]
            \addplot[green!80!black!100, mark size=1.5pt, no markers] table [x=x_vals, y=y_med, col sep=comma]{csvs/github_proxy_rfp/LaMBO_Long.csv};
		\addlegendentry{LaMBO (Corrected, Long)}
		\addplot[green!80!black!100, dashed, mark size=1.5pt, no markers] table [x=x_vals, y=y_med, col sep=comma]{csvs/github_proxy_rfp/LaMBO.csv};
		\addlegendentry{LaMBO (Corrected)}
  		\addplot[green!80!black!100, dotted, mark size=1.5pt, no markers] table [x=x_vals, y=y_med, col sep=comma]{csvs/github_proxy_rfp/LaMBO_Github.csv};
		\addlegendentry{LaMBO (Original)}
  		\addplot[red, mark size=1.5pt, no markers] table [x=x_vals, y=y_med, col sep=comma]{csvs/github_proxy_rfp/MBGA_Long.csv};
		\addlegendentry{MBGA (Corrected, Long)}
		\addplot[red, dashed, mark size=1.5pt, no markers] table [x=x_vals, y=y_med, col sep=comma]{csvs/github_proxy_rfp/MBGA.csv};
		\addlegendentry{MBGA (Corrected)}
		\addplot[red, dotted, mark size=1.5pt, no markers] table [x=x_vals, y=y_med, col sep=comma]{csvs/github_proxy_rfp/MBGA_Github.csv};
		\addlegendentry{MBGA (Original)}
		\end{axis}
		\end{tikzpicture}
 \subcaption{RFP}
\end{subfigure}
\begin{subfigure}{0.45\textwidth}
	\centering
		\begin{tikzpicture}
		\begin{axis}[
		width=5.5cm,
		height=5.2cm,
		every axis plot/.append style={thick},
		grid=major,
		scaled ticks = false,
		ylabel near ticks,
		tick pos=left,
		tick label style={font=\small},
		xtick={0, 250, 500, 750, 1000},
		xticklabels={0, 250, 500, 750, 1000},
		ytick={50, 100},
		yticklabels={50, 100},
		label style={font=\small},
		xlabel={Number of queries},
		xlabel style={at={(0.5,0.0)}},
		ylabel={Relative Hypervolume ($\uparrow$)},
		ylabel style={align=center, at={(-0.13,0.5)}},
		xmin=0,
		xmax=1024,
		ymin=0.8,
		ymax=100.0,
		legend style={legend columns=1, at={(1.57, 0.67)}, font=\tiny},
        ]
            \addplot[green!80!black!100, mark size=1.5pt, no markers] table [x=x_vals, y=y_med, col sep=comma]{csvs/github_regex/LaMBO_Long.csv};
		\addplot[green!80!black!100, dashed, mark size=1.5pt, no markers] table [x=x_vals, y=y_med, col sep=comma]{csvs/github_regex/LaMBO.csv};
  		\addplot[green!80!black!100, dotted, mark size=1.5pt, no markers] table [x=x_vals, y=y_med, col sep=comma]{csvs/github_regex/LaMBO_Github.csv};
  		\addplot[red, mark size=1.5pt, no markers] table [x=x_vals, y=y_med, col sep=comma]{csvs/github_regex/MBGA_Long.csv};
		\addplot[red, dashed, mark size=1.5pt, no markers] table [x=x_vals, y=y_med, col sep=comma]{csvs/github_regex/MBGA.csv};
		\addplot[red, dotted, mark size=1.5pt, no markers] table [x=x_vals, y=y_med, col sep=comma]{csvs/github_regex/MBGA_Github.csv};
		\end{axis}
		\end{tikzpicture}
  \subcaption{3 Bigrams}
\end{subfigure}
 \caption{Multi-round active learning results on the RFP task and 3 bigrams task, comparing performance before and after the implementation corrections. For clarity, only the median performance from 10 trials is depicted. The corrections resulted in significant performance enhancements in these tasks.}
	\label{fig:raise}
\end{figure}
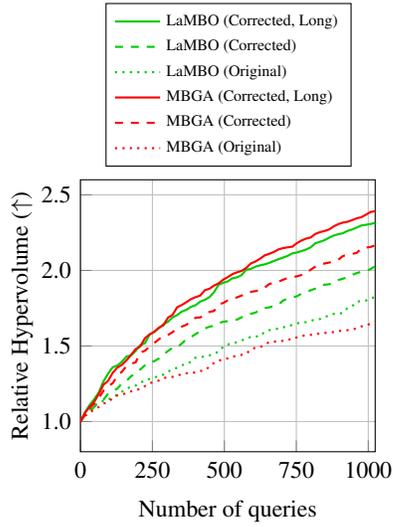
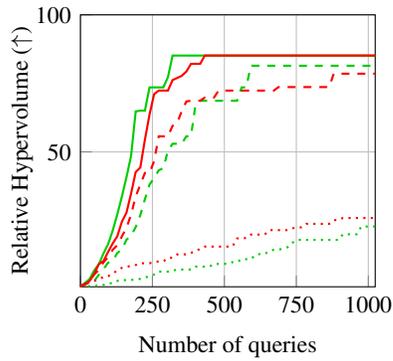

The work by \citet{lambo} has made significant contributions to establishing benchmarks for biological sequence design. Nonetheless, certain challenges were identified in the LaMBO implementation that impacted its performance. Firstly, the original implementation wrongly calculated the NEHVI batch acquisition values for batches containing more than one element. Secondly, an error in the mutation operation used in the GA methods was discovered, adversely affecting performance across several tasks.

In our study, we rectify these issues and conduct a performance comparison between the original version, our corrected version, and an enhanced version with a larger sample size, which we use as the baselines in our paper for a fair comparison. \Cref{fig:raise} presents the benchmark results for the RFP task and 3 bigrams task. Notably, correcting these issues led to a substantial improvement in performance on the 3 bigrams task, achieving more than 3 times larger relative Hypervolume compared to the original implementation. Additionally, it was observed that increasing the number of samples during the inner loop contributed to improved performance for these tasks.

\end{document}